\theoremstyle{plain}
\newtheorem{theorem}{Theorem}
\newtheorem{proposition}{Proposition}
\newtheorem{lemma}{Lemma}
\theoremstyle{definition}
\newtheorem{definition}{Definition}
\newtheorem{assumption}{Assumption}
\theoremstyle{remark}
\newtheorem{remark}{Remark}
\DeclareMathOperator{\argmax}{arg\,max}
\DeclareMathOperator{\argmin}{arg\,min}
\DeclareMathOperator{\Lip}{Lip}
\DeclareMathOperator{\subjectto}{s.t.}
\title{FDR-SVM: A Federated Distributionally Robust Support Vector Machine via a Mixture of Wasserstein Balls Ambiguity Set}
\author[1]{\href{mailto:mibrahim41@gatech.edu?Subject=Your UAI 2025 FDR-SVM paper}{Michael Ibrahim}{}}
\author[2]{Heraldo Rozas}
\author[1]{Nagi Gebraeel}
\author[1]{Weijun Xie}
\affil[1]{
    H. Milton Stewart School of Industrial and Systems Engineering\\
    Georgia Institute of Technology\\
    Atlanta, Georgia, USA
}
\affil[2]{
    Department of Electrical Engineering\\
    University of Chile\\
    Santiago, Chile
}
\begin{document}

\maketitle

\begin{abstract}
We study a federated classification problem over a network of multiple clients and a central server, in which each client’s local data remains private and is subject to uncertainty in both the features and labels. To address these uncertainties, we develop a novel Federated Distributionally Robust Support Vector Machine (FDR-SVM), robustifying the classification boundary against perturbations in local data distributions. Specifically, the data at each client is governed by a unique true distribution that is unknown. To handle this heterogeneity, we develop a novel Mixture of Wasserstein Balls (MoWB) ambiguity set, naturally extending the classical Wasserstein ball to the federated setting. We then establish theoretical guarantees for our proposed MoWB, deriving an out-of-sample performance bound and showing that its design preserves the separability of the FDR-SVM optimization problem. Next, we rigorously derive two algorithms that solve the FDR-SVM problem and analyze their convergence behavior as well as their worst-case time complexity. We evaluate our algorithms on industrial data and various UCI datasets, whereby we demonstrate that they frequently outperform existing state-of-the-art approaches.
\end{abstract}

\addtocontents{toc}{\protect\setcounter{tocdepth}{0}}

\section{Introduction}\label{sec:intro}
Original equipment manufacturers (OEMs) of industrial equipment (used in manufacturing, energy, and healthcare) sell their products to multiple customers under lucrative service contracts that demand stringent reliability standards—an especially critical requirement for systems such as aircraft engines or turbines in power plants. In order to meet these standards, OEMs must provide accurate diagnostics of impending  faults, including their types and severities \citep{dutta2023,yang2025,lei2020}. Most customers are unwilling to share their operational data due to confidentiality and security concerns, particularly in industries critical to national security (e.g., nuclear power plants). Consequently, OEMs face the challenge of leveraging dispersed data sources to develop analytic models capable of improving fault detection and classification.

Distributed fault diagnosis \citep{du2024} via federated learning (FL) \citep{mcmahan2017} offers a promising solution to this problem. FL enables the training of a global model on data that remains at each client, thereby preserving privacy while allowing OEMs to draw on broader information sources for more robust fault diagnosis. Despite these advantages, industrial data can be extremely noisy—due to harsh operating conditions and sensor limitations—and often suffers from labeling errors caused by variations in operator expertise. As a result, industrial datasets tend to be among the \textit{most uncertain and poorly labeled}. 

Uncertainty in a binary classification problem can be modeled by considering the features $\boldsymbol{x} \in \mathcal{X} \subseteq \mathbb{R}^P$ and labels $y \in \{ -1, +1 \}$ as random variables governed by an underlying distribution $\mathbb{P}$ \citep{shafieezadehabadeh2015distributionally}. An ideal classifier with parameters $\boldsymbol{w} \in \mathbb{R}^P$ and a loss function $\ell(\boldsymbol{w};(\boldsymbol{x},y))$ minimizes the expected risk $\mathbb{E}^{\mathbb{P}}[\ell(\boldsymbol{w};(\boldsymbol{x},y))]$. Since $\mathbb{P}$ is typically unknown, it is common practice to rely on an empirical distribution $\widehat{\mathbb{P}}_N$ derived from $N$ IID samples and then minimize the empirical risk $\mathbb{E}^{\widehat{\mathbb{P}}_N}[\ell(\boldsymbol{w};(\boldsymbol{x},y))]$ . However, in cases where the training data is noisy or limited, the resulting model can be highly suboptimal, leading to poor out-of-sample performance \citep{kuhn2019wasserstein,shafieezadehabadeh2015distributionally}.

Distributionally robust optimization (DRO) \citep{scarf1957min,delage2010,bayraksan2015,shapiro2017,datadrivenDRO,2019regularization,kuhn2019wasserstein} addresses these challenges by specifying an ambiguity set $\mathcal{A}$ of plausible data distributions. The model is trained by minimizing the worst-case risk 
$\sup_{\mathbb{Q \in \mathcal{A}}}\mathbb{E}^{\mathbb{Q}}[\ell(\boldsymbol{w};(\boldsymbol{x},y))]$ attained by any distribution $\mathbb{Q} \in \mathcal{A}$. A particularly popular approach is to define $\mathcal{A}$ as a Wasserstein ball around $\widehat{\mathbb{P}}_N$ \citep{kuhn2019wasserstein}. Wasserstein-based DRO (WDRO) has attracted growing attention in machine learning \citep{2019regularization,nieter2023,rui2023}. However, most WDRO research remains limited to centralized settings, and extending it to federated environments introduces significant challenges and computational complexities \citep{cher2020}.

\textbf{Contributions.} We develop a \textit{federated distributionally robust support vector machine} (FDR-SVM) that can be trained to global optimality on data distributed across $G$ clients. Using DRO allows our model to be robust to uncertainties in both features and labels. More importantly, our model does not rely on restrictive assumptions, such as Lipschitz smoothness or strong convexity, which are often imposed by existing FL approaches. Although differential privacy is vital in FL, our work focuses on robustness to distributional uncertainties. To the best of our knowledge, this is the first effort utilizing WDRO to robustify a FL model under such general conditions. The main contributions of this paper are:
\begin{enumerate}
    \item We propose a \textbf{M}ixture \textbf{o}f \textbf{W}asserstein \textbf{B}alls (MoWB) ambiguity set that generalizes the Wasserstein ball to the distributed setting. This lays the foundation for robustifying a variety of FL models under the DRO paradigm. We then prove that the true data distribution belongs to MoWB with a certain confidence level under a mild assumption, and we use it to derive our separable FDR-SVM formulation.

    \item We propose a subgradient method-based (SM) algorithm for training our FDR-SVM, where we rigorously derive the subgradient of the \textit{infinite-dimensional} worst-case risk problem at each client assuming the compactness of the feature support set. We then prove the convergence of this algorithm to global optimality, and derive its worst-case time complexity.

    \item We also propose an alternating direction method of multipliers-based (ADMM) algorithm for training our FDR-SVM, where we derive a convex, tractable optimization problem and a closed-form for local and global model updates, respectively. We show that this algorithm is only guaranteed convergence under the addition of a strongly convex term to each client's objective. While this may affect final model performance, convergence is achieved in fewer rounds than SM and without the need for feature support assumptions.

    \item We evaluate our proposed methods on an industrial dataset and various popular UCI repository datasets, where we study their hyperparameter sensitivity and demonstrate that the FDR-SVM typically outperforms state-of-the-art (SOTA) baselines.
\end{enumerate}
\section{Background and Prior Work}\label{sec:rel_works}
\textbf{Distributionally Robust Optimization.} DRO has gained popularity recently due to its applications in various areas of optimization and ML \citep{kuhn2019wasserstein}. The general 1-WDRO problem is mathematically formulated as
\begin{equation} \label{eq:dro_problem}
    \inf_{\boldsymbol{w} \in \mathcal{W}} \sup_{\mathbb{Q} \in \mathcal{A}_{\varepsilon,1,d}(\Xi)} \mathbb{E}^{\mathbb{Q}}[\ell],
\end{equation}
where $\ell$ is the loss function parameterized by $\boldsymbol{w} \in \mathcal{W}$, and $\mathbb{Q}$ is any distribution within ambiguity set $\mathcal{A}_{\varepsilon,1,d}(\Xi)$, which is defined as
\begin{equation}\label{eq:trad_amb_set}
    \mathcal{A}_{\varepsilon,1,d}(\Xi) \coloneqq \left \{ \mathbb{Q} \in \mathcal{P} \left ( \Xi \right ) \colon W_{d,1} \left ( \mathbb{Q},\widehat{\mathbb{P}}_N\right ) \leq \varepsilon \right \},
\end{equation}
where $\mathcal{P}(\Xi)$ is the set of all distributions supported on $\Xi$, and $W_{d,1}(\cdot,\cdot)$ is the type-$1$ Wasserstein distance equipped with transportation cost function $d(\boldsymbol{\xi},\boldsymbol{\xi}^{\prime})$. It has been shown in various works \citep{datadrivenDRO,kuhn2019wasserstein,shapiro2017,rui2023} that the DRO problem \eqref{eq:dro_problem} admits tractable, convex reformulations in many cases of practical interest. Moreover, it was also demonstrated by \cite{kuhn2019wasserstein} that the Wasserstein ambiguity set enjoys various attractive properties, such as its ability to assign point mass anywhere in the support set, and its interpretation as a confidence interval for $\mathbb{P}$. We provide further background on Wasserstein DRO in Appendix \ref{app:bg}.

Many efforts have successfully utilized DRO to robustify various classifiers. For example, \cite{2019regularization} develop Wasserstein DR logistic regression (LR) and SVM. Further, \cite{selvi2022} extend the DR LR model to data with mixed features. \cite{FACCINI2022} utilize a moment-based ambiguity set to derive a DR version of the SVM. Finally, \cite{sagawa2020} utilize group DRO to mitigate the tendency of classification deep neural networks (DNNs) to learn spurious correlations, relying on manual training data grouping. This is advanced by \cite{wu2023}, who utilize a DNN to perform the data grouping. All these efforts implicitly assume the availability of the training data at a central location, making them difficult to extend to FL settings \citep{cher2020}.

\textbf{Federated Learning.} Since its introduction by \cite{konecny2016e,mcmahan2017}, FL has garnered much attention due to its practical utility. The \texttt{FedAvg} algorithm introduced by \cite{mcmahan2017} relies on local stochastic gradient descent (SGD) updates by clients, and subsequent aggregation and rebroadcasting of model by the server. The work also introduces \texttt{FedSGD}, where each client only performs one local update step. \texttt{FedProx} \citep{li2020} adds a proximal term to the objective function to mitigate client heterogeneity issues. \cite{wang2020b} develop \texttt{FedNova}, where client updates are normalized to address data heterogeneity without impacting convergence. Alternatively, \cite{karimireddy20a} propose \texttt{SCAFFOLD}, where client drift is addressed with the introduction of control variables. Personalized FL algorithms include \texttt{FedPer} \citep{arivazhagan2019} which introduces a local personalization layer at each client, \texttt{FedEM} \citep{marfoq2021} which models local data distributions as a mixture of unknown distributions, \texttt{FedPer++} \citep{xu2022} which utilizes regularization to prevent local overfitting, and \texttt{FedL2P} \citep{lee2023} which uses meta-learning to learn a personalization strategy for each client.

\textbf{Distributionally Robust Federated Learning.} Recently, many efforts have combined ideas from DRO and FL. For example, \cite{deng2020} develop a DR version of \texttt{FedAvg}, hedging against uncertainty in client weights.  \cite{wu2022} propose mixup techniques in the local training stages, addressing noisy and heterogeneous client data. Further, \cite{zecchin2023} develop an efficient algorithm for a DR \texttt{FedAvg} algorithm with no central server. Alternatively, \cite{Huang2021CompositionalFL} combine FL with stochastic compositional optimization (CO), transforming the DR \texttt{FedAvg} algorithm into a CO problem. A \texttt{FedDRO} algorithm is proposed by \cite{Khanduri2023} as an extension of \texttt{FedAvg} for CO problems. \cite{lau2022} construct a Wasserstein ambiguity set from distributed data using barycenters, which may not exist and can be difficult to compute in a distributed fashion if they do. Moreover, \cite{cher2020} and \cite{le2024} propose distributed WDRO formulations. However, the earlier relies on peer-to-peer communication, while the latter assumes the Lipschitz smoothness and strong convexity of the loss function.
\section{Problem Setting} \label{sec:prelim}
We consider the problem of classifying data of the form $\boldsymbol{\xi} = (\boldsymbol{x},y)$ distributed over $G$ clients, where $\boldsymbol{x} \in \mathcal{X} \subseteq \mathbb{R}^P$ is the feature vector and $y \in \{-1,+1\}$ is the label. With such data, a commonly-used transportation cost function is
\begin{equation}\label{eq:cost_func}
d(\boldsymbol{\xi},\boldsymbol{\xi}^{\prime}) \coloneqq ||\boldsymbol{x} - \boldsymbol{x}^{\prime}|| + \kappa \mathbbm{1}_{\{y \neq y^{\prime}\}},
\end{equation}
where $||\cdot||$ is any common norm on $\mathbb{R}^P$, and $\kappa$ is a hyperparameter corresponding to label flipping cost.

We consider classification via the binary SVM characterized by the hinge loss function $\ell_H(\boldsymbol{w};\boldsymbol{\xi})$, which is parameterized by $\boldsymbol{w} \in \mathbb{R}^P$ and defined as $\ell_H(\boldsymbol{w};\boldsymbol{\xi}) = \max\{0,1-y \cdot \boldsymbol{w}^{\mathsf{T}}\boldsymbol{x}\}$. We choose the SVM classifier as it is a well-established model that is commonly used in fault classification settings \citep{dutta2023,josey2018}. Moreover, its simple formulation allows for the rigorous derivation of a DR version.

We study the FL setting where clients can only communicate with the central server but not with each other. Clients do not share their data with the central server, but they can transmit insights from locally trained models, such as local (sub)gradients or model parameters. In this context, we assume the existence of a local training set $\mathcal{S}_g = \{\widehat{\boldsymbol{\xi}}_{n_g}\}_{n_g=1}^{N_g} = \{(\widehat{\boldsymbol{x}}_{n_g},\widehat{y}_{n_g})\}_{n_g=1}^{N_g}$ at each client $g$. We denote the empirical distribution of the $N_g$ IID local training samples and their true distribution as $\widehat{\mathbb{P}}_{N_g}$ and $\mathbb{P}_g$, respectively. Finally, we denote the total number of training samples available at all clients as $N = \sum_g^G N_g$.
\section{MoWB Ambiguity Set}\label{sec:dist_model}

\subsection{Problem Separability} 
In this section, we extend the classical Wasserstein amiguity set to the distributed setting via the novel MoWB ambiguity set $\mathcal{A}_G$ defined next.

\begin{definition}[MoWB ambiguity set]
The Mixture of Wasserstein Balls (MoWB) ambiguity set contains mixture distributions whose constituents are distributions from local Wasserstein balls defined at each client, and is expressed as
\begin{multline} \label{eq:global_amb_set}
    \mathcal{A}_G \coloneqq \bigg \{ \mathbb{Q} \colon \mathbb{Q} = \sum_{g=1}^{G} \alpha_g \mathbb{Q}_g, \ \alpha_g \geq 0, \ \sum_{g=1}^{G} \alpha_g = 1,\\ \ \mathbb{Q}_g \in \mathcal{A}^{(g)}_{\varepsilon_g,1,d}(\Xi)\bigg \},
\end{multline}
where $\alpha_g$ is client $g$'s weight, and $\mathcal{A}^{(g)}_{\varepsilon_g,1,d}(\Xi)$ is the type-$1$ Wasserstein ball of radius $\varepsilon_g$ supported on $\Xi$, centered at $\widehat{\mathbb{P}}_{N_g}$, and defined via cost function $d(\boldsymbol{\xi},\boldsymbol{\xi}^{\prime})$ shown in \eqref{eq:cost_func}.
\end{definition}

\begin{remark} \label{remark:amb_set}
    Observe that when $G=N$, our ambiguity set models worst-case perturbations in individual training samples in a fashion similar to robust optimization (RO). Alternatively, when $G=1$, our ambiguity set reduces to the classical Wasserstein ball $\mathcal{A}_{\varepsilon,1,d}(\Xi)$ defined in \eqref{eq:trad_amb_set}. This suggests that our proposed ambiguity set offers more flexibility in modeling the uncertainty than the classical Wasserstein ambiguity set, which can allow it to achieve improved performance in some settings. This also suggests that our proposed ambiguity set naturally extends the classical Wasserstein ball to the FL setting. Indeed, we show in Proposition \ref{lemma:general_form} that when equipped with the MoWB ambiguity set, the DRO problem enjoys a naturally distributed formulation.
\end{remark}
\begin{proposition}[Problem Separability] \label{lemma:general_form}
    The original DRO problem in \eqref{eq:dro_problem} equipped with the MoWB ambiguity set defined in \eqref{eq:global_amb_set} admits the following reformulation:
    \begin{multline} \label{eq:general_form_eq}
        \inf_{\boldsymbol{w}} \sup_{\mathbb{Q} \in \mathcal{A}_G} \mathbb{E}^{\mathbb{Q}}[\ell_H(\boldsymbol{w};\boldsymbol{\xi})] \\= \inf_{\boldsymbol{w}} \sum_{g=1}^{G} \alpha_g \sup_{\mathbb{Q}_g \in \mathcal{A}^{(g)}_{\varepsilon_g,1,d}(\Xi)}\mathbb{E}^{\mathbb{Q}_g}[\ell_H(\boldsymbol{w};\boldsymbol{\xi})].
    \end{multline}
\end{proposition}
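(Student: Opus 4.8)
The plan is to establish the identity by arguing entirely at the level of the inner worst-case risk, since the outer infimum over $\boldsymbol{w}$ is identical on both sides and commutes with the manipulation. I would fix $\boldsymbol{w}$ together with the mixture weights $\alpha_g$ and observe that, by definition, every feasible $\mathbb{Q} \in \mathcal{A}_G$ is a convex combination $\mathbb{Q} = \sum_{g=1}^{G} \alpha_g \mathbb{Q}_g$ whose constituents $\mathbb{Q}_g$ are drawn independently from the local balls $\mathcal{A}^{(g)}_{\varepsilon_g,1,d}(\Xi)$. Hence the feasible region is the image of the product set $\prod_{g=1}^{G} \mathcal{A}^{(g)}_{\varepsilon_g,1,d}(\Xi)$ under the linear mixing map, and the goal is to push the supremum through this product structure.

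First, I would use linearity of expectation to write $\mathbb{E}^{\mathbb{Q}}[\ell_H(\boldsymbol{w};\boldsymbol{\xi})] = \sum_{g=1}^{G} \alpha_g \mathbb{E}^{\mathbb{Q}_g}[\ell_H(\boldsymbol{w};\boldsymbol{\xi})]$ for any admissible decomposition of $\mathbb{Q}$; because this holds for every such decomposition, the objective is a well-defined function of the constituent tuple $(\mathbb{Q}_1,\ldots,\mathbb{Q}_G)$ irrespective of the non-uniqueness of the representation. This lets me replace $\sup_{\mathbb{Q} \in \mathcal{A}_G} \mathbb{E}^{\mathbb{Q}}[\ell_H]$ by $\sup_{(\mathbb{Q}_g) \in \prod_g \mathcal{A}^{(g)}_{\varepsilon_g,1,d}(\Xi)} \sum_g \alpha_g \mathbb{E}^{\mathbb{Q}_g}[\ell_H]$: the inequality $\le$ holds because every mixture admits at least one feasible preimage tuple achieving the same value, and $\ge$ holds because every feasible tuple assembles into an admissible mixture with the same value. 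Second, since the objective is now a finite sum whose $g$-th summand depends only on $\mathbb{Q}_g$ while the feasible set is a Cartesian product, the supremum decouples coordinatewise into $\sum_g \sup_{\mathbb{Q}_g \in \mathcal{A}^{(g)}_{\varepsilon_g,1,d}(\Xi)} \alpha_g \mathbb{E}^{\mathbb{Q}_g}[\ell_H]$. Finally, using $\alpha_g \ge 0$ I would factor each $\alpha_g$ out of its supremum and reinstate $\inf_{\boldsymbol{w}}$ on both sides to obtain \eqref{eq:general_form_eq}.

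The main obstacle is making the decoupling of the supremum rigorous rather than merely asserting it. The step $\sup_{(\mathbb{Q}_g)} \sum_g f_g(\mathbb{Q}_g) = \sum_g \sup_{\mathbb{Q}_g} f_g(\mathbb{Q}_g)$ is standard for separable objectives over a product domain, but it genuinely relies on the local balls being specified independently of one another, which holds here because each $\mathcal{A}^{(g)}_{\varepsilon_g,1,d}(\Xi)$ is centered at its own $\widehat{\mathbb{P}}_{N_g}$ with its own radius $\varepsilon_g$ and carries no cross-client coupling. I would prove it by the usual two-sided argument: the $\ge$ direction follows by selecting near-optimal $\mathbb{Q}_g$ in each coordinate and assembling them into one feasible tuple (admissible precisely because the constraints do not interact), while the $\le$ direction is immediate by bounding each summand by its own supremum. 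The nonnegativity of the weights is essential in the final factoring step, as it preserves the direction of the optimization; were any $\alpha_g$ negative, the corresponding term would become an infimum and the separation would break down.
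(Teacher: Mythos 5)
Your proposal is correct and follows essentially the same route as the paper's proof: rewrite the supremum over $\mathcal{A}_G$ as a supremum over the product of local balls via the definition of the mixture, apply linearity of expectation, and decouple the separable supremum coordinatewise before factoring out the nonnegative weights $\alpha_g$. Your version merely spells out details the paper leaves implicit (the two-sided argument for decoupling and the well-definedness under non-unique mixture representations), which is a welcome but not substantively different refinement.
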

\begin{proof}
    Proof is provided in Appendix \ref{proof:general_form}.
\end{proof}

\subsection{Out-of-Sample Performance Guarantees} 
Since the MoWB ambiguity set $\mathcal{A}_G$ relies on local Wasserstein balls $\mathcal{A}^{(g)}_{\varepsilon_g,1,d}(\Xi)$, it inherits desirable out-of-sample performance guarantees shown by \cite{kuhn2019wasserstein}. Indeed, we show in Proposition \ref{prop:oos} that the true distribution $\mathbb{P} = \sum_{g=1}^G \alpha_g \mathbb{P}_g$ is contained within the MoWB ambiguity set with a certain confidence level, thereby allowing for the reduction of the \textit{true risk} without knowing $\mathbb{P}$. This relies on Assumption \ref{assump:light_tail}, which allows for tighter concentration inequalities for $\mathbb{P}_g$, ensuring that they can indeed be modeled as a perturbation of the empirical distributions $\widehat{\mathbb{P}}_{N_g}$.

\begin{assumption}[Light-tailed Distribution]\label{assump:light_tail}
    The true distribution $\mathbb{P}_g$ of the data at client $g$ is light-tailed. That is, there exists $a > 1$ with $A_g \coloneqq \mathbb{E}^{\mathbb{P}_g}[\exp(||2\boldsymbol{x}||^{a_g})] < +\infty$.
\end{assumption}

\begin{proposition}[Out-of-Sample Performance]\label{prop:oos}
    Suppose Assumption \ref{assump:light_tail} holds and the local Wasserstein ball radius $\varepsilon_g$ at client $g$ is set as \citep{kuhn2019wasserstein}
    \begin{multline*}
    \allowdisplaybreaks
        \varepsilon_{N_g}(\eta_g) = \left ( \frac{\log(c_{1_g} \eta_g^{-1})}{c_{2_g}N_g} \right )^{\frac{1}{a_g}} \mathbbm{1}_{ \left \{N_g < \frac{\log(c_{1_g} \eta_g^{-1)}}{c_{2_g} c_{3_g}} \right \}} \\+ \left ( \frac{\log(c_{1_g} \eta_g^{-1})}{c_{2_g}N_g} \right )^{\frac{1}{P}} \mathbbm{1}_{ \left \{N_g \geq \frac{\log(c_{1_g} \eta_g^{-1)}}{c_{2_g} c_{3_g}} \right \}},
    \end{multline*}
    where $c_{1_g}, c_{2_g}, c_{3_g} \in \mathbb{R}_+$ are constants that depend on $a_g$, $A_g$, $P$ (dimension of the feature space), and the transportation cost given by \eqref{eq:cost_func}. Then the MoWB ambiguity set $\mathcal{A}_G$ defined in \eqref{eq:global_amb_set} enjoys the following property
    \begin{equation*}
    \mathbb{P}^N \{ \mathbb{P} \in \mathcal{A}_G \} \geq \prod_{g=1}^G (1 - \eta_g),
    \end{equation*}
where $\eta_g$ is such that $\mathbb{P}^{N_g} \{ \mathbb{P}_g \in \mathcal{A}^{(g)}_{\varepsilon_g,1,d}(\Xi) \} \geq (1 - \eta_g)$.
\end{proposition}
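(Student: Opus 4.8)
The plan is to reduce the statement about the mixture distribution $\mathbb{P}=\sum_{g=1}^G\alpha_g\mathbb{P}_g$ to $G$ independent single-client statements, each of which is exactly the classical Wasserstein measure-concentration guarantee of \cite{kuhn2019wasserstein}. The central observation is a purely set-theoretic inclusion: whenever every local true distribution $\mathbb{P}_g$ happens to lie inside its own Wasserstein ball $\mathcal{A}^{(g)}_{\varepsilon_g,1,d}(\Xi)$, the mixture $\mathbb{P}$ is automatically a feasible point of $\mathcal{A}_G$. Indeed, one simply takes the witness $\mathbb{Q}_g=\mathbb{P}_g$ for each $g$ in the definition \eqref{eq:global_amb_set}; the weights $\alpha_g$ are precisely those appearing in $\mathbb{P}$, hence nonnegative and summing to one, so all constraints defining $\mathcal{A}_G$ are met. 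This yields the event inclusion
\[
\Big\{\mathbb{P}_g\in\mathcal{A}^{(g)}_{\varepsilon_g,1,d}(\Xi)\ \text{for all }g\Big\}\ \subseteq\ \big\{\mathbb{P}\in\mathcal{A}_G\big\}.
\]

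First I would take probabilities on both sides under the product sampling measure $\mathbb{P}^N$ governing all $N$ draws, obtaining $\mathbb{P}^N\{\mathbb{P}\in\mathcal{A}_G\}\geq\mathbb{P}^N\{\mathbb{P}_g\in\mathcal{A}^{(g)}_{\varepsilon_g,1,d}(\Xi)\ \forall g\}$. Next I would exploit that the samples are drawn independently across clients, so the $G$ events $\{\mathbb{P}_g\in\mathcal{A}^{(g)}_{\varepsilon_g,1,d}(\Xi)\}$ depend on disjoint blocks of coordinates and are therefore mutually independent under $\mathbb{P}^N$. This factorizes the right-hand side into $\prod_{g=1}^G\mathbb{P}^{N_g}\{\mathbb{P}_g\in\mathcal{A}^{(g)}_{\varepsilon_g,1,d}(\Xi)\}$.

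It remains to lower-bound each factor. Here I would invoke the single-ball concentration result: under Assumption \ref{assump:light_tail}, the measure-concentration inequality as specialized in \cite{kuhn2019wasserstein} guarantees that choosing the radius $\varepsilon_g=\varepsilon_{N_g}(\eta_g)$ exactly as stated makes $W_{d,1}(\mathbb{P}_g,\widehat{\mathbb{P}}_{N_g})\leq\varepsilon_g$ with probability at least $1-\eta_g$, i.e. $\mathbb{P}^{N_g}\{\mathbb{P}_g\in\mathcal{A}^{(g)}_{\varepsilon_g,1,d}(\Xi)\}\geq1-\eta_g$; this is precisely the defining property of $\eta_g$ quoted in the statement. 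Multiplying these $G$ bounds gives $\prod_{g=1}^G(1-\eta_g)$ and completes the argument.

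The steps are individually elementary once the reduction is in place; the point requiring the most care --- and the part I would treat as the main obstacle --- is verifying the event inclusion rigorously, in particular that the \emph{fixed} mixture weights $\alpha_g$ of the true distribution $\mathbb{P}$ are admissible in the \emph{variable-weight} set $\mathcal{A}_G$, and that the light-tail exponent $a_g$ and the resulting constants $c_{1_g},c_{2_g},c_{3_g}$ entering $\varepsilon_{N_g}(\eta_g)$ are computed per client from the transportation cost \eqref{eq:cost_func}. One should also confirm that cross-client independence of the sampling is genuinely part of the model so that the factorization is valid; were clients' samples dependent, only the inclusion together with a union-type bound would survive.
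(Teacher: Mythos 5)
Your proposal is correct and follows essentially the same route as the paper's proof: event inclusion via the witness $\mathbb{Q}_g=\mathbb{P}_g$ with the fixed weights $\alpha_g$, factorization by cross-client independence, and the per-client Kuhn--Esfahani concentration bound. You are in fact slightly more explicit than the paper about the inclusion step, which the paper acknowledges only by remarking that its first inequality is strict in general because $\mathbb{P}$ may lie in $\mathcal{A}_G$ even when some $\mathbb{P}_g$ falls outside its local ball.
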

\begin{proof}
    Proof is provided in Appendix \ref{proof:oos}.
\end{proof}

\section{Solution Algorithms}\label{sec:algs}
We introduce two algorithms to solve problem \eqref{eq:general_form_eq}. Given that our motivating scenario involves manufacturing plants (clients) with ample local compute resources and reliable communication with a central server, we adopt Assumption \ref{assump:sync_train} to guarantee convergence of our algorithms.
\begin{assumption} [Synchronous Training] \label{assump:sync_train}
    The distributed optimization problem in \eqref{eq:general_form_eq} is solved synchronously. That is, the central server only performs an update step once all the clients have completed solving their local problems and communicated their insights to the central server.
\end{assumption}
\subsection{Subgradient-based Algorithm (SM)}\label{subsec:sm_alg}
The subgradient-based (\textbf{SM}) algorithm begins by initializing the global model parameters $\boldsymbol{w}$. Next, each client $g$ seeks to obtain a subgradient for their inner maximization problem from \eqref{eq:general_form_eq}, to be sent to the server for aggregation and model update. This requires each client $g$ to obtain a worst-case distribution $\mathbb{Q}^{\ast}_g$ from its local ambiguity set $\mathcal{A}^{(g)}_{\varepsilon_g,1,d}(\Xi)$, allowing the worst-case risk to be directly expressed as an expectation with respect to $\mathbb{Q}^{\ast}_g$. However, \cite{kuhn2019wasserstein} show that the worst-case distribution cannot be obtained from a type-1 Wasserstein ambiguity set centered around an empirical distribution $\widehat{\mathbb{P}}_{N_g}$ if $\mathcal{X}$ is not compact. Therefore, we make Assumption \ref{assump:feat_supp} \textit{only} for the SM algorithm, ensuring the compactness of $\mathcal{X}$.

\begin{assumption}[Support of Feature Vector] \label{assump:feat_supp}
    The feature vector $\boldsymbol{x}$ is such that: 
        $0 \leq e_p^{\mathsf{T}}\boldsymbol{x} \leq 1 \ \forall p \in [P]$, 
    where $e_p$ are the standard unit vectors.
\end{assumption}

Note that Assumption \ref{assump:feat_supp} is not very restrictive in practice. Indeed, real-world data is often bounded by sensor ranges, and can therefore be easily normalized. Given Assumption \ref{assump:feat_supp} holds and the global model parameters $\boldsymbol{w}$ are fixed, then by \cite{2019regularization} it can be shown that the worst-case distribution $\mathbb{Q}^{\ast}_g$ for client $g$ is
\begin{multline} \label{eq:ext_dist}
        \mathbb{Q}_g^{\ast} = \frac{1}{N_g} \sum_{n_g=1}^{N_g} \bigg ( {\beta_{n_g}^+}^{\ast} \delta_{(\widehat{\boldsymbol{x}}_{n_g}-{\boldsymbol{q}_{n_g}^+}^{\ast}/{\beta_{n_g}^+}^{\ast},\widehat{y}_{n_g})} \\+  {\beta_{n_g}^-}^{\ast} \delta_{(\widehat{\boldsymbol{x}}_{n_g}-{\boldsymbol{q}_{n_g}^-}^{\ast}/{\beta_{n_g}^-}^{\ast},-\widehat{y}_{n_g})} \bigg ),
    \end{multline}
where $\delta_{(\boldsymbol{x},y)}$ is the Dirac density function that assigns probability mass 1 at sample $\boldsymbol{\xi}=(\boldsymbol{x},y)$, and $\beta_{n_g}^+$, $\beta_{n_g}^-$, $\boldsymbol{q}_{n_g}^+$, and $\boldsymbol{q}_{n_g}^-$ are maximizers of the following optimization problem:
\begin{align}\label{eq:subgrad_client_prob}
    \nonumber &\max_{\substack{\beta_{n_g}^+, \beta_{n_g}^- \\ \boldsymbol{q}_{n_g}^+, \boldsymbol{q}_{n_g}^-}} H_g(\boldsymbol{w}) \coloneq\\ &\left \{ \begin{aligned}
        &\max_{\substack{\beta_{n_g}^+, \beta_{n_g}^- \\ \boldsymbol{q}_{n_g}^+, \boldsymbol{q}_{n_g}^-}} && \begin{multlined} \frac{1}{N_g}\sum_{n_g=1}^{N_g} \big ( -(\beta_{n_g}^+ - \beta_{n_g}^-) \widehat{y}_{n_g} \boldsymbol{w}^{\mathsf{T}}\widehat{\boldsymbol{x}}_{n_g} \\- \widehat{y}_{n_g} \boldsymbol{w}^{\mathsf{T}} (\boldsymbol{q}_{n_g}^+ - \boldsymbol{q}_{n_g}^-) \big )\end{multlined}\\
        & \subjectto && \sum_{n_g=1}^{N_g} \big ( ||\boldsymbol{q}_{n_g}^+|| + ||\boldsymbol{q}_{n_g}^-|| + \kappa_g \beta_{n_g}^- \big ) \leq N_g \varepsilon_g \\
        &&& \beta_{n_g}^+ + \beta_{n_g}^- = 1 \qquad \qquad \qquad \  \forall n_g \in [N_g]\\
        &&& 0 \leq \beta_{n_g}^+ \widehat{\boldsymbol{x}}_{n_g}-\boldsymbol{q}_{n_g}^+ \leq \beta_{n_g}^+ \qquad \forall n_g \in [N_g]\\
        &&&0 \leq \beta_{n_g}^- \widehat{\boldsymbol{x}}_{n_g}-\boldsymbol{q}_{n_g}^- \leq \beta_{n_g}^- \qquad \forall n_g \in [N_g]\\
        &&& \beta_{n_g}^+,\beta_{n_g}^- \geq 0 \qquad \qquad \qquad \quad \ \forall n_g \in [N_g]
    \end{aligned} \right.,
\end{align}
where $||\cdot||$ is the norm used in the definition of the transportation cost function \eqref{eq:cost_func}. Armed with the discrete worst-case distribution $\mathbb{Q}^{\ast}_g$, each client $g$ can compute a subgradient, $\boldsymbol{v}_g$, of their local maximization problem. Proposition \ref{prop:subgrad_comp} presents a closed-form for obtaining $\boldsymbol{v}_g$.

\begin{proposition}[Local Subgradient Computation]\label{prop:subgrad_comp}
    Suppose the worst-case distribution $\mathbb{Q}^{\ast}_g$ is known to client $g$. Then, they can compute a subgradient $\boldsymbol{v}_g$ for their respective maximization problem from \eqref{eq:general_form_eq} as any vector that obeys
    \begin{equation*}
        \boldsymbol{v}_g \in \frac{1}{N_g} \sum_{n_g=1}^{N_g} \big ( \mathcal{B}^+ + \mathcal{B}^- \big ),
    \end{equation*}
    where $+$ is the Minkowski sum and $\mathcal{B}^+$, $\mathcal{B}^-$ are defined as
    \begin{equation*}
        \mathcal{B}^{\pm} \coloneq \left \{ \begin{aligned}
            & \boldsymbol{0} && \text{if } \widehat{r}_{n_g}^{\pm} < 0\\
            & \mp {\beta_{n_g}^{\pm}}^{\ast} \widehat{y}_{n_g}\widehat{\boldsymbol{z}}_{n_g}^{\pm}&& \text{if } \widehat{r}_{n_g}^{\pm} > 0\\
            & \text{conv}\left ( \{ \boldsymbol{0}, \mp {\beta_{n_g}^{\pm}}^{\ast} \widehat{y}_{n_g}\widehat{\boldsymbol{z}}_{n_g}^{\pm} \} \right ) && \text{if } \widehat{r}_{n_g}^{\pm} = 0
        \end{aligned} \right.,
    \end{equation*}
where $\widehat{\boldsymbol{z}}_{n_g}^{\pm} = \widehat{\boldsymbol{x}}_{n_g}-{\boldsymbol{q}_{n_g}^{\pm}}^{\ast}/{\beta_{n_g}^{\pm}}^{\ast}$, $\widehat{r}_{n_g}^{\pm} =  1 \mp \widehat{y}_{n_g} \cdot \boldsymbol{w}^{\mathsf{T}} \widehat{\boldsymbol{z}}_{n_g}^{\pm}$, and $\text{conv}(\Theta)$ is the convex hull of set $\Theta$.
\end{proposition}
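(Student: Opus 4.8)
The plan is to exploit the fact that the client's worst-case risk
\[
V_g(\boldsymbol{w}) \coloneqq \sup_{\mathbb{Q}_g \in \mathcal{A}^{(g)}_{\varepsilon_g,1,d}(\Xi)} \mathbb{E}^{\mathbb{Q}_g}[\ell_H(\boldsymbol{w};\boldsymbol{\xi})]
\]
is a pointwise supremum of the maps $\boldsymbol{w} \mapsto \mathbb{E}^{\mathbb{Q}_g}[\ell_H(\boldsymbol{w};\boldsymbol{\xi})]$, each convex in $\boldsymbol{w}$ because the hinge loss $\ell_H$ is convex in $\boldsymbol{w}$ and expectation preserves convexity; hence $V_g$ is convex. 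The central observation is an envelope-type selection: since $\mathbb{Q}^{\ast}_g$ attains this supremum at the current iterate $\boldsymbol{w}$, for any $\boldsymbol{w}'$ we have $V_g(\boldsymbol{w}') \geq \mathbb{E}^{\mathbb{Q}^{\ast}_g}[\ell_H(\boldsymbol{w}';\boldsymbol{\xi})]$ while $V_g(\boldsymbol{w}) = \mathbb{E}^{\mathbb{Q}^{\ast}_g}[\ell_H(\boldsymbol{w};\boldsymbol{\xi})]$. Consequently, any subgradient of the \emph{frozen} objective $\boldsymbol{w} \mapsto \mathbb{E}^{\mathbb{Q}^{\ast}_g}[\ell_H(\boldsymbol{w};\boldsymbol{\xi})]$, in which $\mathbb{Q}^{\ast}_g$ is held fixed and the dependence of the maximizer on $\boldsymbol{w}$ is ignored, is automatically a subgradient of $V_g$ at $\boldsymbol{w}$. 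So I would reduce the problem to differentiating this frozen objective.

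Next I would use the explicit discrete form of $\mathbb{Q}^{\ast}_g$ from \eqref{eq:ext_dist} to write the frozen objective as a finite weighted sum of hinge losses,
\begin{multline*}
\mathbb{E}^{\mathbb{Q}^{\ast}_g}[\ell_H(\boldsymbol{w};\boldsymbol{\xi})] = \frac{1}{N_g}\sum_{n_g=1}^{N_g}\Big( {\beta_{n_g}^+}^{\ast}\, \ell_H(\boldsymbol{w};(\widehat{\boldsymbol{z}}_{n_g}^+,\widehat{y}_{n_g})) \\ + {\beta_{n_g}^-}^{\ast}\, \ell_H(\boldsymbol{w};(\widehat{\boldsymbol{z}}_{n_g}^-,-\widehat{y}_{n_g})) \Big),
\end{multline*}
where $\widehat{\boldsymbol{z}}_{n_g}^{\pm} = \widehat{\boldsymbol{x}}_{n_g}-{\boldsymbol{q}_{n_g}^{\pm}}^{\ast}/{\beta_{n_g}^{\pm}}^{\ast}$ are the perturbed atoms. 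Expanding each term, $\ell_H(\boldsymbol{w};(\widehat{\boldsymbol{z}}_{n_g}^+,\widehat{y}_{n_g})) = \max\{0, 1 - \widehat{y}_{n_g}\boldsymbol{w}^{\mathsf{T}}\widehat{\boldsymbol{z}}_{n_g}^+\}$ and $\ell_H(\boldsymbol{w};(\widehat{\boldsymbol{z}}_{n_g}^-,-\widehat{y}_{n_g})) = \max\{0, 1 + \widehat{y}_{n_g}\boldsymbol{w}^{\mathsf{T}}\widehat{\boldsymbol{z}}_{n_g}^-\}$ identifies the affine arguments with the residuals $\widehat{r}_{n_g}^{\pm} = 1 \mp \widehat{y}_{n_g}\boldsymbol{w}^{\mathsf{T}}\widehat{\boldsymbol{z}}_{n_g}^{\pm}$ defined in the statement; the sign flip of the label on the $-$ atoms is exactly what produces the $\mp$ pattern.

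The final step is a termwise subdifferential calculation. For a nonnegative weight $\beta$ and an affine residual $r(\boldsymbol{w})$, the subdifferential of $\beta\max\{0, r(\boldsymbol{w})\}$ equals $\{\boldsymbol{0}\}$ when $r<0$, $\{\beta\nabla r\}$ when $r>0$, and $\text{conv}(\{\boldsymbol{0}, \beta\nabla r\})$ when $r=0$. Applying this with $\nabla_{\boldsymbol{w}}\widehat{r}_{n_g}^{\pm} = \mp\widehat{y}_{n_g}\widehat{\boldsymbol{z}}_{n_g}^{\pm}$ yields precisely the three-case sets $\mathcal{B}^{\pm}$. Since the $2N_g$ hinge terms are finite convex functions on $\mathbb{R}^P$, the Moreau--Rockafellar sum rule holds without qualification, so the subdifferential of the total objective is the Minkowski sum of the individual subdifferentials, giving $\boldsymbol{v}_g \in \frac{1}{N_g}\sum_{n_g=1}^{N_g}(\mathcal{B}^+ + \mathcal{B}^-)$.

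I expect the main obstacle to be the rigorous justification of the envelope step rather than the mechanical subdifferential computation: one must confirm that $\mathbb{Q}^{\ast}_g$ is a bona fide maximizer of the inner problem, which is where the compactness guaranteed by Assumption \ref{assump:feat_supp} enters, since it ensures the worst-case distribution in \eqref{eq:ext_dist} exists and attains the supremum, and that selecting a subgradient at this maximizer is legitimate for a supremum of convex functions. A secondary point of care is the sign bookkeeping on the flipped labels, together with the degenerate atoms having ${\beta_{n_g}^{\pm}}^{\ast}=0$, which carry zero mass and contribute $\boldsymbol{0}$ consistently with all three cases.
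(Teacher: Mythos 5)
Your proposal is correct and follows essentially the same route as the paper's proof: the paper likewise passes from the supremum to the fixed maximizer $\mathbb{Q}_g^{\ast}$ (citing a textbook lemma for the inclusion $\partial \sup_{\mathbb{Q}_g}\mathbb{E}^{\mathbb{Q}_g}[\ell_H] \supseteq \partial\,\mathbb{E}^{\mathbb{Q}_g^{\ast}}[\ell_H]$, where you instead verify the subgradient inequality directly), expands the frozen objective as a finite weighted sum of hinge losses at the atoms $\widehat{\boldsymbol{z}}_{n_g}^{\pm}$, and performs the identical three-case subdifferential computation combined via Minkowski sums. Your explicit handling of the degenerate ${\beta_{n_g}^{\pm}}^{\ast}=0$ atoms and the appeal to the Moreau--Rockafellar sum rule are minor additions in rigor, not a different argument.
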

\begin{proof}
    Proof is provided in Appendix \ref{proof:subgrad_comp}.
\end{proof}

The subgradients $\boldsymbol{v}_g$ from the clients are then aggregated by the server, and used to update the global model $\boldsymbol{w}$ and broadcast it back to the clients. This process repeats for $T$ rounds. The pseudocode of the SM algorithm is given in \ref{alg:subgrad}.
\begin{algorithm}[ht]
    \caption{SM Algorithm}
    \smaller
    \label{alg:subgrad}
    \textbf{Input:} $\boldsymbol{w}^{(0)}$\\
    \textbf{Parameters:} Number of rounds $T$, step-size $\gamma(t)$ at round $t$\\
    \textbf{Output:} $\boldsymbol{w}^{\ast}$
    \begin{algorithmic}[1]
        \FOR{$t = 1, \dots, T$}
        \STATE \textbf{Client Update:}
        \FOR{clients $g = 1, \dots, G$}
        \STATE Solve for $[{\beta_{n_g}^+}^{\ast}, {\beta_{n_g}^-}^{\ast}, {\boldsymbol{q}_{n_g}^+}^{\ast}, {\boldsymbol{q}_{n_g}^-}^{\ast}] \leftarrow \argmax H_g(\boldsymbol{w}^{(t)})$
        \STATE Compute $\mathbb{Q}_g^{\ast} \leftarrow \frac{1}{N_g} \sum_{n_g=1}^{N_g} {\beta_{n_g}^+}^{\ast} \delta_{(\widehat{\boldsymbol{x}}_{n_g}-{\boldsymbol{q}_{n_g}^+}^{\ast}/{\beta_{n_g}^+}^{\ast},\widehat{y}_{n_g})} +  {\beta_{n_g}^-}^{\ast} \delta_{(\widehat{\boldsymbol{x}}_{n_g}-{\boldsymbol{q}_{n_g}^-}^{\ast}/{\beta_{n_g}^-}^{\ast},-\widehat{y}_{n_g})}$
        \STATE Compute any local subgradient $\boldsymbol{v}_g$ via Proposition \ref{prop:subgrad_comp}
        \STATE Send $\boldsymbol{v}_g$ to central server.
        \ENDFOR
        \STATE \textbf{Server Update:}
        \STATE $\boldsymbol{w}^{(t+1)} \longleftarrow \boldsymbol{w}^{(t)} - \gamma(t) \sum_{g=1}^{G} \alpha_g \boldsymbol{v}_g$
        \STATE Broadcast $\boldsymbol{w}^{(t+1)}$ to all clients
        \ENDFOR
    \end{algorithmic}
\end{algorithm}

\textbf{Convergence.} It is known that the subgradient method converges to an optimal objective value under certain conditions \citep{boyd2003}. We present Lemmas \ref{lem:sub_con}, \ref{lem:lip_cont}, \ref{lem:coer} in Appendix \ref{sec:lemmas}, proving the convexity, Lipschitz continuity, and coercivity of problem \eqref{eq:general_form_eq}'s objective in $\boldsymbol{w}$. Theorem \ref{thm:sm_conv} then asserts that these properties satisfy the convergence criteria of the subgradient method given that the step-size diminishes appropriately, proving the convergence of the SM algorithm. We also derive the SM algorithm's worst-case time complexity in Theorem \ref{thm:subgrad_conv}, showing that it converges in polynomial time with a sublinear number of communication rounds.

\begin{theorem}[SM Convergence]\label{thm:sm_conv}
    The SM Algorithm \ref{alg:subgrad} converges to an optimal solution $\boldsymbol{w}^{\ast}$ of problem \eqref{eq:general_form_eq} within an arbitrary tolerance $\epsilon_1 > 0$, provided the step-size $\gamma(t) \rightarrow 0$ as $t \rightarrow \infty$ and $\sum_{t=1}^{\infty} \gamma(t) = \infty$.
\end{theorem}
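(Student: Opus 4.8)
The plan is to recognize Algorithm \ref{alg:subgrad} as the classical unconstrained subgradient method applied to the aggregate objective $F(\boldsymbol{w}) \coloneqq \sum_{g=1}^G \alpha_g \, h_g(\boldsymbol{w})$, where $h_g(\boldsymbol{w}) \coloneqq \sup_{\mathbb{Q}_g \in \mathcal{A}^{(g)}_{\varepsilon_g,1,d}(\Xi)} \mathbb{E}^{\mathbb{Q}_g}[\ell_H(\boldsymbol{w};\boldsymbol{\xi})]$ is the inner worst-case risk from the right-hand side of \eqref{eq:general_form_eq}, and then to invoke the convergence guarantee for a convex function with bounded subgradients under a nonsummable-diminishing step-size rule. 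First I would collect the three structural facts established in Lemmas \ref{lem:sub_con}, \ref{lem:lip_cont}, and \ref{lem:coer}: convexity of $F$ makes the subgradient inequality available; coercivity together with convexity and continuity guarantees that a minimizer $\boldsymbol{w}^{\ast}$ exists and that $F^{\ast} \coloneqq F(\boldsymbol{w}^{\ast})$ is finite; and Lipschitz continuity of $F$ with constant $L$ guarantees that every subgradient is bounded in norm by $L$.

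Second, I would verify that the direction $\boldsymbol{d}^{(t)} = \sum_{g=1}^G \alpha_g \boldsymbol{v}_g$ assembled at the server (line 10) is a genuine subgradient of $F$ at $\boldsymbol{w}^{(t)}$. By Proposition \ref{prop:subgrad_comp}, each $\boldsymbol{v}_g$ is a subgradient of $h_g$ at $\boldsymbol{w}^{(t)}$; since each $h_g$ is convex and the weights satisfy $\alpha_g \geq 0$, the Moreau--Rockafellar sum rule for subdifferentials of convex functions gives $\boldsymbol{d}^{(t)} \in \partial F(\boldsymbol{w}^{(t)})$, whence $\|\boldsymbol{d}^{(t)}\| \leq L$ for all $t$.

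Third, I would run the textbook contraction argument. Expanding $\|\boldsymbol{w}^{(t+1)} - \boldsymbol{w}^{\ast}\|^2$ with the update rule and applying the subgradient inequality $\langle \boldsymbol{d}^{(t)}, \boldsymbol{w}^{(t)} - \boldsymbol{w}^{\ast}\rangle \geq F(\boldsymbol{w}^{(t)}) - F^{\ast}$ yields the one-step bound $\|\boldsymbol{w}^{(t+1)} - \boldsymbol{w}^{\ast}\|^2 \leq \|\boldsymbol{w}^{(t)} - \boldsymbol{w}^{\ast}\|^2 - 2\gamma(t)\big(F(\boldsymbol{w}^{(t)}) - F^{\ast}\big) + \gamma(t)^2 L^2$. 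Telescoping from $t=1$ to $T$ and dropping the nonnegative left-hand side gives
\[
    \min_{1 \leq t \leq T} F(\boldsymbol{w}^{(t)}) - F^{\ast} \;\leq\; \frac{\|\boldsymbol{w}^{(1)} - \boldsymbol{w}^{\ast}\|^2 + L^2 \sum_{t=1}^T \gamma(t)^2}{2\sum_{t=1}^T \gamma(t)}.
\]

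Finally, I would show the right-hand side vanishes as $T \to \infty$ under the hypotheses $\gamma(t) \to 0$ and $\sum_t \gamma(t) = \infty$ alone. The term $\|\boldsymbol{w}^{(1)}-\boldsymbol{w}^{\ast}\|^2 / \big(2\sum_{t=1}^T \gamma(t)\big)$ vanishes because its denominator diverges. For the remaining ratio, given any $\delta > 0$ I would choose $t_0$ with $\gamma(t) \leq \delta$ for all $t > t_0$, split the numerator sum at $t_0$, and bound $\sum_{t > t_0} \gamma(t)^2 \leq \delta \sum_{t > t_0} \gamma(t)$; dividing by the diverging $\sum_{t=1}^T \gamma(t)$ and sending $T \to \infty$ shows the $\limsup$ of the ratio is at most $\delta$, hence zero since $\delta$ is arbitrary. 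Thus $\min_{1 \leq t \leq T} F(\boldsymbol{w}^{(t)}) \to F^{\ast}$, so for the prescribed tolerance $\epsilon_1 > 0$ there is a finite $T$ and an iterate $\boldsymbol{w}^{(t)}$, $t \leq T$, whose objective lies within $\epsilon_1$ of $F^{\ast}$. The main obstacle I anticipate is exactly this last step: because the hypotheses omit square-summability $\sum_t \gamma(t)^2 < \infty$, I cannot appeal to the simplest form of the convergence theorem and must instead use the nonsummable-diminishing argument to control the ratio $\sum_{t=1}^T \gamma(t)^2 / \sum_{t=1}^T \gamma(t)$. A secondary point requiring care is confirming that the subgradient calculus of the second step remains valid even though each $h_g$ is a supremum over an infinite-dimensional family of expectations; this is, however, already underwritten by Proposition \ref{prop:subgrad_comp}, which supplies a concrete subgradient of each $h_g$.
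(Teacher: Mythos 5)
Your proof is correct, and it rests on exactly the same three pillars as the paper's: convexity (Lemma \ref{lem:sub_con}), Lipschitz continuity (Lemma \ref{lem:lip_cont}), and coercivity (Lemma \ref{lem:coer}). The difference is in execution. The paper treats the classical subgradient-method guarantee as a black box, citing \citet{nesterov2013} and verifying its four hypotheses; its only nontrivial work is the fourth condition, boundedness of $\|\boldsymbol{w}^{\ast}-\boldsymbol{w}^{(0)}\|$, which it establishes by a sublevel-set argument from coercivity ($f(\boldsymbol{0})=1$, so every minimizer lies in the bounded set $\{\boldsymbol{w}\colon f(\boldsymbol{w})\le 1\}$, and the triangle inequality finishes). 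You instead inline the whole classical argument---one-step contraction, telescoping, and the nonsummable-diminishing analysis of the ratio $\sum_{t\le T}\gamma(t)^2/\sum_{t\le T}\gamma(t)$---and use coercivity only to guarantee that a minimizer exists, so that $\|\boldsymbol{w}^{(1)}-\boldsymbol{w}^{\ast}\|$ is a finite constant; this reaches the same end more self-containedly, and your splitting argument correctly handles the fact that square-summability is not assumed. You also make explicit a step the paper leaves implicit: that the server aggregate $\sum_g \alpha_g\boldsymbol{v}_g$ is a genuine subgradient of the weighted objective, via the Moreau--Rockafellar sum rule applied to the subgradients supplied by Proposition \ref{prop:subgrad_comp}. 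Two small points of care: the Lipschitz constant in Lemma \ref{lem:lip_cont} is stated for an arbitrary norm paired with its dual, whereas your contraction expansion uses the Euclidean inner product, so a one-line appeal to norm equivalence on $\mathbb{R}^P$ is needed to get a Euclidean bound on the subgradients; and your conclusion is best-iterate convergence in objective value, which is precisely the sense of ``within tolerance $\epsilon_1$'' used by the paper (and by its complexity count in Theorem \ref{thm:subgrad_conv}), so there is no gap there.
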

\begin{proof}
    Proof is provided in Appendix \ref{proof:sm_conv}.
\end{proof}

\begin{theorem}[SM Time Complexity]\label{thm:subgrad_conv}
Suppose the $\ell_{\infty}$-norm is used in problem \eqref{eq:subgrad_client_prob}, and that it is solved via the barrier method equipped with the log barrier function and Newton updates. Then, the SM algorithm \ref{alg:subgrad} with the diminishing step-size in Theorem \ref{thm:sm_conv} has an overall worst-case time complexity of $\mathcal{O}\left (\epsilon_1^{-2} \left [ N_{g^{\ast}}^{3.5}P^{3.5}\log(\epsilon_2^{-1}) + GP \right ] \right )$ (with $\mathcal{O}(\epsilon_1^{-2})$ communication rounds), where $\epsilon_1$, $\epsilon_2 > 0$ are tolerances 
    on the solutions of the subgradient method and problem \eqref{eq:subgrad_client_prob}, respectively, and $N_{g^{\ast}}$ is the greatest number of samples at any client. 
\end{theorem}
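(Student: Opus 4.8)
The plan is to factor the total running time into two pieces: (i) the number of communication rounds $T$ that the subgradient method needs to reach objective-tolerance $\epsilon_1$, and (ii) the per-round cost, which itself decomposes into the client-side cost of solving \eqref{eq:subgrad_client_prob} by the barrier method, the cost of forming the subgradient via Proposition \ref{prop:subgrad_comp}, and the server-side aggregation. The final bound is then the product of (i) and (ii).

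First I would fix the round count. By Lemmas \ref{lem:sub_con}, \ref{lem:lip_cont}, and \ref{lem:coer}, the objective $f(\boldsymbol{w})$ of \eqref{eq:general_form_eq} is convex, Lipschitz with some constant $L$, and coercive, so the iterates remain in a bounded region of diameter $R$. For the diminishing step-size of Theorem \ref{thm:sm_conv}, the standard subgradient estimate $\min_{t \le T} f(\boldsymbol{w}^{(t)}) - f^\ast = \mathcal{O}(RL/\sqrt{T})$ shows that $T = \mathcal{O}(\epsilon_1^{-2})$ rounds suffice for accuracy $\epsilon_1$. This simultaneously yields the $\mathcal{O}(\epsilon_1^{-2})$ communication rounds and the leading $\epsilon_1^{-2}$ factor.

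The crux is the per-round client cost. With the $\ell_\infty$-norm, each $\|\boldsymbol{q}_{n_g}^{\pm}\|_\infty$ linearizes via one auxiliary scalar and $2P$ box-type constraints, so \eqref{eq:subgrad_client_prob} becomes a linear program whose number of variables $n$ and number of inequality constraints $m$ are both $\Theta(N_g P)$ (the $\boldsymbol{q}_{n_g}^{\pm} \in \mathbb{R}^P$ blocks and their box constraints dominate). In the log-barrier interior-point analysis the self-concordance parameter equals $m = \Theta(N_g P)$, so $\mathcal{O}(\sqrt{m}\,\log(\epsilon_2^{-1})) = \mathcal{O}(\sqrt{N_g P}\,\log(\epsilon_2^{-1}))$ Newton steps drive the duality gap below $\epsilon_2$, while each step solves a dense $n\times n$ KKT system at cost $\mathcal{O}(n^3)=\mathcal{O}((N_g P)^3)$. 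Multiplying gives $\mathcal{O}((N_g P)^{3.5}\log(\epsilon_2^{-1})) = \mathcal{O}(N_g^{3.5}P^{3.5}\log(\epsilon_2^{-1}))$ per client; by the synchronous-training Assumption \ref{assump:sync_train} the round is paced by the slowest client, producing the $N_{g^\ast}^{3.5}P^{3.5}\log(\epsilon_2^{-1})$ term. The remaining work is lower order: evaluating the closed form of Proposition \ref{prop:subgrad_comp} costs $\mathcal{O}(N_g P)$ per client (dominated by the barrier cost), and the server update $\boldsymbol{w}^{(t+1)} = \boldsymbol{w}^{(t)} - \gamma(t)\sum_{g=1}^G \alpha_g \boldsymbol{v}_g$ costs $\mathcal{O}(GP)$. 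Summing these per-round contributions and multiplying by $T=\mathcal{O}(\epsilon_1^{-2})$ yields $\mathcal{O}(\epsilon_1^{-2}[N_{g^\ast}^{3.5}P^{3.5}\log(\epsilon_2^{-1}) + GP])$.

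The main obstacle I anticipate is pinning down the interior-point complexity cleanly: one must verify that after the $\ell_\infty$ linearization both $n$ and $m$ scale as $\Theta(N_g P)$, and then correctly combine the $\sqrt{m}$ Newton-iteration count with the $n^3$ per-iteration linear-algebra cost to obtain the exponent $3.5 = 3 + \tfrac{1}{2}$ (while absorbing the $\log m$ factor of the barrier bound into lower-order terms relative to $\log(\epsilon_2^{-1})$). A secondary care point is confirming that the subgradient-evaluation and aggregation steps are genuinely dominated, so that only the barrier-method and $GP$ terms survive in the final expression.
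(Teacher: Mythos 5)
Your proposal is correct and follows essentially the same route as the paper's proof: count $\mathcal{O}(\epsilon_1^{-2})$ subgradient rounds, observe that the $\ell_\infty$-linearized problem \eqref{eq:subgrad_client_prob} is an LP with $\Theta(N_g P)$ variables and constraints so the barrier method costs $\mathcal{O}((N_gP)^{3.5}\log(\epsilon_2^{-1}))$ per client, let the slowest client $N_{g^\ast}$ pace each synchronous round, add the $\mathcal{O}(GP)$ server aggregation, and note that the subgradient evaluation of Proposition \ref{prop:subgrad_comp} is dominated. The only cosmetic difference is the per-Newton-step accounting ($\mathcal{O}(n^3)$ KKT solve versus the paper's $\mathcal{O}(CD^2)$), which coincide here since the constraint and variable counts are both $\Theta(N_gP)$.
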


\begin{proof}
    Proof is provided in Appendix \ref{proof:subgrad_conv}.
\end{proof}

\subsection{ADMM-based Algorithm (ADMM)}\label{subsec:admm_alg}
The ADMM-based (\textbf{ADMM}) algorithm requires each client $g$ to solve their local problem and send their optimal local model $\boldsymbol{w}^{\ast}_g$ to the server. There, the local models are aggregated to obtain optimal global model $\boldsymbol{w}^{\ast}$, which is broadcast to the clients. This repeats for $T$ rounds. To guarantee theoretical convergence, we also create a modified version of this algorithm with strongly convex client objectives, denoted as \textbf{ADMM-SC}. Further detail on this is given in the convergence discussion. Deriving this algorithm begins by introducing a decision variable $\boldsymbol{w}_g$ for each client $g$, and rewriting problem \eqref{eq:general_form_eq} to enforce client concensus as follows.
\begin{equation}\label{eq:admm_orig_obj}
\begin{aligned}
&\inf_{\boldsymbol{w_g},\boldsymbol{w}} &&\sum_{g=1}^{G} \alpha_g \sup_{\mathbb{Q}_g \in \mathcal{A}^{(g)}_{\varepsilon_g,1,d}(\Xi)}\mathbb{E}^{\mathbb{Q}_g}[\ell_H(\boldsymbol{w}_g;\boldsymbol{\xi})]\\
    &\subjectto && \boldsymbol{w}_g - \boldsymbol{w} = 0 \qquad \qquad \qquad \forall g \in [G].
\end{aligned}
\end{equation}
Next, we express the Augmented Lagrangian parameterized by scale parameter $\rho$ for the problem in \eqref{eq:admm_orig_obj} as follows:
\begin{multline*}
    \allowdisplaybreaks
    \mathcal{L}_{\rho}(\boldsymbol{w}_1, \dots, \boldsymbol{w}_G, \boldsymbol{w}, \boldsymbol{\mu}_1, \dots, \boldsymbol{\mu}_G) \\= \sum_{g=1}^G \alpha_g\mathcal{L}_{\rho_g}(\boldsymbol{w}_g, \boldsymbol{w}, \boldsymbol{\mu}_g),
\end{multline*}
where $\boldsymbol{\mu_g}$ are client $g$'s scaled Lagrange multipliers, and
\begin{multline*}
    \mathcal{L}_{\rho_g}(\boldsymbol{w}_g, \boldsymbol{w}, \boldsymbol{\mu}_g) = \\\sup_{\mathbb{Q}_g \in \mathcal{A}^{(g)}_{\varepsilon_g,1,d}(\Xi)}\mathbb{E}^{\mathbb{Q}_g}[\ell_H(\boldsymbol{w}_g;\boldsymbol{\xi})]
    + \frac{\rho}{2} ||\boldsymbol{w}_g - \boldsymbol{w} + \boldsymbol{\mu}_g||_2^2.
\end{multline*}
Given the Augmented Lagrangian, client $g$ and the server can obtain their model updates by minimizing it with respect to $\boldsymbol{w}_g$ and $\boldsymbol{w}$, respectively. Proposition \ref{prop:admm_client_up} presents a \textit{tractable, convex} problem that is solved by each client for local model updates. Proposition \ref{prop:admm_server_up} presents a closed-form expression for the server's update of the global model.

\begin{proposition}[ADMM Client Update]\label{prop:admm_client_up}
    Provided with the updated global model $\boldsymbol{w}^{\ast}$, client $g$ can obtain their updated local model $\boldsymbol{w}_g^{\ast}$ as the minimizer to the following problem
    \begin{align}
        &  \nonumber J_g (\boldsymbol{w},\boldsymbol{\mu}_g) =  \\
        &\label{eq:admm_2d}  \left \{ \begin{aligned}
            & \min_{\boldsymbol{w}_g,\lambda_g,s_{n_g}}&& \lambda_g \varepsilon_g + \frac{1}{N_g} \sum_{n_g=1}^{N_g}s_{n_g} \\
            &&& \qquad \qquad \qquad  +\frac{\rho}{2} || \boldsymbol{w}_g - \boldsymbol{w}^{\ast} + \boldsymbol{\mu}_g||_2^2\\
            & \subjectto && \ell_H(\boldsymbol{w}_g;(\widehat{\boldsymbol{x}}_{n_g},\widehat{y}_{n_g})) \leq s_{n_g} \ \forall n_g \in [N_g] \\
            &&& \begin{aligned} &\ell_H(\boldsymbol{w}_g;(\widehat{\boldsymbol{x}}_{n_g},-\widehat{y}_{n_g})) - \kappa \lambda_g \leq s_{n_g}\\
            & \qquad \qquad \qquad \qquad \qquad \quad \forall n_g \in [N_g] \end{aligned} \\
            &&& \lambda \geq ||\boldsymbol{w}_g||_{\ast}&&
        \end{aligned} \right.,
    \end{align}
where $||\cdot||_{\ast}$ is the dual to the norm used in the transportation cost function \eqref{eq:cost_func}.
\end{proposition}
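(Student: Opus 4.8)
The plan is to separate the distribution-dependent part of the client subproblem from the proximal penalty and then reformulate the former through Wasserstein strong duality. Since the augmented-Lagrangian term $\frac{\rho}{2}\|\boldsymbol{w}_g - \boldsymbol{w}^{\ast} + \boldsymbol{\mu}_g\|_2^2$ does not depend on $\mathbb{Q}_g$, minimizing $\mathcal{L}_{\rho_g}$ over $\boldsymbol{w}_g$ is equivalent to minimizing the sum of this penalty and the local worst-case risk $\sup_{\mathbb{Q}_g \in \mathcal{A}^{(g)}_{\varepsilon_g,1,d}(\Xi)} \mathbb{E}^{\mathbb{Q}_g}[\ell_H(\boldsymbol{w}_g;\boldsymbol{\xi})]$. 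I would first reformulate the worst-case risk in isolation and then reattach the penalty verbatim. Invoking the strong-duality result for type-$1$ Wasserstein balls centered at an empirical distribution \citep{kuhn2019wasserstein,2019regularization}, the worst-case risk equals
\[
\inf_{\lambda_g \geq 0}\ \lambda_g \varepsilon_g + \frac{1}{N_g}\sum_{n_g=1}^{N_g} \sup_{\boldsymbol{\xi}=(\boldsymbol{x},y)} \big[\, \ell_H(\boldsymbol{w}_g;\boldsymbol{\xi}) - \lambda_g\, d(\boldsymbol{\xi},\widehat{\boldsymbol{\xi}}_{n_g}) \,\big],
\]
where $\lambda_g \geq 0$ is the dual variable for the radius constraint and $d$ is the cost \eqref{eq:cost_func}.

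The core computation is the inner supremum. Because $d(\boldsymbol{\xi},\widehat{\boldsymbol{\xi}}_{n_g}) = \|\boldsymbol{x}-\widehat{\boldsymbol{x}}_{n_g}\| + \kappa\,\mathbbm{1}_{\{y \neq \widehat{y}_{n_g}\}}$ and $y$ ranges over only two values, I would split the supremum into the case $y = \widehat{y}_{n_g}$ (no flip penalty) and $y = -\widehat{y}_{n_g}$ (penalty $\kappa\lambda_g$), each reducing to a supremum over $\boldsymbol{x} \in \mathbb{R}^P$ of $\ell_H(\boldsymbol{w}_g;(\boldsymbol{x},y)) - \lambda_g\|\boldsymbol{x}-\widehat{\boldsymbol{x}}_{n_g}\|$. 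The key observation is that $\boldsymbol{x} \mapsto \ell_H(\boldsymbol{w}_g;(\boldsymbol{x},y))$ is Lipschitz with respect to $\|\cdot\|$ with modulus $\|\boldsymbol{w}_g\|_{\ast}$, since its subgradient in $\boldsymbol{x}$ on the active region is $-y\boldsymbol{w}_g$ and $\|-y\boldsymbol{w}_g\|_{\ast} = \|\boldsymbol{w}_g\|_{\ast}$. Consequently, for $\lambda_g \geq \|\boldsymbol{w}_g\|_{\ast}$ the penalized objective is maximized at $\boldsymbol{x} = \widehat{\boldsymbol{x}}_{n_g}$, yielding $\ell_H(\boldsymbol{w}_g;(\widehat{\boldsymbol{x}}_{n_g},y))$, whereas for $\lambda_g < \|\boldsymbol{w}_g\|_{\ast}$ the supremum diverges to $+\infty$. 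This dichotomy is exactly what lets the reformulation avoid any compactness hypothesis on $\mathcal{X}$, so that the ADMM algorithm does not need Assumption \ref{assump:feat_supp}. Combining the two label branches, each per-sample supremum equals $\max\{\ell_H(\boldsymbol{w}_g;(\widehat{\boldsymbol{x}}_{n_g},\widehat{y}_{n_g})),\, \ell_H(\boldsymbol{w}_g;(\widehat{\boldsymbol{x}}_{n_g},-\widehat{y}_{n_g})) - \kappa\lambda_g\}$, subject to the implicit constraint $\lambda_g \geq \|\boldsymbol{w}_g\|_{\ast}$.

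To finish, I would introduce epigraph variables $s_{n_g}$ upper-bounding each per-sample maximum, which linearizes the two terms of the maximum into the two hinge-loss constraints, and replace the implicit domain condition by the explicit constraint $\lambda_g \geq \|\boldsymbol{w}_g\|_{\ast}$; reattaching the proximal term to the objective then recovers program \eqref{eq:admm_2d} precisely. Tractability and convexity follow immediately: the objective is linear in $(\lambda_g, s_{n_g})$ plus a convex quadratic in $\boldsymbol{w}_g$, the hinge-loss epigraph constraints are convex because $\ell_H$ is convex in $\boldsymbol{w}_g$, and $\lambda_g \geq \|\boldsymbol{w}_g\|_{\ast}$ is a convex norm constraint, so \eqref{eq:admm_2d} is a finite-dimensional convex program whose minimizer is the sought $\boldsymbol{w}_g^{\ast}$.

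The main obstacle is the evaluation of the inner supremum: carefully establishing, via the dual-norm Lipschitz bound, the exact threshold $\lambda_g = \|\boldsymbol{w}_g\|_{\ast}$ at which the unbounded-feature supremum transitions from finite to $+\infty$, and correctly bookkeeping the label-flip penalty so that the two branches collapse into the two stated constraints. The remaining duality and epigraph steps are routine.
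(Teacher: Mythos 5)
Your proposal is correct and follows essentially the same route as the paper's proof in Appendix \ref{proof:admm_client_up}: separate the proximal term from the worst-case risk, apply Wasserstein strong duality, and reformulate via epigraph variables $s_{n_g}$ with the explicit constraint $\lambda_g \geq ||\boldsymbol{w}_g||_{\ast}$. The only difference is that you prove the inner-supremum dichotomy (finite value $\ell_H(\boldsymbol{w}_g;(\widehat{\boldsymbol{x}}_{n_g},y))$ for $\lambda_g \geq ||\boldsymbol{w}_g||_{\ast}$, divergence otherwise) inline via the dual-norm Lipschitz bound, whereas the paper delegates this step to Lemma A.3 of \citet{2019regularization} and the arguments in Theorem 1 of \citet{shafieezadehabadeh2015distributionally}.
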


\begin{proof}
    Proof is provided in Appendix \ref{proof:admm_client_up}.
\end{proof}

\begin{proposition}[ADMM Server Update]\label{prop:admm_server_up}
    Provided with the updated local models $\boldsymbol{w}_g^{\ast}$ and scaled Lagrange multipliers $\boldsymbol{\mu}_g^{\ast}$, the central server can obtain the updated global model $\boldsymbol{w}^{\ast}$ via the following
    \begin{equation*}
        \boldsymbol{w}^{\ast} = \sum_{g=1}^G \alpha_g (\boldsymbol{w}_g^{\ast} + \boldsymbol{\mu}_g^{\ast}).
    \end{equation*}
\end{proposition}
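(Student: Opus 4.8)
The plan is to derive the server update directly as the minimizer of the augmented Lagrangian $\mathcal{L}_{\rho}$ over the global variable $\boldsymbol{w}$, holding the local iterates $\boldsymbol{w}_g^{\ast}$ and scaled multipliers $\boldsymbol{\mu}_g^{\ast}$ fixed at their most recent values, as prescribed by the ADMM template. First I would isolate the dependence of $\mathcal{L}_{\rho} = \sum_{g=1}^{G} \alpha_g \mathcal{L}_{\rho_g}$ on $\boldsymbol{w}$. The crucial observation is that within each $\mathcal{L}_{\rho_g}$, the worst-case risk term $\sup_{\mathbb{Q}_g \in \mathcal{A}^{(g)}_{\varepsilon_g,1,d}(\Xi)} \mathbb{E}^{\mathbb{Q}_g}[\ell_H(\boldsymbol{w}_g;\boldsymbol{\xi})]$ depends only on $\boldsymbol{w}_g$ and is therefore constant with respect to $\boldsymbol{w}$; the only $\boldsymbol{w}$-dependent contribution is the quadratic penalty $\tfrac{\rho}{2}\|\boldsymbol{w}_g - \boldsymbol{w} + \boldsymbol{\mu}_g\|_2^2$.

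Consequently, the server's subproblem collapses to the unconstrained quadratic program $\min_{\boldsymbol{w}} \sum_{g=1}^{G} \alpha_g \tfrac{\rho}{2}\|\boldsymbol{w}_g^{\ast} - \boldsymbol{w} + \boldsymbol{\mu}_g^{\ast}\|_2^2$. This objective is strictly convex in $\boldsymbol{w}$: its Hessian is $\rho\big(\sum_{g=1}^{G}\alpha_g\big) I = \rho I \succ 0$, where the simplification uses $\alpha_g \geq 0$ and $\sum_{g=1}^{G}\alpha_g = 1$ from the MoWB definition \eqref{eq:global_amb_set}. Strict convexity guarantees that the first-order stationarity condition is both necessary and sufficient and pins down the unique global minimizer.

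I would then set the gradient to zero, namely $-\rho \sum_{g=1}^{G} \alpha_g (\boldsymbol{w}_g^{\ast} - \boldsymbol{w} + \boldsymbol{\mu}_g^{\ast}) = \boldsymbol{0}$. Collecting the $\boldsymbol{w}$ terms and again invoking $\sum_{g=1}^{G}\alpha_g = 1$ yields $\boldsymbol{w}^{\ast} = \sum_{g=1}^{G} \alpha_g(\boldsymbol{w}_g^{\ast} + \boldsymbol{\mu}_g^{\ast})$, which is exactly the claimed closed form.

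There is no substantive obstacle here; the only points requiring care are (i) confirming that the supremum term carries no dependence on the consensus variable $\boldsymbol{w}$, so that it drops out of the minimization, and (ii) invoking the probability-simplex constraint $\sum_{g=1}^{G}\alpha_g = 1$, which is what makes the aggregation a genuine weighted average and reduces the Hessian to $\rho I$, thereby certifying the stationary point as the global optimum.
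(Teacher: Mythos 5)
Your proposal is correct and follows essentially the same route as the paper's proof: both isolate the quadratic penalty as the only $\boldsymbol{w}$-dependent term of the augmented Lagrangian, certify a unique minimizer via (strong/strict) convexity, set the gradient to zero, and invoke $\sum_{g=1}^{G}\alpha_g = 1$ to obtain the weighted-average closed form. Your explicit Hessian computation $\rho\bigl(\sum_{g=1}^{G}\alpha_g\bigr)I = \rho I$ is a slightly cleaner justification of uniqueness than the paper's appeal to a sum of strongly convex terms, but the argument is the same.
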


\begin{proof}
    Proof is provided in Appendix \ref{proof:admm_server_up}.
\end{proof}

The server then broadcasts $\boldsymbol{w}^{\ast}$ to the clients, where they update their Lagrange multipliers and the process repeats. We provide the pseudocode for the ADMM algorithm in \ref{alg:admm}.
\begin{algorithm}[ht]
    \caption{ADMM/ADMM-SC Algorithm}
    \smaller
    \label{alg:admm}
    \textbf{Input:} $\boldsymbol{w}^{(0)}$, $\boldsymbol{w}_g^{(0)}$, $\boldsymbol{\mu}_g^{(0)}$\\
    \textbf{Parameters:} Number of rounds $T$, scale parameter $\rho$\\
    \textbf{Output:} $\boldsymbol{w}^{\ast}$
    \begin{algorithmic}[1]
        \FOR{$t = 1, \dots, T$}
        \STATE \textbf{Client Update:}
        \FOR{clients $g = 1, \dots, G$}
        \STATE Solve for $\boldsymbol{w}_g^{(t+1)} \leftarrow \boldsymbol{w}_g^{\ast} \text{ minimizer of } J_g(\boldsymbol{w}^{(t)}, \boldsymbol{\mu}_g^{(t)})$ \eqref{eq:admm_2d} (or  $J_g^{\text{SC}}(\boldsymbol{w}^{(t)}, \boldsymbol{\mu}_g^{(t)})$ in Appendix \ref{sec:admm_sc} for ADMM-SC) 
        \STATE Send $\boldsymbol{w}_g^{(t+1)}$ to central server
        \ENDFOR
        \STATE \textbf{Server Update:}
        \STATE Update $\boldsymbol{w}^{(t+1)} \leftarrow \sum_{g=1}^G \alpha_g (\boldsymbol{w}_g^{(t+1)} + \boldsymbol{\mu}_g^{(t)})$
        \STATE Broadcast $\boldsymbol{w}^{(t+1)}$ to all clients
        \STATE \textbf{Client Update:}
        \FOR{clients $g = 1, \dots, G$}
        \STATE  $\boldsymbol{\mu}_g^{(t+1)} \leftarrow \boldsymbol{\mu}_g^{(t)} + \boldsymbol{w}_g^{(t+1)} - \boldsymbol{w}^{(t+1)}$
        \ENDFOR
        \ENDFOR
    \end{algorithmic}
\end{algorithm}

\textbf{Convergence.} Even if the objective function is closed and proper convex as we show in Lemma \ref{lem:clo_pro_con} in Appendix \ref{sec:lemmas}, it has been established in the literature that the global convergence of the multi-block (i.e. $G \geq 3$) ADMM algorithm is generally not guaranteed. Indeed, a counterexample is presented by \cite{chen2016} demonstrating that the multi-block ADMM with a separable convex objective function can fail to converge. However, the convergence of multi-block ADMM in practical cases such as \citep{tao2011} has motivated works to investigate conditions under which it is guaranteed convergence \citep{deng2017,lin2015}. In Theorem \ref{thm:admm_conv_cri}, we introduce an additional strongly convex term to be added to each client's objective, and we denote the ADMM algorithm with strongly convex client objectives as ADMM-SC. We then show that ADMM-SC indeed converges as it obeys the criteria given by \cite{lin2015}. Subsequently, we present worst-case time complexity of ADMM-SC in Theorem \ref{thm:admm_conv}, showing that it too converges in polynomial time, but requires fewer communication rounds than the SM algorithm.
\begin{theorem}[ADMM-SC Convergence]\label{thm:admm_conv_cri} Suppose the local client problem in \eqref{eq:admm_2d} is modified by adding a $\tau_g||\boldsymbol{w}_g||_2^2$ term to the objective function, where $\tau_g$ is a user-defined hyperparameter, resulting in the modified client problem with a strongly convex objective $J_g^{\text{SC}}(\boldsymbol{w}^{(t)}, \boldsymbol{\mu}_g^{(t)})$ in Appendix \ref{sec:admm_sc}. Suppose further that the ADMM-SC algorithm in \ref{alg:admm} is used to train the FDR-SVM with the modified objective. Then, ADMM-SC converges to an optimal solution $\boldsymbol{w}^{\ast}$ of the modified problem with arbitrary tolerance $\epsilon_1 > 0$ if $\rho \leq \min_{g=1,\dots,G-1}\Bigg \{ \frac{4\alpha_g \tau_g}{g(2G+1-g)},\frac{4\alpha_G \tau_G}{(G-1)(G+2)}\Bigg \}$.
\end{theorem}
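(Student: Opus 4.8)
The plan is to recognize the regularized consensus problem as an instance of the multi-block linearly-constrained convex program analyzed by \cite{lin2015}, and then to show that the added strong convexity makes its sufficient condition for global convergence hold, with the stated bound on $\rho$ being exactly the resulting threshold. First I would write problem \eqref{eq:admm_orig_obj} augmented by the $\tau_g||\boldsymbol{w}_g||_2^2$ terms in the canonical form $\min \sum_{g=1}^{G} f_g(\boldsymbol{w}_g) + f_0(\boldsymbol{w})$ subject to the consensus constraints $\boldsymbol{w}_g - \boldsymbol{w} = \boldsymbol{0}$ for all $g \in [G]$, identifying the $G+1$ blocks $\boldsymbol{w}_1, \dots, \boldsymbol{w}_G, \boldsymbol{w}$, their constraint matrices (each $A_g$ a block selector with $A_g^\top A_g = I$ and $B = -\mathbf{1}\otimes I$ for the global block, so that $B^\top B = G\,I$), and the objective pieces $f_g(\boldsymbol{w}_g) = \alpha_g[\sup_{\mathbb{Q}_g \in \mathcal{A}^{(g)}_{\varepsilon_g,1,d}(\Xi)}\mathbb{E}^{\mathbb{Q}_g}[\ell_H(\boldsymbol{w}_g;\boldsymbol{\xi})] + \tau_g||\boldsymbol{w}_g||_2^2]$ and $f_0 \equiv 0$. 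The crucial structural observation is that the regularizer makes each $f_g$ strongly convex with modulus $2\alpha_g\tau_g$, while the global block remains the single non-strongly-convex block permitted by the framework of \cite{lin2015}.

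Next I would verify, one by one, the hypotheses of that convergence theorem: each $f_g$ is closed, proper, and convex (this is Lemma \ref{lem:clo_pro_con}); each $f_g$ for $g \in [G]$ is strongly convex with the modulus just computed; each constraint matrix has full column rank so that a saddle point of the Lagrangian exists. With these in place, the theorem guarantees convergence of the Gauss--Seidel ADMM iterations provided $\rho$ lies below an explicit threshold determined by the strong convexity moduli and the spectral data of the constraint matrices. Because the client updates in Algorithm \ref{alg:admm} are mutually independent---each $\boldsymbol{w}_g$ enters only its own consensus constraint, so $A_g^\top A_{g'} = \mathbf{0}$ for $g \neq g'$---their simultaneous update coincides with the sequential sweep assumed by \cite{lin2015}, placing the algorithm as stated within the theorem's scope.

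The remaining and most delicate step is to evaluate this threshold explicitly and show that it reduces to the minimum in the statement. I would substitute $||A_g||^2 = 1$ and the moduli $2\alpha_g\tau_g$ into the per-block conditions of \cite{lin2015} and carry out the bookkeeping of the cross-block coupling terms that accumulate across the sweep $g = 1, \dots, G$; this accumulation is what produces the quadratic coefficients $g(2G+1-g)$, while the factor of $2$ in the numerator $4\alpha_g\tau_g$ tracks back to the modulus $2\alpha_g\tau_g$. The distinguished treatment of the final client block against the free global variable produces the coefficient $(G-1)(G+2)$, which is a useful consistency check since it coincides with the generic expression $g(2G+1-g)$ evaluated at $g = G-1$. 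The hard part will be exactly this combinatorial accounting: correctly matching the block ordering, attributing the per-iteration error contributions to the right block, and confirming that the minimum over the $G$ resulting per-block thresholds is the binding constraint, so that enforcing $\rho$ below it certifies the monotone decrease of the Lyapunov function of \cite{lin2015} and hence convergence to an optimal $\boldsymbol{w}^{\ast}$ within the tolerance $\epsilon_1$.
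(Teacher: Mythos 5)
Your proposal follows essentially the same route as the paper's own proof: cast the regularized consensus problem in the canonical multi-block form of \cite{lin2015}, identify the client blocks as the $G$ strongly convex blocks with moduli $2\alpha_g\tau_g$ and the server as the lone non-strongly-convex block, identify the block constraint matrices, and read the stated bound on $\rho$ off the condition in Theorem~3.3 of \cite{lin2015}—a final substitution step the paper likewise asserts rather than carries out in detail. Your additional observation that the parallel client updates of Algorithm~\ref{alg:admm} coincide with the sequential Gauss--Seidel sweep because $A_g^{\mathsf{T}}A_{g'}=\boldsymbol{0}$ for $g\neq g'$ is correct and in fact makes explicit a point the paper leaves implicit.
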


\begin{proof}
    Proof is provided in Appendix \ref{proof:admm_conv_cri}.
\end{proof}

\begin{remark}\label{rem:reg}
    The additional $\tau_g||\boldsymbol{w}_g||_2^2$ terms are redundant regularization terms, potentially impacting the performance of the final model as shown empirically in Section \ref{sec:num_exp}.
 \end{remark}

\begin{theorem}[ADMM-SC Time Complexity]\label{thm:admm_conv}
    Suppose that the $\ell_1$-norm is used in the strongly convex variant of the local model problem \eqref{eq:admm_2d}, and that it is solved via the barrier method with the log barrier function and Newton updates. Then, the ADMM-SC algorithm \ref{alg:admm} equipped with $\rho$ chosen according to Theorem \ref{thm:admm_conv_cri} has an overall worst-case time complexity of $\mathcal{O}\left (\epsilon_1^{-1} \left [(N_{g^{\ast}}+P)^{3.5}\log(\epsilon_2^{-1}) + GP \right ]\right )$ (with $\mathcal{O}(\epsilon_1^{-1})$ communication rounds), where $\epsilon_1$, $\epsilon_2 > 0$ are tolerances on the solutions of ADMM and the strongly convex variant of the local problem \eqref{eq:admm_2d}, respectively, and $N_{g^{\ast}}$ is the greatest number of samples at any client.
\end{theorem}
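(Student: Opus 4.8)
The plan is to factor the total runtime as (number of communication rounds) $\times$ (per-round cost) and bound each factor separately, then multiply. I would first handle the round count. Since Theorem~\ref{thm:admm_conv_cri} has already verified that the ADMM-SC iterates satisfy the structural conditions of \cite{lin2015} for multi-block ADMM with strongly convex blocks, I would invoke the associated $\mathcal{O}(1/T)$ sublinear decay of the objective (and the consensus residual) along the ergodic sequence. Consequently, driving the ADMM suboptimality below the tolerance $\epsilon_1$ requires $T = \mathcal{O}(\epsilon_1^{-1})$ rounds, which is exactly the claimed communication-round count and, as noted in the text, fewer than the $\mathcal{O}(\epsilon_1^{-2})$ rounds of the SM algorithm.

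Next I would bound the per-round client cost by casting the strongly convex variant of \eqref{eq:admm_2d} as a convex quadratic program and counting its variables and constraints. The decision variables are $\boldsymbol{w}_g \in \mathbb{R}^P$, the scalar $\lambda_g$, and the $N_g$ slacks $s_{n_g}$, for a total of $n = P + 1 + N_g = \mathcal{O}(N_g + P)$. Because the $\ell_1$-norm is used in the transportation cost \eqref{eq:cost_func}, its dual is the $\ell_\infty$-norm, so the constraint $\lambda_g \geq \|\boldsymbol{w}_g\|_\ast$ expands into $2P$ linear inequalities; each hinge-loss epigraph constraint (for both the true and flipped labels) splits into two linear inequalities, contributing $\mathcal{O}(N_g)$ more. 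Thus the program has $m = \mathcal{O}(N_g + P)$ inequality constraints, a quadratic objective (from the proximal $\frac{\rho}{2}\|\cdot\|_2^2$ and strongly convex $\tau_g\|\boldsymbol{w}_g\|_2^2$ terms) and linear constraints.

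I would then apply the standard self-concordant barrier analysis to this QP. The log-barrier built from the $m$ inequalities has complexity parameter $\nu = \mathcal{O}(m) = \mathcal{O}(N_g + P)$, so the barrier method with Newton centering reaches tolerance $\epsilon_2$ in $\mathcal{O}(\sqrt{\nu}\,\log(\epsilon_2^{-1})) = \mathcal{O}(\sqrt{N_g + P}\,\log(\epsilon_2^{-1}))$ Newton steps; each step solves a KKT linear system of dimension $\mathcal{O}(N_g + P)$ at cost $\mathcal{O}((N_g + P)^3)$. Multiplying gives $\mathcal{O}((N_g + P)^{3.5}\log(\epsilon_2^{-1}))$ per client. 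Under synchronous training (Assumption~\ref{assump:sync_train}) the client stage is paced by the slowest client $g^\ast$, so this stage costs $\mathcal{O}((N_{g^\ast} + P)^{3.5}\log(\epsilon_2^{-1}))$, while the server aggregation $\boldsymbol{w}^\ast = \sum_{g=1}^G \alpha_g(\boldsymbol{w}_g^\ast + \boldsymbol{\mu}_g^\ast)$ and the $G$ multiplier updates together cost $\mathcal{O}(GP)$. Summing yields a per-round cost of $\mathcal{O}((N_{g^\ast} + P)^{3.5}\log(\epsilon_2^{-1}) + GP)$, and multiplying by the $\mathcal{O}(\epsilon_1^{-1})$ rounds produces the stated complexity.

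I expect the main obstacle to be pinning down the \emph{rate} in the round count rather than mere convergence: Theorem~\ref{thm:admm_conv_cri} guarantees convergence but not a rate, so I must track the potential/Lyapunov-function argument of \cite{lin2015} carefully enough to certify the $\mathcal{O}(1/T)$ decay (as opposed to a purely asymptotic statement), since the round count hinges entirely on this. A secondary subtlety is keeping the variable and constraint counts tight, so that the barrier parameter is $\Theta(N_g + P)$ and the $\sqrt{\nu}$ factor is not inflated; this is where the choice of the $\ell_1$-norm matters, as it keeps the norm constraint to $2P$ linear inequalities rather than introducing conic blocks that would change the barrier complexity.
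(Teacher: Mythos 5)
Your proposal is correct and follows essentially the same route as the paper's proof: count $\mathcal{O}(N_{g}+P)$ decision variables and constraints in the strongly convex local problem, apply the barrier-method bound to get $\mathcal{O}\left((N_{g}+P)^{3.5}\log(\epsilon_2^{-1})\right)$ per client per round, note that synchrony and parallelism make the slowest client $N_{g^{\ast}}$ dominate, add the $\mathcal{O}(GP)$ server aggregation, and multiply by the $\mathcal{O}(\epsilon_1^{-1})$ rounds from the sublinear rate of \cite{lin2015} under the strong convexity secured by Theorem \ref{thm:admm_conv_cri} (your flagged concern about the rate is handled in the paper by directly citing that result). The only divergence is immaterial: the paper takes the $\ell_1$-norm to be the dual norm $||\cdot||_{\ast}$ appearing in the constraint $\lambda_g \geq ||\boldsymbol{w}_g||_{\ast}$ (hence its count of $N_g+2P+3$ variables with auxiliary epigraph variables, and its QCQP label), whereas you place $\ell_1$ in the transportation cost so the constraint uses $\ell_{\infty}$ with $2P$ inequalities and no auxiliary variables --- both readings give $\Theta(N_g+P)$ variables and constraints and the same $(N_g+P)^{3.5}$ factor.
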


\begin{proof}
    Proof is provided in Appendix \ref{proof:admm_conv}.
\end{proof}
\section{Numerical Experiments}\label{sec:num_exp}
We discuss numerical experiments that examine the performance of our algorithms and compare them to SOTA baselines. Unless otherwise stated, all numbers reported are averages over 50 repetitions with randomized data shuffling, and all error bars or confidence intervals represent one standard deviation. We use equivalent client weights $\alpha_g$ and local hyperparameters for all clients. Additional experimental details and a \textbf{scalability experiment} are provided in Appendix \ref{app:exp}. All code is available at \href{https://github.com/mibrahim41/FDR-SVM}{this link}.

\textbf{Our Methods.} Our algorithms include: i) \textit{SM}: the FDR-SVM model trained via the SM algorithm in \ref{alg:subgrad} with a diminishing step-size, ii) \textit{ADMM}: the FDR-SVM model trained via the ADMM algorithm in \ref{alg:admm}, and iii) \textit{ADMM-SC}: the FDR-SVM model with modified client objectives according to Theorem \ref{thm:admm_conv_cri}, trained via the ADMM-SC algorithm in \ref{alg:admm}.

\subsection{UCI Data Experiment}\label{sec:real}
This experiment compares the performance of our methods to various SOTA benchmarks. We use $G=4$ clients for all datasets. Performance is measured in terms of F-1 score. 

\textbf{Datasets.} We utilize 7 popular dataset from the UCI repository. For all datasets $70\%$ of the samples are used for training and the remainder is used for testing. 

\textbf{Baselines.} We use the DR SVM model by \cite{2019regularization} as a centralized benchmark model. For federated baselines, we compare to the popular \texttt{FedSGD}, \texttt{FedAvg} \citep{mcmahan2017}, and \texttt{FedProx} \citep{li2020} used to train an $\ell_2$-squared regularized SVM. We also compare to \texttt{FedDRP} \citep{Khanduri2023} used to train a DR-SVM with a KL divergence ambiguity set.

\textbf{Hyperparameters.} We tune the Wasserstein radius $\varepsilon$ and label-flipping cost $\kappa$ for the centralized baseline, and the initial learning rate $\gamma(0)$ and number of rounds $T$ for all federated baselines. We also tune the number of rounds $T$ and the hyperparameters $\rho$ and $\gamma$ for our methods. We utilize 5-fold cross-validation for hyperparameter tuning.

\begin{table*}[ht]
\centering
\caption{F-1 Score Attained by Classification Models on 7 UCI Datasets.}
\label{tab:real_world}
\smaller
\begin{tabular}{lccccccc}
\toprule
Model    & \multicolumn{1}{c}{Banknote} & \multicolumn{1}{c}{BCW} & \multicolumn{1}{c}{CB} & \multicolumn{1}{c}{MM} & \multicolumn{1}{c}{Parkinson's} & \multicolumn{1}{c}{Rice} & \multicolumn{1}{c}{UKM} \\ \midrule
Central (DR-SVM)  &$\mathbf{.950}\pm .011$&$.964 \pm .013$&$.773 \pm .052$& $.792 \pm .017$&$.904 \pm .025$&$\mathbf{.938} \pm .005$& $.845 \pm .027$  \\ \midrule
FedSGD ($\ell_2$-SVM)  &$\mathbf{.950} \pm .011$&$.914 \pm .019$&$.765 \pm .045$&$.624 \pm .161$&$.752 \pm .204$&$.856 \pm .013$& $.808 \pm .019$\\
FedAvg ($\ell_2$-SVM) &$\mathbf{.950} \pm .011$&$.929 \pm .016$&$.788 \pm .051$&$.787 \pm .024$&$.816 \pm .132$&$.936 \pm .006$& $.847 \pm .027$ \\
FedProx ($\ell_2$-SVM) &$\mathbf{.950} \pm .011$&$.929 \pm .019$&$.780 \pm .075$&$.782 \pm .052$&$.735 \pm .210$&$.931 \pm .006$& $.847 \pm .028$ \\ 
FedDRO (KL) &$.945 \pm .011$& $.925 \pm .019$ & $.738 \pm .036$ &$.783 \pm .019$&$.864 \pm .025$& $.859 \pm .009$&$.718 \pm .001$ \\
\midrule
SM (FDR-SVM)  &$.855 \pm .017$&$.957 \pm .015$&$.769 \pm .054$&$.797 \pm .023$&$\mathbf{.920} \pm .023$&$.936 \pm .006$& $.840 \pm .031$    \\
ADMM (FDR-SVM)  &$\mathbf{.950} \pm .011$&$\mathbf{.967} \pm .014$&$\mathbf{.792} \pm .047$&$\mathbf{.798} \pm .017$&$.911 \pm .021$&$\mathbf{.938} \pm .005$& $\mathbf{.848} \pm .027$ \\
ADMM-SC (FDR-SVM)  &$\mathbf{.950} \pm .011$&$.966 \pm .014$&$.765 \pm .048$&$.797 \pm .019$&$.902 \pm .026$&$\mathbf{.938} \pm .005$& $.846 \pm .027$\\
\bottomrule
\end{tabular}
\end{table*}

\textbf{Results.} Table \ref{tab:real_world} presents the performance achieved by each model. Our proposed models consistently outperform the federated benchmark models on most datasets, often by a substantial margin. This underscores the value of DR in modeling uncertainty, and the benefits of using algorithms specifically designed for the FDR-SVM problem. We note that one or more of our FDR-SVM algorithms attains the highest F-1 score for all datasets. Additionally, the ADMM algorithm generally outperforms SM algorithm on most datasets, except for Parkinson's, which suggests that ADMM often converges in practice, in many settings, even if theoretical convergence is not guaranteed. 

We also observe that in some cases, ADMM-SC performs much worse than ADMM (e.g., on BCW and UKM) but can also closely match its performance (e.g., on Banknote, MM, and Rice). This suggests that pursuing guaranteed theoretical convergence comes at the cost of stronger regularization, and thus, potentially weaker performance. One notable observation is that the ADMM or SM algorithms can sometimes outperform the centralized model. This suggests that our proposed MoWB ambiguity set can \textbf{outperform} the classical Wasserstein ball in modeling uncertainty in some settings as hypothesized in Remark \ref{remark:amb_set}. Finally, we note that \texttt{FedAvg} and \texttt{FedProx} failed to consistently converge for the Rice dataset despite extensive hyperparameter tuning and a diminishing learning rate. This suggests a lack of stability potentially due to the non-smoothness of the hinge loss, which further highlights the benefits of our algorithms. We highlight through a one-sided Wilcoxon singed-rank test in Appendix \ref{app:stat_sig} that performance improvements offered by our model are statistically significant.

\subsection{Industrial Data Experiment}\label{sec:sens}
We utilize industrial data from degrading pumps to examine the performance of our models. We explore 5 settings: i) nominal: training data is distributed evenly across clients and classes, ii) client imbalance: training data distribution across clients is $[70\%, 15\%, 10\%, 5\%]$, iii) class imbalance: training data distribution across classes is $[90\%, 10\%]$, iv) client+class imbalance: a combination of the previous two settings, and v) noisy labels: $15\%$ of the training labels are flipped. This experiment contains two distinct components: 1) a sensitivity analysis, and 2) a benchmarking study. Performance is evaluated in terms of mean correct classification rate (mCCR) and F-1 score in the sensitivity analysis and benchmarking study, respectively.

\textbf{Dataset.} We utilize industrial data generated via a physics-driven pump model \citep{Mathworks}. The data contains healthy and leak fault classes. Client heterogeneity is simulated by generating different fault severities per client. We use $G=4$, $P=14$, $N=400$, and $N_{Test}=1000$ test samples. The test set contains 500 healthy samples, and 125 samples from each of the 4 fault severities. 

\subsubsection{Sensitivity Analysis}
\textbf{Baseline.} We compare our models to the the central DR-SVM benchmark by \cite{2019regularization}.

\textbf{Hyperparameters.} In this part of the experiment, we plot each algorithm's performance as a function of its hyperparameters. We examine global and local hyperparameters, and vary each of them separately. The global ones are initial step-size $\gamma$ for the SM algorithm, scale parameter $\rho$ for the ADMM algorithms, and total number of rounds $T$ for all algorithms. The local hyperparameters are the label flipping cost $\kappa_g$, and the local Wasserstein ball radius factor $\beta_g$, where the radius is $\varepsilon_g = \frac{1}{\beta_g}{N_g}$. This is used as a simplifying heuristic to relate the radius to the number of training samples. We also vary the central's $\varepsilon$ and $\kappa$, showing only the best performance as a benchmark line on the plots.

\begin{figure*}[ht]
    \centering
    \includegraphics[width=1\textwidth]{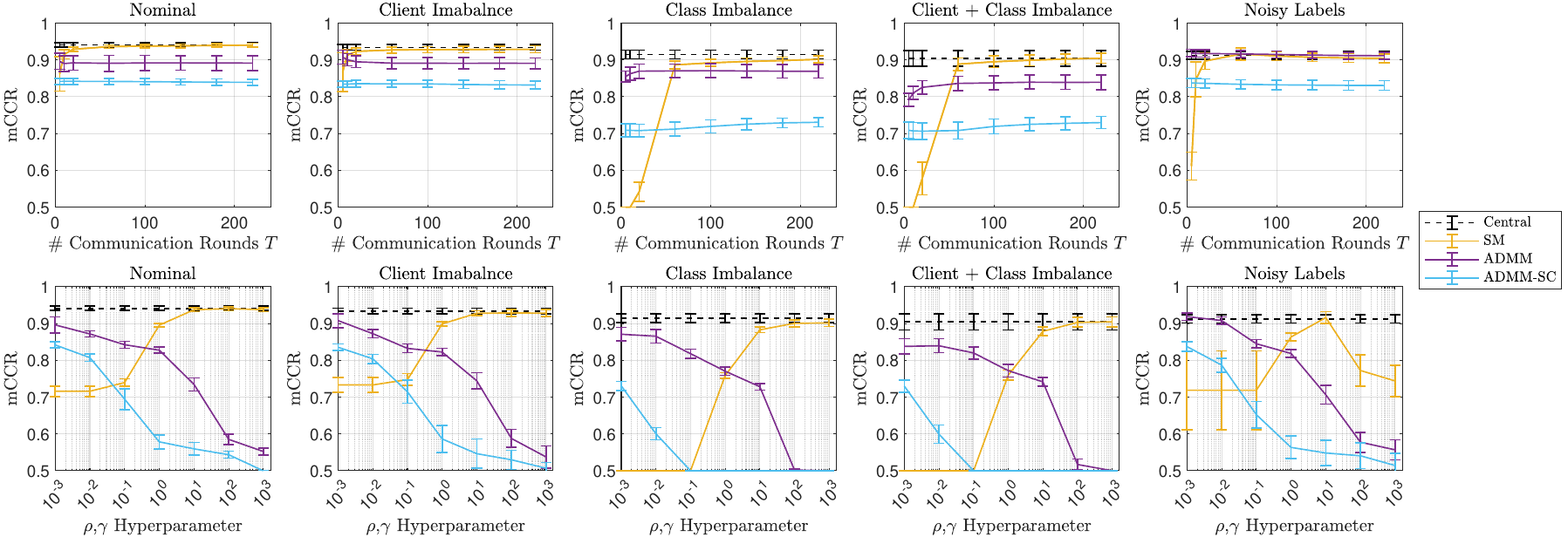}
    \caption{mCCR vs. the Global Hyperparameters, Comparing Our Proposed Methods to the Best-Performing Central Model.}
    \label{fig:global_imb}
\end{figure*}

\begin{figure*}[ht]
    \centering
    \includegraphics[width=1\textwidth]{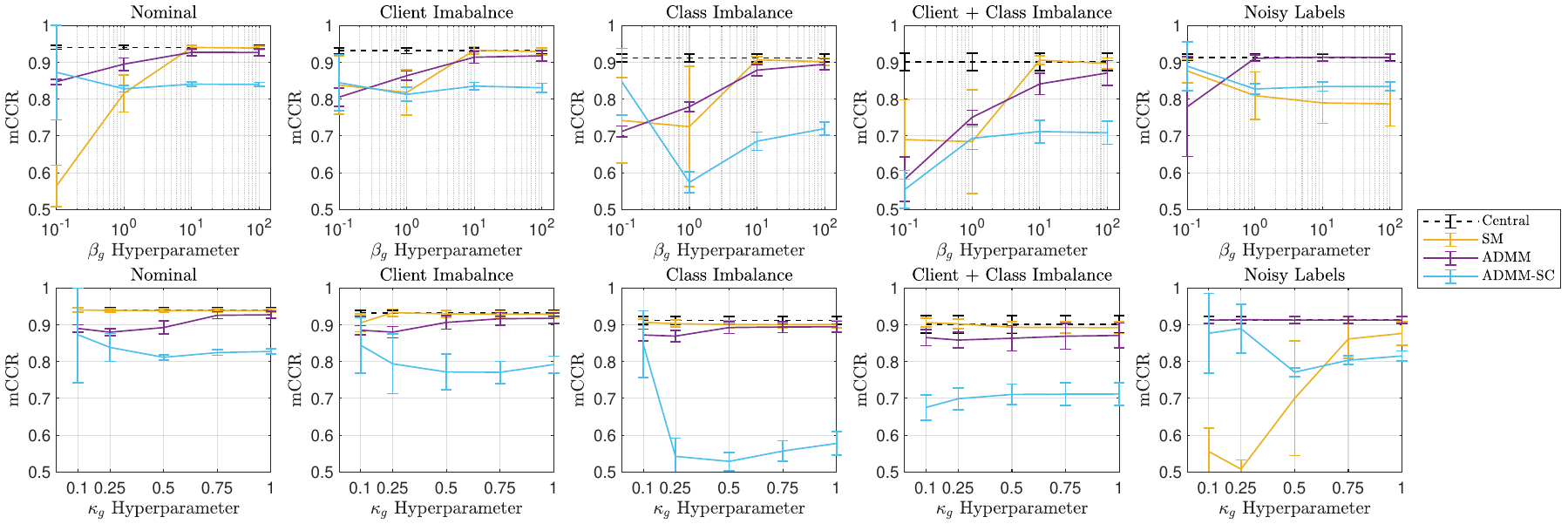}
    \caption{mCCR vs. the Local Hyperparameters, Comparing Our Proposed Methods to the Best-Performing Central Model.}
    \label{fig:local_imb}
\end{figure*}

\textbf{Results.} The \textit{global hyperparameters} effects are shown in Figure \ref{fig:global_imb}. The SM often obtains a higher peak performance in most settings than ADMM, however, it can require more communication rounds to do so. This is highlighted in the `class' imbalance and `client + class' imbalance settings. The SM algorithm is also relatively stable to the choice of $\gamma$, and maintains peak performance across a wide range of values. However, the ADMM algorithm is sensitive to $\rho$, with performance rapidly decreasing as $\rho$ increases. This suggests that ADMM may require more involved global hyperparameter tuning in practice, but can achieve its peak performance in fewer communication rounds. As hypothesized in Remark \ref{rem:reg}, we observe that ADMM largely outperforms ADMM-SC
due to the additional strongly convex regularization terms. Finally, we also observe that SM and ADMM slightly outperform the best-performing central model in the noisy labels case. This can likely be attributed to our novel ambiguity set's improved uncertainty modeling capability.

The \textit{local hyperparameter} effects are shown in Figure \ref{fig:local_imb}. Generally, model performance improves as the radius of the local Wasserstein balls decreases (by increasing $\beta_g$). This suggests that performance degrades with larger local Wasserstein balls due to over-conservatism. However, in noisy labels settings, performance of the SM model deteriorates as the local radius decreases. This suggests the need for larger local ambiguity sets to adequately capture label uncertainty. We also observe that the SM model is highly sensitive to the local radius and $\kappa_g$ in noisy label settings, whereas the ADMM achieves its best performance across a broader range of hyperparameter values. This suggests the need for local hyperparameter fine tuning if SM is used in an application with highly uncertain labels. Moreover, it can be seen that in all other settings, ADMM's performance tends to improve as $\kappa_g$ increases, which is to be expected, since lower $\kappa_g$ implies greater anticipation of label uncertainty, and thus over-conservatism. Similar to our observation in the global hyperparameter experiments, we again observe the suboptimality of the ADMM-SC, which underscores the sacrifice in model performance that is associated with enforcing guaranteed convergence.

\subsubsection{Benchmarking}
\textbf{Baselines.} We utilize the same benchmark models utilized in the UCI data experiment.

\textbf{Hyperparameters.} We tune the same global and local hyperparamaters discussed in the sensitivity analysis of the industrial data experiment. However, we use 5-fold cross-validation for hyperparameter tuning, and we tune both the global and local hyperparameters simultaneously.

\textbf{Results.} Table \ref{tab:industrial} shows the results of this study, which are averaged over 10 repetitions. As in the UCI data experiment, we observe that one of our methods obtains the best performance out of all federated approaches for all the settings tested. This underscores the practical impact of our proposed model and its solution algorithms in federated classification problems. Unlike the UCI data experiment, however, we observe that the SM algorithm is the peak performer in most settings in this experiment. This suggests that algorithm choice should be influenced by the dataset under study among other factors. Finally, we observe that for this dataset ADMM-SC is largely outperformed by ADMM. This again provides an example where opting for theoretically guaranteed convergence may come at a sacrifice in model accuracy due to redundant regularization.

\begin{table*}[ht]
\centering
\caption{F-1 Score Attained by Classification Models on Industrial Dataset in 5 Settings.}
\label{tab:industrial}
\smaller
\begin{tabular}{lccccc}
\toprule
Model    & \multicolumn{1}{c}{Nominal} & \multicolumn{1}{c}{Client Imbalance} & \multicolumn{1}{c}{Class Imbalance} & \multicolumn{1}{c}{Client + Class Imbalance} & \multicolumn{1}{c}{Noisy Labels} \\ \midrule
Central (DR-SVM)  &$.939\pm .004$&$\mathbf{.930} \pm .012$&$\mathbf{.903} \pm .012$& $\mathbf{.901} \pm .017$&$\mathbf{.908} \pm .011$ \\ \midrule
FedSGD ($\ell_2$-SVM)  &$.886 \pm .008$&$.887 \pm .007$&$.675 \pm .014$&$.685 \pm .030$&$.861 \pm .009$\\
FedAvg ($\ell_2$-SVM) &$.923 \pm .006$&$.919 \pm .010$&$.866 \pm .018$&$.845 \pm .059$&$.894 \pm .017$ \\
FedProx ($\ell_2$-SVM) &$.926 \pm .010$&$.919 \pm .011$&$.862 \pm .019$&$.842 \pm .058$&$.894 \pm .019$ \\ 
FedDRO (KL) &$.913 \pm .007$& $.914 \pm .010$ & $.858 \pm .014$ &$.835 \pm .052$&$.883 \pm .012$ \\
\midrule
SM (FDR-SVM)  &$\mathbf{.942} \pm .006$&$\mathbf{.930} \pm .010$&$\mathbf{.883} \pm .022$&$\mathbf{.879} \pm .035$&$.894 \pm .015$    \\
ADMM (FDR-SVM)  &$.918 \pm .010$&$.910 \pm .028$&$.868 \pm .018$&$.855 \pm .025$&$\mathbf{.903} \pm .011$ \\
ADMM-SC (FDR-SVM)  &$.817 \pm .009$&$.819 \pm .011$&$.638 \pm .020$&$.627 \pm .028$&$.806 \pm .013$\\
\bottomrule
\end{tabular}
\end{table*}
\section{Conclusions}\label{sec:conc}
We propose an FDR-SVM--a classifier that is \textit{distributionally robust} to uncertainty in training data features and labels, and can be trained in a \textit{federated} fashion. To that end, we propose a novel MoWB ambiguity set, extending the classical Wasserstein ball to the federated setting. We also demonstrate that it exhibits desirable out-of-sample guarantees, and that it allows for problem separability. We then rigorously derive two different algorithms to train our proposed model and analyze their convergence behavior. Finally, we evaluate the performance of our proposed model using various datasets, demonstrating that it frequently outperforms existing methods. Future extensions could utilize the MoWB to robustify other FL models, or explore convergence behavior of our algorithms with partial client participation.

\begin{acknowledgements} 
We thank the referees and the AC for their valuable input, which allowed us to improve the presentation and content of our paper. 

Michael Ibrahim, Heraldo Rozas, and Nagi Gebraeel were supported by the National Aeronautics and Space Administration (NASA), Space Technology Research Institute (STRI)  Habitats  Optimized  for  Missions  of Exploration (HOME) ‘SmartHab’ Project (grant No. 80NSSC19K1052). Weijun Xie was supported in part by National Science Foundation (NSF) grant No. 2246414 and Office of Naval Research (ONR) grant No. N00014-24-1-2066.
\end{acknowledgements}

\bibliography{references}

\newpage
\onecolumn
\title{FDR-SVM: A Federated Distributionally Robust Support Vector Machine via a Mixture of Wasserstein Balls Ambiguity Set\\(Supplementary Material)}
\maketitle
\appendix
\addtocontents{toc}{\protect\setcounter{tocdepth}{3}}

{\small \tableofcontents}
\newpage

\section{Additional Background on Wasserstein DRO} \label{app:bg}
Distributionally robust optimization has been recently popularized as an intermediate approach between stochastic programming (SP) \citep{shapiro_stochprog} and robust optimization (RO) \citep{bental_ro}. Indeed, it can be viewed as a stochastic programming problem where the true distribution $\mathbb{P}$ governing the data is unknown. Alternatively, it can be seen as a robust optimization problem where worst-case perturbations of the data distribution are modeled rather than those of individual data points. This makes DRO attractive as it is a method of modeling the uncertainty without requiring knowledge of the true distribution $\mathbb{P}$ (like in SP) or potentially being overly conservative (like in RO) \citep{bertsimas2004}. DRO relies on defining an ambiguity set $\mathcal{A}$ of distributions, and subsequently minimizing the worst-case risk attained by any distribution $\mathbb{Q}$ within the ambiguity set $\mathcal{A}$. There have been various different methods of defining the ambiguity set in the literature. This includes moment-based methods \citep{delage2010}, which use certain moment properties to define the set, and distance-based methods \citep{bayraksan2015,kuhn2019wasserstein}, which define the set as a sphere centered at some reference distribution, and whose radius is in the sense of some distance measure. Commonly used measures include $\phi$-divergences (such as KL divergence) \citep{bayraksan2015} and the Wasserstein distance \citep{kuhn2019wasserstein}. Moreover, in most Machine Learning problems, the reference distribution is taken to be the empirical distribution $\widehat{\mathbb{P}}_N$ of the $N$ training data samples.

In our work, we focus on ambiguity sets defined via the type-1 Wasserstein distance. This is because Wasserstein DRO offers many desirable advantages over its counterparts, as demonstrated by \cite{kuhn2019wasserstein}. For example, the Wasserstein ambiguity set can contain both discrete and continuous distributions regardless of the structure of the empirical distribution, which cannot be achieved by the KL divergence ambiguity set. Moreover, one can derive out-of-sample performance guarantees using concentration inequalities when using a Wasserstein ambiguity set, which cannot be achieved in moment-based approaches. The type-1 Wasserstein $W_{d,1}$ distance \citep{kant1958} is commonly referred to as optimal transport metric or earth mover’s distance. This is because of its interpretation as the minimum cost of transforming a distribution $\mathbb{Q}$ to $\mathbb{Q}^{\prime}$. Therefore, it utilizes a transportation cost function $d(\boldsymbol{\xi},\boldsymbol{\xi}^{\prime})$ to define the transportation cost function per unit mass from point $\boldsymbol{\xi}$ to point $\boldsymbol{\xi}^{\prime}$. We can express the type-1 Wasserstein distance mathematically as follows.
\begin{equation*}
    W_{d,1}(\mathbb{Q},\mathbb{Q}') \coloneqq \inf_{\pi \in \Pi(\mathbb{Q},\mathbb{Q}')} \int_{\Xi \times \Xi} d \left( \boldsymbol{\xi}, \boldsymbol{\xi}^{\prime} \right ) \pi(\text{d} \boldsymbol{\xi}, \text{d} \boldsymbol{\xi}^{\prime}),
\end{equation*}
where $d(\boldsymbol{\xi},\boldsymbol{\xi}^{\prime})$ denotes the transportation cost function, and $\Pi(\mathbb{Q},\mathbb{Q}')$ is the set of all joint distributions of $\boldsymbol{\xi}$ and $\boldsymbol{\xi}^{\prime}$ with marginals $\mathbb{Q}$ and $\mathbb{Q}^{\prime}$, respectively. Note that the data in our classification problem is comprised of continuous features $\boldsymbol{x} \in \mathcal{X} \subseteq \mathbb{R}^P$ and categorical labels $y \in \{ -1,+1 \}$. Therefore, a commonly used transportation cost function for such setting is
\begin{equation*}
d(\boldsymbol{\xi},\boldsymbol{\xi}^{\prime}) \coloneqq ||\boldsymbol{x} - \boldsymbol{x}^{\prime}|| + \kappa \mathbbm{1}_{\{y \neq y^{\prime}\}},
\end{equation*}
where $||\cdot||$ is any norm on $\mathbb{R}^P$, and $\kappa$ is the label-flipping cost, treated as a user-defined hyperparameter. This cost function allows us to quantify differences in both the features and labels between samples.

\section{Proofs and Supplementary Theoretical Results}
\subsection{Preliminary Lemmas} \label{sec:lemmas}
\begin{lemma} \label{lem:diff_max}
    Any two real scalars $a,a^{\prime} \in \mathbb{R}$ obey the following
    \begin{equation*}
        |\max\{0,a\} - \max \{0,a^{\prime} \}| \leq |a - a^{\prime}|.
    \end{equation*}
\end{lemma}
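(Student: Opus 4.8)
The plan is to exploit the elementary fact that the positive-part map $a \mapsto \max\{0,a\}$ is $1$-Lipschitz, which I would establish by hand rather than invoking any external result. The cleanest route notes that $\max\{0,a\}$ is the pointwise maximum of the two affine functions $a \mapsto 0$ and $a \mapsto a$, each nonexpansive, so the claim is a special case of the general inequality $|\max\{s,t\} - \max\{s',t'\}| \leq \max\{|s-s'|,|t-t'|\}$ with the substitution $s = s' = 0$, $t = a$, $t' = a'$.

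To prove that general inequality, I would set $M \coloneqq \max\{|s-s'|,|t-t'|\}$ and observe that $s \leq s' + M$ and $t \leq t' + M$; taking the maximum of the left-hand sides yields $\max\{s,t\} \leq \max\{s',t'\} + M$. Swapping the roles of the primed and unprimed quantities gives $\max\{s',t'\} \leq \max\{s,t\} + M$ by symmetry, and combining the two bounds produces $|\max\{s,t\} - \max\{s',t'\}| \leq M$. Substituting $s = s' = 0$, $t = a$, $t' = a'$ then gives $|\max\{0,a\} - \max\{0,a'\}| \leq \max\{0,|a-a'|\} = |a-a'|$, which is exactly the claim.

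Alternatively, and perhaps more transparently for a self-contained appendix, I would carry out a direct case analysis. Assuming without loss of generality that $a \geq a'$, there are three cases according to the signs of $a$ and $a'$: if both are nonnegative the left-hand side equals $|a-a'|$ with equality; if both are nonpositive the left-hand side is $0 \leq |a-a'|$; and if $a \geq 0 > a'$ the left-hand side equals $a = a - 0 \leq a - a' = |a-a'|$, using $-a' > 0$. Either route closes the argument, and I expect no genuine obstacle, since the statement merely asserts that the hinge-type nonlinearity is nonexpansive — a fact that will be reused to transfer Lipschitz continuity through the hinge loss in the subsequent proofs.
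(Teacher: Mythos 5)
Both of your routes are correct. Your second route (the direct case analysis) is essentially the paper's own proof: the paper splits into four cases according to the signs of $a$ and $a'$, with the two mixed-sign cases symmetric, which is exactly your three-case analysis after the WLOG reduction $a \geq a'$. Your first route is genuinely different and slightly more general: you prove the nonexpansiveness of the two-argument maximum, $|\max\{s,t\} - \max\{s',t'\}| \leq \max\{|s-s'|,|t-t'|\}$, via the standard bound $\max\{s,t\} \leq \max\{s',t'\} + M$ with $M = \max\{|s-s'|,|t-t'|\}$ and symmetry, then specialize $s = s' = 0$. That argument is airtight and buys a reusable fact (the $1$-Lipschitzness of any pointwise maximum of nonexpansive maps, in any number of arguments) at no extra cost, whereas the paper's case analysis is tailored to the hinge nonlinearity and arguably more transparent for the appendix's purposes; both serve equally well for the downstream use in the Lipschitz-continuity proof of the SM objective (Lemma \ref{lem:lip_cont}).
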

\begin{proof}
    To see this, consider the following cases:
    \begin{enumerate}
        \item $a,a^{\prime} \geq 0$. In this case one can directly see that
        \begin{equation*}
            |\max\{0,a\} - \max \{0,a^{\prime} \}| = |a - a^{\prime}|
        \end{equation*}

        \item $a \geq 0, \ a^{\prime} < 0$. In this case, we have the following:
        \begin{equation*}
            |\max\{0,a\} - \max \{0,a^{\prime} \}|  = a  < |a| + |a^{\prime}| = |a - a^{\prime}|.
        \end{equation*}

        \item $a < 0, \ a^{\prime} \geq 0$. This is symmetric to the previous case.

        \item $a < 0, \ a^{\prime} < 0$. In this case we have the following:
        \begin{equation*}
            |\max\{0,a\} - \max \{0,a^{\prime} \}|  = 0 \leq | a - a^{\prime}|.
        \end{equation*}
    \end{enumerate}
\end{proof}
\begin{lemma}[SM Objective Function Convexity]\label{lem:sub_con}
    Suppose Assumption \ref{assump:feat_supp} holds and let $f(\boldsymbol{w}) = \sum_{g=1}^{G} \alpha_g \sup_{\mathbb{Q}_g \in \mathcal{A}^{(g)}_{\varepsilon_g,1,d}(\Xi)}\mathbb{E}^{\mathbb{Q}_g}[\ell_H(\boldsymbol{w};\boldsymbol{\xi})]$. Then, $f(\boldsymbol{w})$ is convex in $\boldsymbol{w}$.
\end{lemma}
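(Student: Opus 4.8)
The plan is to build up the convexity of $f$ through the four operations that compose it, exploiting the standard fact that each of these operations preserves convexity. I would proceed in the following order.

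First I would fix an arbitrary realization $\boldsymbol{\xi} = (\boldsymbol{x}, y)$ and argue that the hinge loss $\ell_H(\boldsymbol{w};\boldsymbol{\xi}) = \max\{0,\, 1 - y\,\boldsymbol{w}^{\mathsf{T}}\boldsymbol{x}\}$ is convex in $\boldsymbol{w}$. This is immediate because the map $\boldsymbol{w} \mapsto 1 - y\,\boldsymbol{w}^{\mathsf{T}}\boldsymbol{x}$ is affine, the constant map $\boldsymbol{w} \mapsto 0$ is affine, and the pointwise maximum of two affine (hence convex) functions is convex. Concretely, for $\boldsymbol{w}_1, \boldsymbol{w}_2$ and $t \in [0,1]$, each of the two arguments inside the $\max$ evaluated at $t\boldsymbol{w}_1 + (1-t)\boldsymbol{w}_2$ is bounded above by the corresponding convex combination of its values at $\boldsymbol{w}_1$ and $\boldsymbol{w}_2$, and taking the maximum preserves this inequality.

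Next I would lift convexity through the expectation. For any fixed $\mathbb{Q}_g \in \mathcal{A}^{(g)}_{\varepsilon_g,1,d}(\Xi)$, the function $\boldsymbol{w} \mapsto \mathbb{E}^{\mathbb{Q}_g}[\ell_H(\boldsymbol{w};\boldsymbol{\xi})]$ is an average of the convex integrands $\ell_H(\,\cdot\,;\boldsymbol{\xi})$ against a probability measure; since expectation is a nonnegative linear (monotone) operator, it preserves convexity, so this map is convex in $\boldsymbol{w}$. I would then take the pointwise supremum over $\mathbb{Q}_g$ in the local Wasserstein ball: the supremum of an arbitrary family of convex functions is convex, hence $\boldsymbol{w} \mapsto \sup_{\mathbb{Q}_g \in \mathcal{A}^{(g)}_{\varepsilon_g,1,d}(\Xi)} \mathbb{E}^{\mathbb{Q}_g}[\ell_H(\boldsymbol{w};\boldsymbol{\xi})]$ is convex for each client $g$. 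Finally, since $\alpha_g \geq 0$, the function $f(\boldsymbol{w}) = \sum_{g=1}^{G}\alpha_g \sup_{\mathbb{Q}_g} \mathbb{E}^{\mathbb{Q}_g}[\ell_H(\boldsymbol{w};\boldsymbol{\xi})]$ is a nonnegatively weighted sum of convex functions and is therefore convex.

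The conceptually routine parts are the four preservation-of-convexity steps; the one point deserving care is ensuring that the supremum-valued function is real-valued (finite) rather than identically $+\infty$, so that ``convex'' is meaningful and later subgradient computations are well posed. This is exactly where Assumption \ref{assump:feat_supp} enters: the compactness of the feature support $\mathcal{X}$ guarantees, via the worst-case distribution characterization in \eqref{eq:ext_dist}, that the inner supremum is attained and finite for every $\boldsymbol{w}$. Thus the main obstacle, such as it is, is not the convexity algebra but justifying finiteness of the inner supremum; once compactness supplies this, the pointwise-supremum step applies to a genuine family of finite convex functions and the conclusion follows.
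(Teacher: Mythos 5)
Your proof is correct and follows essentially the same route as the paper's, which likewise notes that the hinge loss is a maximum of affine functions of $\boldsymbol{w}$ and that sums, nonnegative scaling, expectation, and pointwise suprema all preserve convexity. Your closing concern about finiteness of the inner supremum is harmless but not actually needed for the convexity claim: the pointwise supremum of convex functions is convex even as an extended-real-valued function, and in any case the type-1 Wasserstein worst-case expectation of $\ell_H$ is finite because $\ell_H$ is Lipschitz in $\boldsymbol{x}$, independently of the compactness supplied by Assumption \ref{assump:feat_supp}.
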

\begin{proof}
    Since $\ell_H(\boldsymbol{w},\boldsymbol{\xi})$ is a maximum of linear terms in $\boldsymbol{w}$, then it is convex in $\boldsymbol{w}$. Moreover, sums, scalar multiplication, taking the supremum, and the expectation are all operations that preserve convexity \citep{boyd2004convex}. Thus, $f(\boldsymbol{w})$ is convex in $\boldsymbol{w}$.
\end{proof}

\begin{lemma}[SM Objective Function Lipschitz Continuity] \label{lem:lip_cont}
    Suppose Assumption \ref{assump:feat_supp} holds and let $f(\boldsymbol{w}) = \sum_{g=1}^{G} \alpha_g \sup_{\mathbb{Q}_g \in \mathcal{A}^{(g)}_{\varepsilon_g,1,d}(\Xi)}\mathbb{E}^{\mathbb{Q}_g}[\ell_H(\boldsymbol{w};\boldsymbol{\xi})]$. Then, $f(\boldsymbol{w})$ is Lipschitz continuous in $\boldsymbol{w}$.
\end{lemma}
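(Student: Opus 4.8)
The plan is to peel back the operations defining $f$ one at a time, reducing everything to a single pointwise Lipschitz bound on the hinge loss. First I would fix a sample $\boldsymbol{\xi} = (\boldsymbol{x},y)$ and show that $\boldsymbol{w} \mapsto \ell_H(\boldsymbol{w};\boldsymbol{\xi})$ is Lipschitz with a constant that does not depend on $\boldsymbol{\xi}$. Writing $\ell_H(\boldsymbol{w};\boldsymbol{\xi}) = \max\{0,\,1 - y\,\boldsymbol{w}^{\mathsf{T}}\boldsymbol{x}\}$ and applying Lemma \ref{lem:diff_max} with $a = 1 - y\,\boldsymbol{w}^{\mathsf{T}}\boldsymbol{x}$ and $a' = 1 - y\,{\boldsymbol{w}'}^{\mathsf{T}}\boldsymbol{x}$ yields
\[
    |\ell_H(\boldsymbol{w};\boldsymbol{\xi}) - \ell_H(\boldsymbol{w}';\boldsymbol{\xi})| \leq |y\,(\boldsymbol{w} - \boldsymbol{w}')^{\mathsf{T}}\boldsymbol{x}| = |(\boldsymbol{w} - \boldsymbol{w}')^{\mathsf{T}}\boldsymbol{x}|,
\]
using $|y| = 1$. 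By H\"older's inequality the right-hand side is bounded by $\|\boldsymbol{w} - \boldsymbol{w}'\| \cdot \|\boldsymbol{x}\|_{\ast}$, and Assumption \ref{assump:feat_supp} confines $\boldsymbol{x}$ to the compact set $[0,1]^P$, so $M \coloneqq \sup_{\boldsymbol{x} \in \mathcal{X}} \|\boldsymbol{x}\|_{\ast} < \infty$ is a \emph{uniform} (in $\boldsymbol{\xi}$) Lipschitz constant for the hinge loss.

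Next I would push this uniform bound through the three remaining operations. For the expectation, for any fixed $\mathbb{Q}_g$ the bound $|\mathbb{E}^{\mathbb{Q}_g}[\,\cdot\,]| \leq \mathbb{E}^{\mathbb{Q}_g}[|\cdot|]$ together with the pointwise estimate gives $|\mathbb{E}^{\mathbb{Q}_g}[\ell_H(\boldsymbol{w};\boldsymbol{\xi})] - \mathbb{E}^{\mathbb{Q}_g}[\ell_H(\boldsymbol{w}';\boldsymbol{\xi})]| \leq M\|\boldsymbol{w} - \boldsymbol{w}'\|$, so each map $\boldsymbol{w} \mapsto \mathbb{E}^{\mathbb{Q}_g}[\ell_H(\boldsymbol{w};\boldsymbol{\xi})]$ is $M$-Lipschitz, with $M$ independent of $\mathbb{Q}_g$ because every admissible $\mathbb{Q}_g$ is supported on $\Xi$ with feature support $\mathcal{X}$. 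Then the standard fact $|\sup_a h_a(\boldsymbol{w}) - \sup_a h_a(\boldsymbol{w}')| \leq \sup_a |h_a(\boldsymbol{w}) - h_a(\boldsymbol{w}')|$ shows the client-level supremum $\sup_{\mathbb{Q}_g \in \mathcal{A}^{(g)}_{\varepsilon_g,1,d}(\Xi)} \mathbb{E}^{\mathbb{Q}_g}[\ell_H(\boldsymbol{w};\boldsymbol{\xi})]$ is again $M$-Lipschitz. Finally, the convex combination with weights $\alpha_g \geq 0$, $\sum_g \alpha_g = 1$, preserves the constant by the triangle inequality, giving $|f(\boldsymbol{w}) - f(\boldsymbol{w}')| \leq M\|\boldsymbol{w} - \boldsymbol{w}'\|$.

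I expect the supremum step to be the main obstacle: because it ranges over the infinite-dimensional ambiguity set $\mathcal{A}^{(g)}_{\varepsilon_g,1,d}(\Xi)$, it is not enough for the Lipschitz constant to be finite for each individual $\mathbb{Q}_g$—it must be \emph{uniform} across all of them, or the sup-of-Lipschitz argument collapses. This is precisely where Assumption \ref{assump:feat_supp} is essential: compactness of $\mathcal{X}$ bounds $\|\boldsymbol{x}\|_{\ast}$ uniformly over the support of every admissible distribution, so the per-distribution constants all equal the common value $M$. To keep the exposition self-contained I would record the sup-of-Lipschitz inequality as a one-line auxiliary claim, proved by selecting an $\epsilon$-optimal maximizer and letting $\epsilon \to 0$.
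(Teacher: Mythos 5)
Your proof is correct, but it takes a genuinely different route from the paper's. The paper's proof invokes Assumption \ref{assump:feat_supp} through the machinery of \eqref{eq:ext_dist}: compactness of $\mathcal{X}$ guarantees the existence of a worst-case \emph{discrete} distribution attaining each client's supremum, so $f_g(\boldsymbol{w})$ is rewritten as a finite weighted sum of hinge losses at the extremal points $(\beta_{n_g}^{\pm\ast},\widehat{\boldsymbol{z}}_{n_g}^{\pm})$; the Lipschitz bound then comes from a maximizer-swap step (replacing the maximizer for $\boldsymbol{w}'$ by the one for $\boldsymbol{w}$, as in \eqref{eq:lip_1e}), followed by the same Lemma \ref{lem:diff_max} and dual-norm estimates you use, yielding the constant in \eqref{eq:lip_1i} expressed in terms of the extremal solution. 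You instead use Assumption \ref{assump:feat_supp} only to bound $\|\boldsymbol{x}\|_{\ast}$ uniformly by $M = \sup_{\boldsymbol{x}\in[0,1]^P}\|\boldsymbol{x}\|_{\ast}$, and push this single pointwise bound through expectation, supremum, and mixture via the elementary inequality $|\sup_a h_a(\boldsymbol{w}) - \sup_a h_a(\boldsymbol{w}')| \leq \sup_a |h_a(\boldsymbol{w}) - h_a(\boldsymbol{w}')|$. What your route buys: it sidesteps the existence of worst-case distributions entirely (needing only that every $\mathbb{Q}_g \in \mathcal{A}^{(g)}_{\varepsilon_g,1,d}(\Xi)$ is supported on $\Xi$), and it produces an explicit constant $M$ that is manifestly independent of $\boldsymbol{w}$ --- whereas the paper's constant $\Lip(f_g)$ in \eqref{eq:lip_1i} is stated in terms of maximizers that depend on $\boldsymbol{w}$, and one must additionally observe that $\beta_{n_g}^{+\ast} + \beta_{n_g}^{-\ast} = 1$ and $\widehat{\boldsymbol{z}}_{n_g}^{\pm} \in [0,1]^P$ to see it is uniformly bounded (by the same $M$, in fact). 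What the paper's route buys is consistency with the rest of the SM development, since the extremal representation \eqref{eq:ext_dist} is needed anyway for Proposition \ref{prop:subgrad_comp}. One small point to make explicit in your write-up: the sup-of-Lipschitz inequality requires both suprema to be finite (to avoid $\infty - \infty$), which holds here since $0 \leq \ell_H(\boldsymbol{w};\boldsymbol{\xi}) \leq 1 + M\|\boldsymbol{w}\|$ uniformly over the support; your $\epsilon$-optimal-maximizer remark then closes the argument cleanly.
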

\begin{proof}
    As discussed by \cite{2019regularization}, if assumption \ref{assump:feat_supp} holds then one can obtain the discrete distribution described in \eqref{eq:ext_dist} that attains the worst case risk. Therefore, we have the following:
    \begin{align*}
    \allowdisplaybreaks
        f(\boldsymbol{w}) &=  \sum_{g=1}^{G} \alpha_g f_g(\boldsymbol{w}) \\
        & \coloneqq \sum_{g=1}^{G} \alpha_g \Bigg (\frac{1}{N_g} \sum_{n_g=1}^{N_g} {\beta_{n_g}^+}^{\ast} \ell_H(\boldsymbol{w};(\widehat{y}_{n_g},\widehat{\boldsymbol{z}}_{n_g}^+)) + {\beta_{n_g}^-}^{\ast} \ell_H(\boldsymbol{w};(-\widehat{y}_{n_g},\widehat{\boldsymbol{z}}_{n_g}^-)) \Bigg ),
    \end{align*}
    Now, suppose we have $\boldsymbol{w}$ and $\boldsymbol{w}^{\prime}$ which correspond to worst-case distributions characterized by $({\beta_{n_g}^{\pm}}^{\ast},\widehat{\boldsymbol{z}}_{N_g}^{\pm})$ and $({\beta_{n_g}^{\pm \prime}}^{\ast},\widehat{\boldsymbol{z}}_{N_g}^{\pm \prime})$, respectively. Then we can write the following:
    \begin{subequations}
    \allowdisplaybreaks
    \begin{align}
    \allowdisplaybreaks
        &|f_g(\boldsymbol{w}) - f_g(\boldsymbol{w}^{\prime})| \nonumber \\
        & \label{eq:lip_1a}=\frac{1}{N_g} \Bigg | \sum_{n_g=1}^{N_g} \bigg [{\beta_{n_g}^{+}}^{\ast} \ell_H(\boldsymbol{w};( \widehat{y}_{n_g},\widehat{\boldsymbol{z}}_{n_g}^{+})) + {\beta_{n_g}^{-}}^{\ast} \ell_H(\boldsymbol{w};( -\widehat{y}_{n_g},\widehat{\boldsymbol{z}}_{n_g}^{-})) \bigg ] \nonumber \\
        & \qquad \qquad \qquad - \sum_{n_g=1}^{N_g} \bigg [{\beta_{n_g}^{+ \prime}}^{\ast} \ell_H(\boldsymbol{w}^{\prime};( \widehat{y}_{n_g},\widehat{\boldsymbol{z}}_{n_g}^{+ \prime})) + {\beta_{n_g}^{- \prime}}^{\ast} \ell_H(\boldsymbol{w}^{\prime};( -\widehat{y}_{n_g},\widehat{\boldsymbol{z}}_{n_g}^{- \prime})) \bigg] \Bigg |\\
        &\label{eq:lip_1c} \leq \frac{1}{N_g} \Bigg | \sum_{n_g=1}^{N_g} \bigg [{\beta_{n_g}^{+}}^{\ast} \ell_H(\boldsymbol{w};( \widehat{y}_{n_g},\widehat{\boldsymbol{z}}_{n_g}^{+})) - {\beta_{n_g}^{+ \prime}}^{\ast} \ell_H(\boldsymbol{w}^{\prime};( \widehat{y}_{n_g},\widehat{\boldsymbol{z}}_{n_g}^{+ \prime})) \bigg ] \Bigg| \nonumber \\
        & \qquad \qquad \qquad + \Bigg | \sum_{n_g=1}^{N_g} \bigg [{\beta_{n_g}^{-}}^{\ast} \ell_H(\boldsymbol{w};( -\widehat{y}_{n_g},\widehat{\boldsymbol{z}}_{n_g}^{-})) - {\beta_{n_g}^{- \prime}}^{\ast} \ell_H(\boldsymbol{w}^{\prime};( -\widehat{y}_{n_g},\widehat{\boldsymbol{z}}_{n_g}^{- \prime})) \bigg] \Bigg |\\
        &\label{eq:lip_1d} \leq \frac{1}{N_g} \sum_{n_g=1}^{N_g} \Bigg [\bigg | {\beta_{n_g}^{+}}^{\ast} \ell_H(\boldsymbol{w};( \widehat{y}_{n_g},\widehat{\boldsymbol{z}}_{n_g}^{+})) - {\beta_{n_g}^{+ \prime}}^{\ast} \ell_H(\boldsymbol{w}^{\prime};( \widehat{y}_{n_g},\widehat{\boldsymbol{z}}_{n_g}^{+ \prime}))  \bigg|  \nonumber \\
        & \qquad \qquad \qquad + \bigg|{\beta_{n_g}^{-}}^{\ast} \ell_H(\boldsymbol{w};( -\widehat{y}_{n_g},\widehat{\boldsymbol{z}}_{n_g}^{-})) - {\beta_{n_g}^{- \prime}}^{\ast} \ell_H(\boldsymbol{w}^{\prime};( -\widehat{y}_{n_g},\widehat{\boldsymbol{z}}_{n_g}^{- \prime})) \bigg | \Bigg ]\\
        &\label{eq:lip_1e} \leq \frac{1}{N_g} \sum_{n_g=1}^{N_g} \Bigg [\bigg | {\beta_{n_g}^{+}}^{\ast} \ell_H(\boldsymbol{w};( \widehat{y}_{n_g},\widehat{\boldsymbol{z}}_{n_g}^{+})) - {\beta_{n_g}^{+}}^{\ast} \ell_H(\boldsymbol{w}^{\prime};( \widehat{y}_{n_g},\widehat{\boldsymbol{z}}_{n_g}^{+}))  \bigg| \nonumber \\
        & \qquad \qquad \qquad + \bigg|{\beta_{n_g}^{-}}^{\ast} \ell_H(\boldsymbol{w};( -\widehat{y}_{n_g},\widehat{\boldsymbol{z}}_{n_g}^{-})) - {\beta_{n_g}^{- }}^{\ast} \ell_H(\boldsymbol{w}^{\prime};( -\widehat{y}_{n_g},\widehat{\boldsymbol{z}}_{n_g}^{-})) \bigg | \Bigg ]\\
        &\label{eq:lip_1f} \leq \frac{1}{N_g} \sum_{n_g=1}^{N_g} \Bigg [\bigg | {\beta_{n_g}^{+}}^{\ast} \max\{0,1-\widehat{y}_{n_g} \cdot \boldsymbol{w}^{\mathsf{T}}\widehat{\boldsymbol{z}}_{n_g}^{+} \} - {\beta_{n_g}^{+}}^{\ast} \max\{0,1-\widehat{y}_{n_g} \cdot \boldsymbol{w}^{\prime{\mathsf{T}}}\widehat{\boldsymbol{z}}_{n_g}^{+} \} \bigg| \nonumber \\
        & \qquad \qquad \qquad +  \bigg|{\beta_{n_g}^{-}}^{\ast} \max\{0,1+\widehat{y}_{n_g} \cdot \boldsymbol{w}^{\mathsf{T}}\widehat{\boldsymbol{z}}_{n_g}^{-} \} - {\beta_{n_g}^{- }}^{\ast} \max\{0,1+\widehat{y}_{n_g} \cdot \boldsymbol{w}^{\prime{\mathsf{T}}}\widehat{\boldsymbol{z}}_{n_g}^{-} \} \bigg | \Bigg ]\\
        &\label{eq:lip_1g} = \frac{1}{N_g} \sum_{n_g=1}^{N_g} \Bigg [\bigg | (\boldsymbol{w}^{\prime} - \boldsymbol{w})^{\mathsf{T}}({\beta_{n_g}^{+}}^{\ast}\cdot \widehat{y}_{n_g} \cdot \widehat{\boldsymbol{z}}_{n_g}^{+}) \bigg | + \bigg | (\boldsymbol{w} - \boldsymbol{w}^{\prime})^{\mathsf{T}}({\beta_{n_g}^{-}}^{\ast}\cdot \widehat{y}_{n_g} \cdot \widehat{\boldsymbol{z}}_{n_g}^{-})\bigg | \Bigg ]\\
        &\label{eq:lip_1h} \leq \frac{1}{N_g} \sum_{n_g=1}^{N_g} ||\boldsymbol{w} - \boldsymbol{w}^{\prime}|| \Bigg (\left | \left | {\beta_{n_g}^{+}}^{\ast}\cdot \widehat{y}_{n_g} \cdot \widehat{\boldsymbol{z}}_{n_g}^{+} \right | \right |_{\ast} + \left | \left | {\beta_{n_g}^{-}}^{\ast}\cdot \widehat{y}_{n_g} \cdot \widehat{\boldsymbol{z}}_{n_g}^{-} \right | \right |_{\ast} \Bigg )\\
         &\label{eq:lip_1i} = ||\boldsymbol{w} - \boldsymbol{w}^{\prime}|| \Bigg [ \frac{1}{N_g} \sum_{n_g=1}^{N_g}\left | \left | {\beta_{n_g}^{+}}^{\ast}\cdot \widehat{y}_{n_g} \cdot \widehat{\boldsymbol{z}}_{n_g}^{+} \right | \right |_{\ast} + \left | \left | {\beta_{n_g}^{-}}^{\ast}\cdot \widehat{y}_{n_g} \cdot \widehat{\boldsymbol{z}}_{n_g}^{-} \right | \right |_{\ast}\Bigg],
    \end{align}
    \end{subequations}
    where \eqref{eq:lip_1c} and \eqref{eq:lip_1d} follow from the triangle inequality, and \eqref{eq:lip_1e} follows by noting that the distribution characterized by $({\beta_{n_g}^{\pm \prime}}^{\ast},\widehat{\boldsymbol{z}}_{N_g}^{\pm \prime})$ maximizes the expected risk with respect to $\boldsymbol{w}^{\prime}$, thus the distribution characterized by $({\beta_{n_g}^{\pm }}^{\ast},\widehat{\boldsymbol{z}}_{N_g}^{\pm})$ will at most attain the same risk with respect to $\boldsymbol{w}^{\prime}$. Additionally, \eqref{eq:lip_1f} follows from the definition of the hinge loss function, \eqref{eq:lip_1g} follows from Lemma \ref{lem:diff_max}, and \eqref{eq:lip_1h} follows from the Cauchy-Schwarz inequality, where $||\cdot||_{\ast}$ is the dual norm of $||\cdot||$ used to measure distances in the space of $\boldsymbol{w}$. Given the previous, we can obtain the final result as follows:
    \begin{subequations}
    \begin{align}
        |f(\boldsymbol{w}) - f(\boldsymbol{w}^{\prime})| & = \left |\sum_{g=1}^{G} \alpha_g f_g(\boldsymbol{w}) - \sum_{g=1}^{G} \alpha_g f_g(\boldsymbol{w}^{\prime}) \right |\\
        &\label{eq:lip_2b} \leq \sum_{g=1}^{G} \alpha_g |f_g(\boldsymbol{w}) - f_g(\boldsymbol{w}^{\prime})| \\ 
        &\label{eq:lip_2c} \leq || \boldsymbol{w} - \boldsymbol{w}^{\prime}|| \sum_{g=1}^G \alpha_g \Lip(f_g(\boldsymbol{w})),
    \end{align}
    \end{subequations}
    where \eqref{eq:lip_2b} follows from the triangle inequality and $\Lip(f_g(\boldsymbol{w}))$ is taken from \eqref{eq:lip_1i}.
\end{proof}

\begin{lemma}[SM Objective Function Coercivity] \label{lem:coer}
    Suppose Assumption \ref{assump:feat_supp} holds and let $f(\boldsymbol{w}) = \sum_{g=1}^{G} \alpha_g \sup_{\mathbb{Q}_g \in \mathcal{A}^{(g)}_{\varepsilon_g,1,d}(\Xi)}\mathbb{E}^{\mathbb{Q}_g}[\ell_H(\boldsymbol{w};\boldsymbol{\xi})]$. Then, $f(\boldsymbol{w})$ is coercive in $\boldsymbol{w}$.
\end{lemma}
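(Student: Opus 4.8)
The plan is to establish a linear growth lower bound $f(\boldsymbol{w}) \geq c\,\|\boldsymbol{w}\|_{\ast}$ for a strictly positive constant $c$, from which coercivity (that is, $f(\boldsymbol{w}) \to +\infty$ as $\|\boldsymbol{w}\| \to \infty$) follows at once, since all norms on $\mathbb{R}^P$ are equivalent so that $\|\boldsymbol{w}\|_{\ast} \to \infty$ whenever $\|\boldsymbol{w}\| \to \infty$. Because $f = \sum_{g} \alpha_g f_g$ is a convex combination of the per-client worst-case risks $f_g(\boldsymbol{w}) = \sup_{\mathbb{Q}_g} \mathbb{E}^{\mathbb{Q}_g}[\ell_H]$, the whole task reduces to lower-bounding a single $f_g$.

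The first step is to invoke the strong-duality reformulation of the inner worst-case expectation. Dropping the proximal term from Proposition~\ref{prop:admm_client_up} (equivalently, the standard Wasserstein-DRO duality of \cite{kuhn2019wasserstein,2019regularization}) and minimizing out the epigraph variables $s_{n_g}$ gives
\begin{equation*}
f_g(\boldsymbol{w}) = \min_{\lambda_g \geq \|\boldsymbol{w}\|_{\ast}} \left[ \lambda_g \varepsilon_g + \frac{1}{N_g}\sum_{n_g=1}^{N_g} \max\bigl\{ \ell_H(\boldsymbol{w};(\widehat{\boldsymbol{x}}_{n_g},\widehat{y}_{n_g})),\ \ell_H(\boldsymbol{w};(\widehat{\boldsymbol{x}}_{n_g},-\widehat{y}_{n_g})) - \kappa\lambda_g \bigr\} \right].
\end{equation*}
The second step exploits nonnegativity of the hinge loss: the inner $\max\{\cdot,\cdot\}$ is bounded below by $\ell_H(\boldsymbol{w};(\widehat{\boldsymbol{x}}_{n_g},\widehat{y}_{n_g})) \geq 0$, so every feasible value of the bracketed objective is at least $\lambda_g \varepsilon_g$. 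Combined with the feasibility constraint $\lambda_g \geq \|\boldsymbol{w}\|_{\ast}$ and $\varepsilon_g > 0$, this yields $f_g(\boldsymbol{w}) \geq \varepsilon_g \|\boldsymbol{w}\|_{\ast}$. Summing over clients, $f(\boldsymbol{w}) \geq \bigl(\sum_{g=1}^G \alpha_g \varepsilon_g\bigr)\|\boldsymbol{w}\|_{\ast}$, and since the weights lie on the simplex while each radius $\varepsilon_g$ is strictly positive, the constant $c \coloneqq \sum_{g=1}^G \alpha_g \varepsilon_g > 0$. Hence $f(\boldsymbol{w}) \geq c\|\boldsymbol{w}\|_{\ast} \to +\infty$, establishing coercivity.

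I expect the main obstacle to be justifying that the bound is on the supremum itself, where the \emph{direction} of duality matters: a lower bound on a $\sup$ would ordinarily come from exhibiting one feasible distribution, whereas the clean bound above comes from a $\min$. I would therefore lean on strong duality, which holds under the compact-support Assumption~\ref{assump:feat_supp} (the same assumption that guarantees attainment of the worst-case distribution \eqref{eq:ext_dist}), so that $f_g$ genuinely \emph{equals} the dual minimum rather than merely being bounded by it. A purely primal alternative would plug a single feature-perturbed atom into the local ambiguity set and push its mass a distance $\varepsilon_g$ along the dual-norm-maximizing direction of $\boldsymbol{w}$, also producing growth of order $\varepsilon_g\|\boldsymbol{w}\|_{\ast}$; I would avoid this route, however, as it forces one to control the baseline margin term $\widehat{y}_{n_g}\boldsymbol{w}^{\mathsf{T}}\widehat{\boldsymbol{x}}_{n_g}$ and to allocate the transport budget carefully across samples, whereas the dual argument sidesteps this bookkeeping entirely.
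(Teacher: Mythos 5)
Your proposal is correct and follows essentially the same route as the paper's own proof: both pass to the strong-duality reformulation in which the dual variable is constrained by $\lambda_g \geq \|\boldsymbol{w}\|_{\ast}$, and both observe that the $\lambda_g \varepsilon_g$ term in the dual objective then forces the value to blow up as $\|\boldsymbol{w}\|_{\ast} \to \infty$. Your version is in fact slightly more quantitative, making the linear growth bound $f(\boldsymbol{w}) \geq \bigl(\sum_{g=1}^{G} \alpha_g \varepsilon_g\bigr)\|\boldsymbol{w}\|_{\ast}$ explicit where the paper argues qualitatively that $\lambda_g \to \infty$ implies $f_g(\boldsymbol{w}) \to +\infty$.
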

\begin{proof}
    We begin out proof by studying each of the individual terms $f_g(\boldsymbol{w})$ as follows
    \begin{subequations}
    \allowdisplaybreaks
    \begin{align}
        \allowdisplaybreaks
        \nonumber f_g(\boldsymbol{w}) &\coloneq \sup_{\mathbb{Q}_g \in \mathcal{A}^{(g)}_{\varepsilon_g,1,d}(\Xi)}\mathbb{E}^{\mathbb{Q}_g}[\ell_H(\boldsymbol{w};\boldsymbol{\xi})]\\
        &\label{eq:coer1} = \inf_{\lambda_g \geq 0} \lambda_g \varepsilon_g +  \frac{1}{N_g} \sum_{n_g=1}^{N_g} \sup_{\boldsymbol{\xi} \in \Xi} \left \{\ell_H(\boldsymbol{w};\boldsymbol{\xi}) - \lambda_g d(\boldsymbol{\xi},\widehat{\boldsymbol{\xi}}_{n_g}) \right \}\\
        &\label{eq:coer2} = \left \{ \begin{aligned}
            &\inf_{\lambda_g \geq 0, s_{n_g}} && \lambda_g \varepsilon_g +  \frac{1}{N_g} \sum_{n_g=1}^{N_g} s_{n_g} &&\\
            & \subjectto && \sup_{\boldsymbol{\xi} \in \Xi} \left \{\ell_H(\boldsymbol{w};\boldsymbol{\xi}) - \lambda_g d(\boldsymbol{\xi},\widehat{\boldsymbol{\xi}}_{n_g}) \right \} \leq s_{n_g} && \forall n_g \in [N_g]\\
        \end{aligned} \right.\\
        &\label{eq:coer3} = \left \{ \begin{aligned}
            &\inf_{\lambda_g \geq 0, s_{n_g}} && \lambda_g \varepsilon_g +  \frac{1}{N_g} \sum_{n_g=1}^{N_g} s_{n_g} &&\\
            & \subjectto && \sup_{\boldsymbol{x} \in \mathcal{X}} \left \{\ell_H(\boldsymbol{w};(\boldsymbol{x},\widehat{y}_{n_g})) - \lambda_g ||\boldsymbol{x} - \widehat{\boldsymbol{x}}_{n_g}|| \right \} \leq s_{n_g} && \forall n_g \in [N_g]\\
            &&& \sup_{\boldsymbol{x} \in \mathcal{X}} \left \{\ell_H(\boldsymbol{w};(\boldsymbol{x},-\widehat{y}_{n_g})) - \lambda_g ||\boldsymbol{x} - \widehat{\boldsymbol{x}}_{n_g}|| \right \} - \kappa \lambda_g \leq s_{n_g} && \forall n_g \in [N_g]\\
        \end{aligned} \right.\\
        &\label{eq:coer4} = \left \{ \begin{aligned}
            &\inf_{\lambda_g, s_{n_g}} && \lambda_g \varepsilon_g +  \frac{1}{N_g} \sum_{n_g=1}^{N_g} s_{n_g} &&\\
            & \subjectto && \ell_H(\boldsymbol{w};(\widehat{\boldsymbol{x}}_{n_g},\widehat{y}_{n_g})) \leq s_{n_g} && \forall n_g \in [N_g]\\
            &&& \ell_H(\boldsymbol{w};(\widehat{\boldsymbol{x}}_{n_g},-\widehat{y}_{n_g})) - \kappa \lambda_g \leq s_{n_g} && \forall n_g \in [N_g]\\
            &&& \lambda_g \geq ||\boldsymbol{w}||_{\ast}
        \end{aligned} \right.
    \end{align}
    \end{subequations}
    where \eqref{eq:coer1} follows from the strong duality result presented by \cite{2019regularization,kuhn2019wasserstein}, \eqref{eq:coer2} is obtained through the introduction of slack variables and moving the maximization problems to the constraints, and \eqref{eq:coer3} is obtained through the definition of the separable transportation cost function \eqref{eq:cost_func} and by noting that $y \in \{ -1, +1\}$, and finally \eqref{eq:coer4} is obtained by recalling that the hinge loss function $\ell_H(\boldsymbol{w};\boldsymbol{\xi})$ is convex and Lipschitz continuous in $\boldsymbol{x}$, and therefore it follows from Lemma A.3 in \citep{2019regularization} that
    \begin{equation*}
        \sup_{\boldsymbol{x} \in \mathcal{X}} \left \{\ell_H(\boldsymbol{w};(\boldsymbol{x},y)) - \lambda_g ||\boldsymbol{x} - \widehat{\boldsymbol{x}}|| \right \} = \left \{ \begin{aligned}
            &\ell_H(\boldsymbol{w};(\widehat{\boldsymbol{x}},y)) && \text{if} ||\boldsymbol{w}||_{\ast} \leq \lambda_g\\
            & + \infty && \text{otherwise},
        \end{aligned} \right.
    \end{equation*}
    where $||\cdot||_{\ast}$ is the dual to the norm utilized in the definition of the transportation cost function \eqref{eq:cost_func}. Therefore, as $||\boldsymbol{w}||_{\ast} \rightarrow \infty$, we get that $\lambda_g \rightarrow \infty$. Since $\lambda_g$ has a positive sign in the objective function of \eqref{eq:coer4}, then $f_g(\boldsymbol{w}) \rightarrow +\infty$ as $\lambda_g \rightarrow \infty$. This implies that $f(\boldsymbol{w}) = \sum_g^G f_g(\boldsymbol{w})$ is a coercive function of $\boldsymbol{w}$, since $f(\boldsymbol{w}) \rightarrow +\infty$ as $||\boldsymbol{w}||_{\ast} \rightarrow \infty$.
\end{proof}

\begin{lemma}[ADMM Objective Properties] \label{lem:clo_pro_con}
     Let $f(\boldsymbol{w}_g) = \sup_{\mathbb{Q}_g \in \mathcal{A}^{(g)}_{\varepsilon_g,1,d}(\Xi)}\mathbb{E}^{\mathbb{Q}_g}[\ell_H(\boldsymbol{w}_g;\boldsymbol{\xi})]$, then $f(\boldsymbol{w}_g)$ is a closed proper convex function in $\boldsymbol{w}_g$.
\end{lemma}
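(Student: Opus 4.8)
The plan is to establish the three required properties—convexity, properness, and closedness—separately, leveraging the single-client version of the strong-duality reformulation already derived in Lemma \ref{lem:coer}. Throughout, I would note that none of the arguments needs Assumption \ref{assump:feat_supp}, which is consistent with the claim that the ADMM algorithm does not require compactness of $\mathcal{X}$.

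\textbf{Convexity.} First I would observe that the convexity argument of Lemma \ref{lem:sub_con} applies verbatim to a single summand and, crucially, uses nothing beyond the structure of the hinge loss: $\ell_H(\boldsymbol{w}_g;\boldsymbol{\xi}) = \max\{0, 1 - y\,\boldsymbol{w}_g^{\mathsf{T}}\boldsymbol{x}\}$ is a pointwise maximum of affine functions of $\boldsymbol{w}_g$ and hence convex in $\boldsymbol{w}_g$. Since expectation (a nonnegative-weighted integral) and pointwise supremum both preserve convexity, $f(\boldsymbol{w}_g)$ is convex in $\boldsymbol{w}_g$.

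\textbf{Properness.} To show $f$ is proper I would verify that it never takes the value $-\infty$ and is finite somewhere. Nonnegativity of the hinge loss gives $\mathbb{E}^{\mathbb{Q}_g}[\ell_H(\boldsymbol{w}_g;\boldsymbol{\xi})] \geq 0$ for every $\mathbb{Q}_g$, so taking the supremum yields $f(\boldsymbol{w}_g) \geq 0 > -\infty$ for all $\boldsymbol{w}_g$. For finiteness I would invoke the strong-duality reformulation \eqref{eq:coer1} of \cite{kuhn2019wasserstein,2019regularization}, which holds without any compactness assumption on $\mathcal{X}$: choosing the dual variable $\lambda_g = ||\boldsymbol{w}_g||_{\ast}$ makes each inner supremum collapse to $\ell_H(\boldsymbol{w}_g;(\widehat{\boldsymbol{x}}_{n_g},\widehat{y}_{n_g})) < \infty$ by the same Lemma A.3 argument used to pass from \eqref{eq:coer1} to \eqref{eq:coer4}. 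Hence $f(\boldsymbol{w}_g) < \infty$ for every $\boldsymbol{w}_g$, so $\operatorname{dom} f = \mathbb{R}^P$ and $f$ is proper.

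\textbf{Closedness.} Finally, closedness is equivalent to lower semicontinuity for a convex function, and I see two routes. The cleanest is to note that for each fixed $\mathbb{Q}_g$ the map $\boldsymbol{w}_g \mapsto \mathbb{E}^{\mathbb{Q}_g}[\ell_H(\boldsymbol{w}_g;\boldsymbol{\xi})]$ is a finite convex function and therefore continuous (hence lower semicontinuous); since $f$ is the pointwise supremum of this family, $f$ is itself lower semicontinuous and thus closed. Alternatively, since the properness step already shows $f$ is finite on all of $\mathbb{R}^P$, I could simply appeal to the fact that a real-valued convex function on $\mathbb{R}^P$ is automatically continuous, hence closed. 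Either route completes the proof. The main obstacle I anticipate is precisely the finiteness sub-step: I must ensure the inner maximization over $\boldsymbol{\xi} \in \Xi$ stays finite even though $\mathcal{X}$ need not be compact here. This is exactly where the Lipschitz continuity of the hinge loss in $\boldsymbol{x}$ is essential—it guarantees the supremum in \eqref{eq:coer1} is finite as soon as $\lambda_g \geq ||\boldsymbol{w}_g||_{\ast}$, so the dual representation \eqref{eq:coer4} is valid and finite-valued on all of $\mathbb{R}^P$ without compactness.
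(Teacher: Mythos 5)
Your proof is correct, and while it shares the paper's three-part skeleton (convexity, properness, closedness) and an identical convexity argument, it handles the latter two properties by a genuinely different route. The paper establishes properness merely by noting $f \geq 0$ and $f(\boldsymbol{0}) = 1$ (nonempty effective domain), and then obtains closedness by citing the continuity of proper convex functions (Proposition 1.3.11 in \citep{bertsekas2009convex}) --- a result that, strictly speaking, applies to \emph{real-valued} convex functions on $\mathbb{R}^P$, i.e., it presupposes finiteness everywhere, which the paper's proof never verifies. Your argument supplies exactly that missing piece: invoking the strong-duality reformulation (as in \eqref{eq:coer1}--\eqref{eq:coer4}) and taking $\lambda_g = ||\boldsymbol{w}_g||_{\ast}$ shows $\operatorname{dom} f = \mathbb{R}^P$, after which the appeal to continuity of a finite convex function is legitimate. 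Your alternative closedness route --- $f$ as a pointwise supremum of finite convex, hence continuous, functions is lower semicontinuous --- is also sound, but deserves one extra sentence: each map $\boldsymbol{w}_g \mapsto \mathbb{E}^{\mathbb{Q}_g}[\ell_H(\boldsymbol{w}_g;\boldsymbol{\xi})]$ is finite because any $\mathbb{Q}_g$ within Wasserstein-1 distance $\varepsilon_g$ of the finitely supported empirical distribution $\widehat{\mathbb{P}}_{N_g}$ has finite first moment, so $\mathbb{E}^{\mathbb{Q}_g}[\ell_H(\boldsymbol{w}_g;\boldsymbol{\xi})] \leq 1 + ||\boldsymbol{w}_g||_{\ast}\,\mathbb{E}^{\mathbb{Q}_g}[||\boldsymbol{x}||] < \infty$. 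In short, your version costs an appeal to the duality machinery of \cite{kuhn2019wasserstein,2019regularization} that the paper's short proof avoids, but in exchange it is the more airtight of the two arguments, and your observation that none of this requires Assumption \ref{assump:feat_supp} correctly matches the paper's claim that ADMM needs no compactness of $\mathcal{X}$.
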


\begin{proof}
    Recall that $\ell_H(\boldsymbol{w}_g,\boldsymbol{\xi})$ is convex in $\boldsymbol{w}_g$, and taking the supremum and expectation are operations that preserve convexity \citep{boyd2004convex}, thus $f(\boldsymbol{w}_g)$ is convex in $\boldsymbol{w}_g$. Now, note that
    \begin{equation*}\ell_H(\boldsymbol{w}_g,\boldsymbol{\xi}) \geq 0  \Rightarrow f(\boldsymbol{w}_g) \geq 0  \quad \forall \boldsymbol{w}_g \in \mathbb{R}^P.
    \end{equation*}
    Now, observe that $f(\boldsymbol{0}) = 1$ since $\ell_H(\boldsymbol{0},\boldsymbol{\xi}) = 1 \ \forall \boldsymbol{\xi} \in \Xi$. Since $f(\boldsymbol{w}_g)>-\infty$ and it has a nonempty effective domain, then it is proper convex \citep{cd2006}. Finally, since $f(\boldsymbol{w}_g):\mathbb{R}^P \rightarrow (-\infty,\infty]$ is proper convex, then it is continuous by Proposition 1.3.11 in \citep{bertsekas2009convex}. This implies the closedness of the function.
\end{proof}

\subsection{Proofs of Theoretical Results} \label{sec:proofs}
\subsubsection{Proof of Proposition \ref{lemma:general_form}} \label{proof:general_form}
\begin{proof}
    \begin{subequations}
    \begin{align}
        \inf_{\boldsymbol{w}} \sup_{\mathbb{Q} \in \mathcal{A}_G} \mathbb{E}^{\mathbb{Q}}[\ell_H(\boldsymbol{w};\boldsymbol{\xi})] \label{eq:gen_1a}&= \inf_{\boldsymbol{w}} \sup_{\left \{ \mathbb{Q}_g \in \mathcal{A}^{(g)}_{\varepsilon_g,1,d}(\Xi) \right \}_{g=1}^{G}} \mathbb{E}^{\sum_{g=1}^{G}\alpha_g \mathbb{Q}_g}[\ell_H(\boldsymbol{w};\boldsymbol{\xi})] \\
        &\label{eq:gen_1b} = \inf_{\boldsymbol{w}} \sup_{\left \{ \mathbb{Q}_g \in \mathcal{A}^{(g)}_{\varepsilon_g,1,d}(\Xi)\right \}_{g=1}^{G}} \sum_{g=1}^{G} \alpha_g\mathbb{E}^{\mathbb{Q}_g}[\ell_H(\boldsymbol{w};\boldsymbol{\xi})]\\
        &\label{eq:gen_1c} = \inf_{\boldsymbol{w}} \sum_{g=1}^{G} \alpha_g \sup_{\mathbb{Q}_g \in \mathcal{A}^{(g)}_{\varepsilon_g,1,d}(\Xi)}\mathbb{E}^{\mathbb{Q}_g}[\ell_H(\boldsymbol{w};\boldsymbol{\xi})],
    \end{align}
    \end{subequations}
    where \eqref{eq:gen_1a} follows from the definition of the global ambiguity $\mathcal{A}_G$ set in \eqref{eq:global_amb_set}, \eqref{eq:gen_1b} follows from the Law of Total Expectation, and \eqref{eq:gen_1c} follows by recognizing that the maximization problems are separable due to each decision variable only affecting its corresponding term.
\end{proof}

\subsubsection{Proof of Proposition \ref{prop:oos}} \label{proof:oos}
\begin{proof}
    Suppose the assumptions in the Proposition statement hold. Then, as demonstrated by \cite{kuhn2019wasserstein} we have the following
    \begin{equation*}
        \mathbb{P}^{N_g} \{ \mathbb{P}_g \in \mathcal{A}^{(g)}_{\varepsilon_g,1,d}(\Xi) \} \geq (1 - \eta_g)
    \end{equation*}
    Therefore, we can obtain the following.
    \begin{subequations}
    \begin{align}
        \mathbb{P}^N\{\mathbb{P} \in \mathcal{A}_G\} &\label{eq:oos1} \geq \prod_{g=1}^{G}\mathbb{P}^{N_g} \{ \mathbb{P}_g \in \mathcal{A}^{(g)}_{\varepsilon_g,1,d}(\Xi) \} \\
        &\label{eq:oos2} \geq \prod_{g=1}^{G} (1 - \eta_g),
    \end{align}
    \end{subequations}
    where \eqref{eq:oos1} follows by noting that the local data and Wasserstein balls at all $G$ clients are mutually independent, and that $\mathbb{P} = \sum_{g=1}^{G} \alpha_g \mathbb{P}_g$. Furthermore, note that \eqref{eq:oos1} contains an inequality instead of an equality as there is no guarantee that $\mathbb{P}$ cannot be constructed as a mixture of distributions from the local Wasserstein balls $\{\mathcal{A}^{(g)}_{\varepsilon_g,1,d}(\Xi)\}_{g=1}^G$. Therefore, we have that
    \begin{equation*}
        \mathbb{P}^N\left \{\mathbb{P} \in \mathcal{A}_G \cap \mathbb{P}_g \notin \{\mathcal{A}^{(g)}_{\varepsilon_g,1,d}(\Xi)\}_{g=1}^G \right \} \neq 0.
    \end{equation*}
\end{proof}

\subsubsection{Proof of Proposition \ref{prop:subgrad_comp}} \label{proof:subgrad_comp}
\begin{proof}
    Firstly let us note the following:
    \begin{subequations}
    \begin{align}
        \partial \sup_{\mathbb{Q}_g \in \mathcal{A}^{(g)}_{\varepsilon_g,1,d}(\Xi)}\mathbb{E}^{\mathbb{Q}_g}[\ell_H(\boldsymbol{w};\boldsymbol{\xi})] &\label{eq:subgrad_1a}\supseteq \partial \mathbb{E}^{\mathbb{Q}_g^{\ast}}[\ell_H(\boldsymbol{w};\boldsymbol{\xi})]\\
        &\label{eq:subgrad_1b}= \mathbb{E}^{\mathbb{Q}_g^{\ast}}[\partial \ell_H(\boldsymbol{w};\boldsymbol{\xi})],
    \end{align}
    \end{subequations}
    where \ref{eq:subgrad_1a} follows from Lemma 4.4.1 in \citep{convextextbook} by the fact that $\mathbb{Q}_g^{\ast}$ is a maximizer of the supremum on the left hand side, and \ref{eq:subgrad_1a} follows from the fact that $\ell_H(\boldsymbol{w};\boldsymbol{\xi})$ is convex and integrable, and $\mathbb{Q}_g^{\ast}$ is a discrete distribution. Thus $\mathbb{E}^{\mathbb{Q}_g^{\ast}}[\cdot]$ is a weighted sum.
    
    Now, let us introduce the functions $h_1(\boldsymbol{w})$, and $h_2(\boldsymbol{w})$ to simplify notation as follows:
        \begin{subequations}
        \begin{align}
            \mathbb{E}^{\mathbb{Q}_g^{\ast}} [\ell_H(\boldsymbol{w};\boldsymbol{\xi})]
            &\label{eq:subgrad_2a}= \frac{1}{N_g} \sum_{n_g=1}^{N_g} {\beta_{n_g}^+}^{\ast}\ell_H(\boldsymbol{w};(\widehat{\boldsymbol{z}}_{n_g}^+,\widehat{y}_{n_g})) + {\beta_{n_g}^-}^{\ast}\ell_H(\boldsymbol{w};(\widehat{\boldsymbol{z}}_{n_g}^-,-\widehat{y}_{n_g}))\\
            &\label{eq:subgrad_2c} \coloneqq \frac{1}{N_g} \sum_{n_g=1}^{N_g} h_1(\boldsymbol{w}) + h_2(\boldsymbol{w}),
        \end{align}
        \end{subequations}
        where \ref{eq:subgrad_2a} uses the definition of $\mathbb{Q}_g^{\ast}$ from Equation \ref{eq:ext_dist}, and $\widehat{\boldsymbol{z}}_{n_g}^{\pm} = \widehat{\boldsymbol{x}}_{n_g}-{\boldsymbol{q}_{n_g}^{\pm}}^{\ast}/{\beta_{n_g}^{\pm}}^{\ast}$. Now, observe that we can write the subdifferentials of $h_1(\boldsymbol{w})$ and $h_2(\boldsymbol{w})$ with respect to $\boldsymbol{w}$ as follows:
        \begin{equation*}
            \partial h_1(\boldsymbol{w}) = \left \{ \begin{aligned}
                & \boldsymbol{0} && \text{if } 1 - \widehat{y}_{n_g} \cdot \boldsymbol{w}^{\mathsf{T}} \widehat{\boldsymbol{z}}_{n_g}^+ < 0\\
                & -{\beta_{n_g}^+}^{\ast} \widehat{y}_{n_g}\widehat{\boldsymbol{z}}_{n_g}^+&& \text{if } 1 - \widehat{y}_{n_g} \cdot \boldsymbol{w}^{\mathsf{T}} \widehat{\boldsymbol{z}}_{n_g}^+ > 0\\
                & \text{conv}\left ( \{ \boldsymbol{0}, -{\beta_{n_g}^+}^{\ast} \widehat{y}_{n_g}\widehat{\boldsymbol{z}}_{n_g}^+ \} \right ) && \text{if } 1 - \widehat{y}_{n_g} \cdot \boldsymbol{w}^{\mathsf{T}} \widehat{\boldsymbol{z}}_{n_g}^+ = 0
            \end{aligned} \right.
        \end{equation*}
    
        \begin{equation*}
            \partial h_2(\boldsymbol{w}) = \left \{ \begin{aligned}
                & \boldsymbol{0} && \text{if } 1 + \widehat{y}_{n_g} \cdot \boldsymbol{w}^{\mathsf{T}} \widehat{\boldsymbol{z}}_{n_g}^- < 0\\
                & {\beta_{n_g}^-}^{\ast} \widehat{y}_{n_g}\widehat{\boldsymbol{z}}_{n_g}^-&& \text{if } 1 + \widehat{y}_{n_g} \cdot \boldsymbol{w}^{\mathsf{T}} \widehat{\boldsymbol{z}}_{n_g}^- > 0\\
                & \text{conv}\left ( \{ \boldsymbol{0}, {\beta_{n_g}^-}^{\ast} \widehat{y}_{n_g}\widehat{\boldsymbol{z}}_{n_g}^- \} \right ) && \text{if } 1 + \widehat{y}_{n_g} \cdot \boldsymbol{w}^{\mathsf{T}} \widehat{\boldsymbol{z}}_{n_g}^- = 0
            \end{aligned} \right.
        \end{equation*}
        Therefore, we can use the previous result to obtain the following:
        \begin{equation*}
            \mathbb{E}^{\mathbb{Q}_g^{\ast}}[\partial \ell_H(\boldsymbol{w};\boldsymbol{\xi})] = \frac{1}{N_g} \sum_{n_g=1}^{N_g} \partial h_1(\boldsymbol{w}) + \partial h_2(\boldsymbol{w}),
        \end{equation*}
        where we use the Minkowski sum in the above equation.
\end{proof}

\subsubsection{Proof of Theorem \ref{thm:sm_conv}} \label{proof:sm_conv}
\begin{proof}
    As shown by \cite{nesterov2013}, the subgradient method guarantees convergence assuming the following conditions are met.
    \begin{enumerate}
        \item The objective function is convex.
        \item The objective function is Lipschitz continuous.
        \item The step-size diminishes at an appropriate rate as stated in the theorem statement.
        \item The distance between any optimal solution $\boldsymbol{w}^{\ast}$ and any initial solution $\boldsymbol{w}^{(0)}$ is bounded from above. That is $||\boldsymbol{w}^{\ast} - \boldsymbol{w}^{(0)}|| \leq C$, where $C\in \mathbb{R}$ need not be known.
    \end{enumerate}
    Note that we verify the first two conditions in Lemmas \ref{lem:sub_con} and \ref{lem:lip_cont}, whereas the third condition can be ensured by selecting an appropriately diminishing step-size sequence, as exemplified in the theorem statement. In examining the fourth condition, we note that it is readily satisfied through the coercivity of the objective function, which we prove in Lemma \ref{lem:coer}. To see this, first note that $f(\boldsymbol{0}) = 1$, and by definition $\inf_{\boldsymbol{w}} f(\boldsymbol{w}) \leq f(\boldsymbol{0})$. Suppose we have a set $\mathcal{W} = \{\boldsymbol{w}:f(\boldsymbol{w}) \leq f(\boldsymbol{0}) \}$. We know that for $\boldsymbol{w}^{\ast}$ to be a minimizer of $f(\boldsymbol{w})$, it must be that $\boldsymbol{w}^{\ast} \in \mathcal{W}$. Suppose further that the set $\mathcal{W}$ contains a sequence $\boldsymbol{w}_i$ such that $||\boldsymbol{w}_i|| \rightarrow \infty$. This results in a contradiction, as
    \begin{equation*}
        ||\boldsymbol{w}_i|| \rightarrow \infty \Rightarrow f(\boldsymbol{w}_i) \rightarrow +\infty \Rightarrow \boldsymbol{w}_i \notin \mathcal{W},
    \end{equation*}
    which follows from the coercivity of $f(\boldsymbol{w})$. Thus, there must exist some constant $R \in \mathbb{R}$ such that 
    \begin{equation*}
        \boldsymbol{w} \in \mathcal{W} \Rightarrow ||\boldsymbol{w}|| \leq R.
    \end{equation*}
    Finally, suppose we choose any finite initializer $\boldsymbol{w}^{(0)}$ for the SM algorithm. Then, by the triangle inequality we have
    \begin{equation*}
        ||\boldsymbol{w}^{\ast} - \boldsymbol{w}^{(0)}|| \leq R + ||\boldsymbol{w}^{(0)}||,
    \end{equation*}
    proving that the distance between any initializer $\boldsymbol{w}^{(0)}$ and any optimizer $\boldsymbol{w}^{\ast}$ is indeed bounded from above.
\end{proof}

\subsubsection{Proof of Theorem \ref{thm:subgrad_conv}} \label{proof:subgrad_conv}
\begin{proof}
    We first examine the time complexity of problem \eqref{eq:subgrad_client_prob} that each client $g$ solves at each iteration $t$. When the $\ell_{\infty}$-norm is used in \eqref{eq:subgrad_client_prob}, the problem becomes a Linear Program (LP) with $4N_gP + 2N_g$ decision variables (including slack variables) and $4N_gP + 7N_g$ constraints, where $N_g$ is the number of training samples at the $g^{th}$ client. Solving the problem via the barrier method with the log barrier function and Newton updates requires $\mathcal{O}(\sqrt(C)\log(\epsilon_2^{-1}))$ iterations to reach an $\epsilon_2$-solution \citep{nn1994}, where $C$ is the number of constraints. Moreover, each iteration has an arithmetic complexity of $\mathcal{O}(CD^2)$, where $D$ is the number of decision variables. Therefore, the theoretical worst-case time complexity of solving the problem in \eqref{eq:subgrad_client_prob} is:
    \begin{equation*}
    \mathcal{O}([4N_gP+7N_g]^{1.5}[4N_gP+2N_g]^{2}\log(\epsilon_2^{-1})).
    \end{equation*}
    By eliminating scalar multipliers and constants, we arrive at the following  simplified expression, 
    \begin{equation*}
    \mathcal{O}([N_gP]^{3.5}\log(\epsilon_2^{-1})).
    \end{equation*}
    
    Since all clients can solve their local problems in parallel, and will have the same number of features. Thus, the client with the largest number of samples $N_{g^{\ast}}$ will have the highest time complexity. Furthermore, the central server performs a summation of $G+1$ vectors of dimension $P$ during each iteration $t$, the time complexity of which is $\mathcal{O}(GP)$. We obtain the final result by noting that the subgradient method converges to a solution with tolerance $\epsilon_1$ in $\mathcal{O}(\epsilon_1^{-2})$ iterations \citep{bubeck2015}. Note that we do not explicitly consider the time complexity of computing the local subgradient at each client since it is lower than that of solving the problem in \eqref{eq:subgrad_client_prob}.
\end{proof}

\subsubsection{Proof of Proposition \ref{prop:admm_client_up}} \label{proof:admm_client_up}
In order to obtain updated local model $\boldsymbol{w}_g^{\ast}$, each client $g$ must minimize the global Lagrangian with respect to $\boldsymbol{w}_g$. Thus, the updated local model $\boldsymbol{w}_g^{\ast}$ can be obtained as the minimizer to the following problem.
\begin{subequations}
\allowdisplaybreaks
\begin{align}
    \nonumber J_g (\boldsymbol{w},\boldsymbol{\mu}_g) &=  \inf_{\boldsymbol{w}_g}  \mathcal{L}_{\rho}(\boldsymbol{w}_1, \dots, \boldsymbol{w}_G, \boldsymbol{w}, \boldsymbol{\mu}_1, \dots, \boldsymbol{\mu}_G)\\
    &\label{eq:admm_2a} =\inf_{\boldsymbol{w}_g} \mathcal{L}_{\rho_g}(\boldsymbol{w}_g, \boldsymbol{w}, \boldsymbol{\mu}_g)\\
    &\label{eq:admm_2b} = \inf_{\boldsymbol{w}_g}\sup_{\mathbb{Q}_g \in \mathcal{A}^{(g)}_{\varepsilon_g,1,d}(\Xi)}\mathbb{E}^{\mathbb{Q}_g}[\ell_H(\boldsymbol{w}_g;\boldsymbol{\xi})]+ \frac{\rho}{2} ||\boldsymbol{w}_g - \boldsymbol{w} + \boldsymbol{\mu}_g||_2^2\\
    &\label{eq:admm_2c} \begin{aligned}&= \inf_{\boldsymbol{w}_g,\lambda_g \geq 0} \lambda_g \varepsilon_g + \frac{1}{N_g} \sum_{n_g=1}^{N_g} \sup_{\boldsymbol{\xi} \in \Xi} \left \{\ell_H(\boldsymbol{w}_g;\boldsymbol{\xi}) - \lambda_g d(\boldsymbol{\xi},\widehat{\boldsymbol{\xi}}_{n_g}) \right \} + \frac{\rho}{2} ||\boldsymbol{w}_g - \boldsymbol{w} + \boldsymbol{\mu}_g||_2^2 \end{aligned}\\
    &\label{eq:admm_2d_} = \left \{ \begin{aligned}
        & \min_{\boldsymbol{w}_g,\lambda_g,s_{n_g}}&& \lambda_g \varepsilon_g + \frac{1}{N_g} \sum_{n_g=1}^{N_g}s_{n_g} +\frac{\rho}{2} || \boldsymbol{w}_g - \boldsymbol{w} + \boldsymbol{\mu}_g||_2^2&&\\
        & \subjectto && \ell_H(\boldsymbol{w}_g;(\widehat{\boldsymbol{x}}_{n_g},\widehat{y}_{n_g})) \leq s_{n_g}&& \forall n_g \in [N_g] \\
        &&&\ell_H(\boldsymbol{w}_g;(\widehat{\boldsymbol{x}}_{n_g},-\widehat{y}_{n_g})) - \kappa \lambda_g \leq s_{n_g} &&\forall n_g \in [N_g] \\
        &&& \lambda \geq ||\boldsymbol{w}_g||_{\ast}&&
    \end{aligned} \right.,
\end{align}
\end{subequations}
where \ref{eq:admm_2a} follows from the separability of the Augmented Lagrangian, \ref{eq:admm_2b} follows by definition of the local Lagrangian, \ref{eq:admm_2c} exploits the notable duality result presented by \cite{datadrivenDRO,kuhn2019wasserstein} to rewrite the inner maximization problem as a minimization problem, and
\eqref{eq:admm_2d_} follows by introducing slack variables $s_{n_g}$, recalling that $\ell_H(\boldsymbol{w};\boldsymbol{\xi})$ is convex and Lipschitz continuous, and utilizing similar arguments to the ones presented in the proof of Theorem 1 in \citep{shafieezadehabadeh2015distributionally}.

\subsubsection{Proof of Proposition \ref{prop:admm_server_up}} \label{proof:admm_server_up}
\begin{proof}
The central server can obtain updated global parameters $\boldsymbol{w}^{\ast}$ by minimizing the global Lagrangian with respect to $\boldsymbol{w}$. This can be done as follows.
\begin{subequations}
\begin{align}
    \boldsymbol{w}^{\ast} &= \argmin_{\boldsymbol{w}} \mathcal{L}_{\rho}(\boldsymbol{w}_1, \dots, \boldsymbol{w}_G, \boldsymbol{w}, \boldsymbol{\mu}_1, \dots, \boldsymbol{\mu}_G)\\
    &\label{eq:admm_3a}= \argmin_{\boldsymbol{w}} \sum_{g=1}^G \alpha_g\mathcal{L}_{\rho_g}(\boldsymbol{w}_g, \boldsymbol{w}, \boldsymbol{\mu}_g) \\
    &\label{eq:admm_3b} = \argmin_{\boldsymbol{w}} \sum_{g=1}^G \alpha_g \frac{\rho}{2}||\boldsymbol{w}_g - \boldsymbol{w} + \boldsymbol{\mu}_g||_2^2,
\end{align}
\end{subequations}
where \ref{eq:admm_3b} follows by observing that the norm term is the only term involving the variable $\boldsymbol{w}$. Let us define $f(\boldsymbol{w}) = \sum_{g=1}^G \alpha_g \frac{\rho}{2} ||\boldsymbol{w} - (\boldsymbol{w}_g + \boldsymbol{\mu}_g)||_2^2$. We note that $f(\boldsymbol{w})$ is strongly convex as it is a sum of strongly convex terms. Thus, it has a unique minimizer. We analyze its partial derivative with respect to $\boldsymbol{w}$ by setting it to 0 to obtain our minimizer as follows.
\begin{subequations}
\begin{align}
    \frac{\partial f}{\partial \boldsymbol{w}} &= \sum_{g=1}^G \alpha_g\frac{\rho}{2} \left [2 \boldsymbol{w} - 2(\boldsymbol{w}_g + \boldsymbol{\mu}_g) \right ]\\
    & = 0
\end{align}
\end{subequations}
Finally, we derive a closed form solution for $\boldsymbol{w}^{\ast}$ as follows:
\begin{subequations}
\begin{align}
    &\sum_{g=1}^G \alpha_g\frac{\rho}{2} \left [2 \boldsymbol{w} - 2(\boldsymbol{w}_g + \boldsymbol{\mu}_g) \right ] = 0\\
     \Leftrightarrow &\sum_{g=1}^G \alpha_g \boldsymbol{w} = \sum_{g=1}^G \alpha_g (\boldsymbol{w}_g + \boldsymbol{\mu}_g)\\
     \Leftrightarrow&\label{eq:admm_4a} \boldsymbol{w} = \sum_{g=1}^G \alpha_g (\boldsymbol{w}_g + \boldsymbol{\mu}_g),
\end{align}
\end{subequations}
where \ref{eq:admm_4a} follows by recalling that $\sum_{g=1}^G \alpha_g = 1$.
\end{proof}

\subsubsection{Proof of Theorem \ref{thm:admm_conv_cri}} \label{proof:admm_conv_cri}
\begin{proof}
    As mentioned previously, even when the client objective functions are closed proper convex functions as we demonstrate in \ref{lem:clo_pro_con}, and strong duality holds as shown by \cite{kuhn2019wasserstein}, multi-block ADMM is not theoretically guaranteed to converge \citep{chen2016}. However, \cite{lin2015} establish the convergence of multi-block ADMM in the setting where the objective functions of $(B-1)$ of the $B$ blocks are strongly convex with strong convexity parameter $\sigma_b$ for each block $b$. They formulate the problem to be solved via ADMM as follows:
    \begin{equation} \label{eq:admm_paper_form}
        \begin{aligned}
            &\min && f_1(\boldsymbol{v}_1) + f_2(\boldsymbol{v}_2) + \dots + f_B(\boldsymbol{v}_B)\\
            & \subjectto && \boldsymbol{A}_1\boldsymbol{v}_1 + \boldsymbol{A}_2\boldsymbol{v}_2 + \dots + \boldsymbol{A}_B\boldsymbol{v}_B = c\\
            &&& \boldsymbol{v}_b \in \mathcal{V}_b \quad \forall b \in [B],
        \end{aligned}
    \end{equation}
    where $f_b(\boldsymbol{v}_b)$ is the objective function term and $\boldsymbol{v}_b$ is the decision variable associated with the $b^{th}$ block. 
    
    Note that if we were to rewrite our problem from \eqref{eq:admm_orig_obj} in a similar form, there would be no distinction between the clients and the central server, and the objective function term associated with the central server would remain $0$. Thus, we add a strongly convex term $\tau_g||\boldsymbol{w}_g||_2^2$ to the objective function term associated with each of the clients to meet the requirement that $B-1$ of the blocks must have a strongly convex objective function. During the server aggregation step, each $\tau_g||\boldsymbol{w}_g||_2^2$ term will be multiplied by its respective weight $\alpha_g$. Therefore, the strong convexity parameter associated with client $g$ would be $2\alpha_g\tau_g$.

   To rewrite problem \eqref{eq:admm_orig_obj} in the form of problem \eqref{eq:admm_paper_form}, the $\boldsymbol{A}$ matrix associated with client $g$ would be a block matrix of $P\times P$ matrices stacked vertically in $G$ blocks. The $g^{th}$ block from the top would be the identity matrix, whereas all the other blocks would be zero. Similarly, the matrix associated with the central server would be a block matrix of similar structure but where all the blocks are the negative of the identity matrix. Incorporating this insight into the condition on $\rho$ described in Theorem 3.3 in \citep{lin2015} allows us to obtain the final result.
\end{proof}

\subsubsection{Proof of Theorem \ref{thm:admm_conv}} \label{proof:admm_conv}
\begin{proof}
This proof follows a very similar strategy to that of Theorem \ref{thm:subgrad_conv}. We begin by noting that the strongly convex variant of the local model problem in \eqref{eq:admm_2d} equipped with the $\ell_1$-norm can be written as a quadratically constrained quadratic problem (QCQP) with $N_g+2P+3$ decision variables (including slack variables) and $2N_g+2P+3$ constraints. When solved via the barrier method equipped with the log barrier function and Newton updates \citep{nn1994}, this problem would have the following worst-case time complexity
    \begin{equation*}
        \mathcal{O}([N_g+P]^{3.5}\log(\epsilon_2^{-1})).
    \end{equation*}
    Similar to the previous algorithm, all clients can solve their local problems in parallel and will have the same number of features. Thus the client with the greatest number of samples $N_{g^{\ast}}$ will have the problem with the greatest time complexity. Furthermore, we note that the central server aggregates $2G$ vectors of dimension $P$ in each iteration, the time complexity of which is $\mathcal{O}(GP)$. Therefore, we obtain the final result by noting that ADMM converges to an $\epsilon_1$-solution in $\mathcal{O}(\epsilon_1^{-1})$ iterations assuming the strong convexity of the objective function and that the upper bound on $\rho$ is satisfied \citep{lin2015}. While each client $g$ also performs the update of the local scaled Lagrange multipliers $\boldsymbol{\mu}_g$ during each iteration, this process has a much lower complexity than solving the local problem and is, therefore, not explicitly considered in this analysis.
\end{proof}

\subsection{Supplementary Theoretical Results}
\subsubsection{Strongly Convex ADMM Client Update Problem}\label{sec:admm_sc}
In the main body of the paper we presented the optimization problem to be solved locally by each client during each round of our proposed ADMM algorithm \ref{alg:admm}. Below, we present the strongly convex version of this problem $J_g^{\text{SC}}(\boldsymbol{w},\boldsymbol{\mu}_g)$, which theoretically guarantees the convergence of the algorithm. This is the version utilized by the ADMM-SC algorithm. Please note that the proof for this formulation is exactly the same as that of Proposition \ref{prop:admm_client_up}.
\begin{equation*}
    J_g^{\text{SC}}(\boldsymbol{w},\boldsymbol{\mu}_g) \coloneqq \left \{ \begin{aligned}
        & \min_{\boldsymbol{w}_g,\lambda_g,s_{n_g}}&& \lambda_g \varepsilon_g + \frac{1}{N_g} \sum_{n_g=1}^{N_g}s_{n_g}+\frac{\rho}{2} || \boldsymbol{w}_g - \boldsymbol{w} + \boldsymbol{\mu}_g||_2^2 + \tau_g||\boldsymbol{w}_g||_2^2&&\\
        & \subjectto && \ell_H(\boldsymbol{w}_g;(\widehat{\boldsymbol{x}}_{n_g},\widehat{y}_{n_g})) \leq s_{n_g} && \forall n_g \in [N_g] \\
        &&&\ell_H(\boldsymbol{w}_g;(\widehat{\boldsymbol{x}}_{n_g},-\widehat{y}_{n_g})) - \kappa \lambda_g \leq s_{n_g} && \forall n_g \in [N_g] \\
        &&& \lambda \geq ||\boldsymbol{w}_g||_{\ast},&&
    \end{aligned} \right.
\end{equation*}
 where $||\cdot||_{\ast}$ is the dual of the norm used in \ref{eq:cost_func}.
\section{Further Experimental Details and Supplementary Results}\label{app:exp}
In this section we provide all the details of all the experiments presented in this paper, as well as the results of a \textbf{scalability experiment}. Please note that the all the code and instructions associated with all the experiments are available at \href{https://github.com/mibrahim41/FDR-SVM}{this link}.
\subsection{Software and Hardware Details}
All the experiments presented in this work were executed on Intel Xeon Gold 6226 CPUs @ 2.7 GHz (using 4 cores) with 120 Gb of DDR4-2993 MHz DRAM. Table \ref{tab:software} provides more detail on all the software used in the paper.

\begin{table}[ht]
\centering
\caption{Details on All the Software Used in the Numerical Experiments.}
\label{tab:software}
\begin{tabular}{lll}
\toprule
Software           & Version & License             \\
\midrule
Gurobi       & 10.0.1  & Academic license    \\
MATLAB       & R2021B  & Academic license    \\
Python     & 3.10.9  & Open source license \\
Scikit-Learn & 1.2.1   & Open source license \\
Numpy      & 1.23.5  & Open source license \\
Scipy       & 1.10.0  & Open source license \\
UCIMLRepo     & 0.0.3   & Open source license\\
\bottomrule
\end{tabular}
\end{table}

\subsection{Datasets Utilized}
\subsubsection{UCI Data Experiment}
We provide details on the datasets used in the experiment described in Section \ref{sec:real}. Note that Parkinson's exhibited very high levels of class imbalance ($75\%$ from one class and $25\%$ from the other), which suggests that the SM algorithm is more successful with data that exhibits such levels of imbalance. Moreover, note that the "Very Low" and "Low" classes in the UKM dataset were combined into one class, whereas "Middle" and "High" were combined into another.

\begin{table}[ht]
\centering
\caption{Details on Datasets Utilized for UCI Experiments.}
\label{tab:data_real}
\begin{tabular}{lll}
\toprule
Dataset                              & Abbreviation & License   \\
\midrule
Banknote Authentication \citep{banknote}             & Banknote     & CC BY 4.0 \\
Breast Cancer Wisconsin (Diagnostic) \citep{bcw}& BCW          & CC BY 4.0 \\
Connectionist Bench (Sonar) \citep{cb}         & CB           & CC BY 4.0 \\
Mammographic Mass    \citep{mm}                & MM           & CC BY 4.0 \\
Parkinson's     \citep{parkinsons}                     & Parkinson's  & CC BY 4.0 \\
Rice (Cammeo and Osmancik)   \citep{rice}        & Rice         & CC BY 4.0 \\
User Knowledge Modeling   \citep{ukm}           & UKM          & CC BY 4.0\\
\bottomrule
\end{tabular}
\end{table}

\subsubsection{Industrial Data Experiment}
The data used in the experiment described in Section \ref{sec:sens} is a simulation dataset that uses a physics-driven Simulink model to simulate the healthy and faulty operation of a reciprocating pump \citep{Mathworks}. The generated simulation data belongs to two classes: healthy pump and leak fault. We focus on the binary classification problem since binary classification models can directly extend to multiclass problems via a one-vs-all framework as mentioned previously. Therefore, performance in the binary setting is indicative of that in the multiclass setting. However, data is generated to simulate different severities of the leak fault, where each client has a different severity to simulate data heterogeneity across clients. Note that leak fault severity is controlled via a \texttt{leak\_area\_set\_factor} variable in the MATLAB script. The four values used in our experiments are $[1e-3,4e-3,7e-3,1e-2]$. Features extracted from the generated time series data (such as kurtosis and skewness) are used for classification. 

\subsection{Hyperparameter Details}
In all of our implementations of the SM algorithm we utilize a step-size that diminishes according to $\gamma(t) = \frac{\gamma}{t}$, where we treat $\gamma$ as a model hyperparameter. This step-size obeys the conditions required for algorithm convergence stated in Theorem \ref{thm:sm_conv}. Next, we provide details on the hyperparameter values used in the UCI Data Experiment and the Industrial Data Experiment in Sections \ref{sec:real} and \ref{sec:sens}, respectively.

\textbf{UCI Data Experiment.} For the centralized baseline, we tune $\varepsilon \in \{ 1 \times 10^b \}_{b=-5}^{-1}$ and $\kappa \in \{ 0.1,0.25,0.5,0.75,1\}$. For the federated baselines we use diminishing step-size of $\gamma(t) = \frac{\gamma(0)}{t}$, where $\gamma(0)$ is treated as a tuning hyperparameter and takes values $\gamma(t) \in \{1e-3,1e-2,1e-1,1e0\}$, and a local regularization penalty of $\frac{1}{10N_g}$ at each client. For \texttt{FedAvg} and \texttt{FedProx}, we utilize a local batch size of $20\%$ of the available training data, and $E=5$ local SGD epochs where appropriate. We also use a $\mu = 1$ for \texttt{FedProx}. For our proposed methods we fix $\kappa_g = 1$ and $\varepsilon_g = \frac{1}{10N_g}$, and we tune $\rho \in \{1e-3,1e-2,1e-1,1e0 \}$ and $\gamma \in \{1e0,1e1,1e2,1e3 \}$. Finally, for all federated methods (including baselines and ours) we use $G=4$ with equal client weights. Finally, for ADMM, ADMM-SC and federated baselines, we tune $T \in \{5,10,20,60,100,140,180,220\}$, whereas for SM we tune $T \in \{100,140,180,220\}$. All tuning is done via 5-fold cross-validation.

\textbf{Industrial Data Experiment - Sensitivity Analysis.} In the global hyperparameters experiment we evaluate the performance of our proposed federated algorithms for $T \in \{5,10,20,60,100,140,180,220\}$ and $\rho,\gamma \in \{1e-3,1e-2,1e-1,1e0,1e1,1e2,1e3 \}$. We fix $\varepsilon_g = \frac{1}{10N_g}$ and $\kappa_g = 0.5$ for each client $g$. While such values of $\varepsilon_g$ and $\kappa_g$ may not be optimal, we use them to demonstrate that our proposed model can perform well when compared to the central baseline.

In the local hyperparameters testing, We evaluate the performance of both the our federated algorithms for $\varepsilon_g = \frac{1}{\beta N_g}$ where $\beta \in \{ 0.1,1,10,100 \}$ and for $\kappa_g = \kappa \in \{ 0.1,0.25,0.5,0.75,1 \}$. We fix $T = 220$ and $\gamma = 1\times 10^2$ and $T = 100$ and $\rho = 1\times 10^{-3}$ for the SM and ADMM algorithms, respectively. In all settings we evaluate the performance of the baseline central model for $\varepsilon \in \{ 1 \times 10^b \}_{b=-5}^{-1}$ and $\kappa \in \{ 0.1,0.25,0.5,0.75,1\}$, and we only report the peak performance achieved. 

In all settings we utilize $\tau_g=18\rho$ for the ADMM-SC algorithm, which is the minimum value $\tau_g$ can take while maintaining guaranteed convergence as shown in Theorem \ref{thm:admm_conv_cri}. We do this as increasing $\tau$ increases the strength of the redundant regularization, thereby impacting the performance.

\textbf{Industrial Data Experiment - Benchmarking.} In this portion we utilize 5-fold cross-validation to tune the same hyperparameters discussed in the previous paragraph. Namely, we fix $T=220$, and we tune $\rho \in \{ 1e-3,1e-2,1e-1 \}$ or $\gamma \in \{ 1e1, 1e2, 1e3 \}$, $\kappa \in \{ 0.1, 0.5, 1 \}$, and $\beta \in \{ 10, 100\}$ for all our methods. Tuning is done via 5-fold cross-validation.

\textbf{Model Parameter Initialization.} In all of our experiments, we use initial model parameters $\boldsymbol{w}^{(0)} = \boldsymbol{0}$ (i.e., a vector of zeros), and initial scaled Lagrange multipliers $\boldsymbol{\mu}_g^{(0)} = \boldsymbol{1}$ (i.e., a vector of ones).

\subsection{UCI Data Experiment Statistical Significance} \label{app:stat_sig}
In order to evaluate the statistical significance of the results presented in Table \ref{tab:real_world}, we perform a one-sided Wilcoxon signed-rank test. The test compares the performance of the best performing version of our model to that attained by each of the benchmarks in a pairwise fashion. The null $H_0$ and alternative $H_1$ hypotheses of this test are defined next.
\begin{itemize}
    \item $H_0$: The distribution of the differences in performance between our model and each benchmark has median zero. That is, there is no systematic increase or decrease between the pairs.
    \item $H_1$:  The median of the differences is greater than $0$. That is, our approach is statistically better.
\end{itemize}

The results of this test are presented in Table \ref{tab:wilc}, utilizing a significance level of $\alpha = 0.05$. The table indicates whether the null hypothesis $H_0$ is rejected or not. We observe from the table that the performance improvement offered by our model algorithm is indeed statistically significant for most datasets and most benchmark models. This is because we "Reject" the null hypothesis $H_0$ in most settings. This underscores the practical impact and performance improvements offered by our proposed model.

\begin{table*}[ht]
\centering
\caption{Results of One-Sided Wilcoxon Signed-Rank Test Performed on Results of Benchmarking Experiments on 7 UCI Datasets.}
\label{tab:wilc}
\smaller
\begin{tabular}{lccccccc}
\toprule
Model    & \multicolumn{1}{c}{Banknote} & \multicolumn{1}{c}{BCW} & \multicolumn{1}{c}{CB} & \multicolumn{1}{c}{MM} & \multicolumn{1}{c}{Parkinson's} & \multicolumn{1}{c}{Rice} & \multicolumn{1}{c}{UKM} \\ \midrule
FedSGD ($\ell_2$-SVM)  &Fail to reject&Reject&Reject&Reject&Reject&Reject& Reject\\
FedAvg ($\ell_2$-SVM) &Fail to reject&Reject&Fail to reject&Reject&Reject&Fail to reject& Fail to reject \\
FedProx ($\ell_2$-SVM) &fail to reject&Reject&Fail to reject&Reject&Reject&Reject& Fail to reject \\ 
FedDRO (KL) &Reject& Reject & Reject &Reject&Reject& Reject&Reject \\
\bottomrule
\end{tabular}
\end{table*}
\subsection{Scalability Experiment}\label{sec:scal}
The purpose of this experiment is to examine the scalability of our proposed algorithms as the total number of samples $N$, the number of clients $G$, and the number of features $P$ grow. We measure performance in runtime required to achieve peak mCCR. We do this since the subgradient method lacks a practically implementable stopping criterion \citep{Bagirov2014}, and similarly, no stopping criterion is provided for multi-block ADMM by \cite{lin2015}. Moreover, it was already established in the experiment in Section \ref{sec:sens} that the SM algorithm requires more rounds of communication to attain peak performance. This experiment is more focused on computational effort required to achieve this performance. We examine the the following settings: 
\begin{enumerate}
    \item \textit{Increasing clients [fixed training samples]}: $N = 1000$, $P = 4$, $G \in \{10,20,30,40,50\}$.
    \item \textit{Increasing clients [increasing training samples]}: $N = 100G$, $P = 4$, $G \in \{10,20,30,40,50\}$.
    \item \textit{Increasing training samples}: $G = 10$, $P = 4$, $N \in \{1000,1500,2000,2500,3000\}$.
    \item \textit{Increasing features}: $N = 4$, $G = 10$, $P \in \{4,6,8,10,12\}$.
\end{enumerate}

\textbf{Dataset.} This experiment uses simulation data that is generated using the \texttt{make\_classification} module of the Scikit-Learn Python package \citep{scikit-learn}. The data generated belongs to two classes, each of which contains data sampled from a standard Gaussian distribution with means located at vertices of a $P$-dimensional hypercube with sides of length $2.4$ centered at the origin. The data is distributed equally across all clients and both classes, and no labels are altered.

\textbf{Baseline.} We utilize the centralized DR-SVM by \cite{2019regularization} as a baseline in this experiment.

\textbf{Hyperparameters.} For the SM algorithm, we test performance for $T \in \{ 140,180,220 \}$ and $\gamma \in \{ 1e1,1e2,1e3 \}$. For the ADMM and ADMM-SC algorithms, we test performance for $T \in \{ 10,20,30 \}$ and $\rho \in \{ 1e-3,1e-2,1e-1\}$. Across all algorithms, we fix $\varepsilon_g = \frac{1}{10N_g}$ and $\kappa = 0.25$. The central model's hyperparameters are varied in the same way as in the Sensitivity Analysis portoin of the experiment in Section \ref{sec:sens}, and the runtime that is reported reflects the time taken to solve the optimization problem. 

\textbf{Results.} The results of this study are reported in Figure \ref{fig:scal}. We observe a rough trend of increasing runtime as $N$ and $P$ increase for all models due to the increasing complexity of the local client problems. However, the trend is clearer with the SM algorithm, whereas it is noisy with all versions of the ADMM algorithm, and is hardly observable with the central model. This could be attributed to the fact that the SM algorithm requires a much longer time to reach peak mCCR, making the effect of random computer system variations minimal on the reported time. On the contrary, all versions of the ADMM and the central model reach peak mCCR in a very short time, making the reported time highly susceptible to system variations. These results highlight the fact that any performance gains achieved by using SM come at the cost of a much longer runtime. However, the runtime of ADMM and ADMM-SC is much closer to that of the central approach. Additionally, we observe that the runtime remains roughly constant for all federated algorithms as $G$ increases if $N$ is fixed. This is because a fixed $N$ makes the local problem at each client increasingly simpler and faster to solve as $G$ increases. In contrast, when both $G$ and $N$ are increasing we observe that all algorithms exhibit a trend of increasing runtime with the number of clients. 

\begin{figure*}[ht]
    \centering
    \includegraphics[width=1\textwidth]{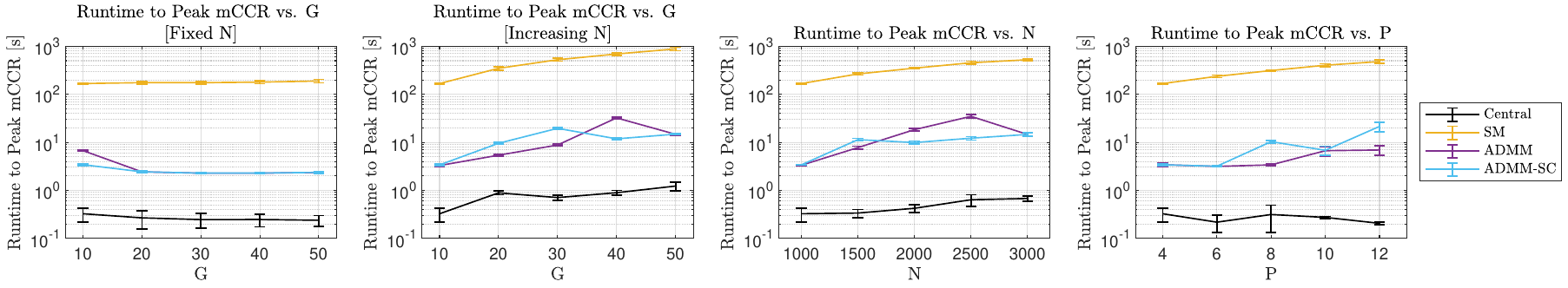}
    \caption{Plots of Runtime to Reach Peak mCCR vs. the Number of Clients $G$ with Fixed and Increasing $N$, the Number of Features $P$, and the Number of Training Samples $N$ for All Methods Tested.}
    \label{fig:scal}
\end{figure*}

\end{document}